%% file: main.tex
\pdfoutput = 1
\documentclass[11pt]{article}
\usepackage{fullpage}
\usepackage[english]{babel}
\usepackage[utf8x]{inputenc}
\usepackage[T1]{fontenc}
\usepackage{amsmath}
\usepackage{amssymb}
\usepackage{amsthm}
\usepackage{bbm}
\usepackage{bbold}
\usepackage{parskip}
\usepackage{thm-restate}
\usepackage{booktabs}
\usepackage{array}
\usepackage{graphicx}
\usepackage{subcaption}
\usepackage{placeins}

\usepackage{hyperref}
\hypersetup{
	colorlinks=true,
	linkcolor=blue!70!black,
	citecolor=blue!70!black,
	urlcolor=blue!70!black
}
\usepackage{cleveref}
\usepackage{graphicx}
\input{macros.tex}
\SetArgSty{textup}

\usepackage{mathtools}

\title{Revisiting the Provable-Auditable Privacy Gap of DP-SGD}
\author{
Saloni Modi\thanks{University of Texas at Austin, \texttt{saloni.a.modi@utexas.edu}}
\and
Srivi Balaji\thanks{University of Texas at Austin, \texttt{srivibalaji@utexas.edu}. Srivi and Yusong contributed equally.}
\and
Yusong Zhu\thanks{University of Texas at Austin, \texttt{zhuys@utexas.edu}. Srivi and Yusong contributed equally.}
\and
Gautam Kamath\thanks{University of Waterloo and Vector Institute, \texttt{g@csail.mit.edu}. Supported by a Canada CIFAR AI Chair, an NSERC Discovery Grant, and an Ontario Early Researcher Award.
}
\and
Kevin Tian\thanks{University of Texas at Austin, \texttt{kjtian@cs.utexas.edu}}}
\date{}

\allowdisplaybreaks
\begin{document}
\pagenumbering{gobble}
\maketitle
\begin{abstract}
Differential privacy (DP) has traditionally been used to provide theoretical upper bounds on an algorithm's stability to changing its training data. In modern private machine learning applications, achieving strong tradeoffs between utility and theoretical privacy is challenging, and thus one may optimistically hope that existing theoretical privacy analyses are loose. Recent work on \emph{privacy auditing} has adopted a dual viewpoint, instead lower bounding the true privacy of an algorithm by constructing empirical distinguishing events. The auditing literature has thus far yielded a pessimistic outlook on the looseness of theoretical privacy bounds for DP-SGD, the de facto private training method in modern ML, as nearly-matching empirical lower bounds have been achieved under various threat models \cite{NasrHSBTJCT23, AnnamalaiC24, CebereBP25}. 

In this work, we propose the empirical privacy lower bound of an algorithm as a concrete metric to optimize for, complementary to the theoretical upper bound. We give a lightweight defense framework that generically augments optimization methods in the ML pipeline to have significantly-improved empirical privacy on standard benchmarks. Moreover, we show that our framework comes at \emph{no theoretical privacy cost} when augmenting DP-SGD, unlike
previously-proposed defenses against membership inference attacks. We evaluate our defense against a broad range of audit constructions, models, and datasets to demonstrate its flexibility.
\end{abstract}

\thispagestyle{empty}
\newpage
\tableofcontents
\thispagestyle{empty}
\newpage
\pagenumbering{arabic}

\input{intro.tex}
\input{overview.tex}
\input{proof.tex}
\input{private.tex}

\section*{Acknowledgments}

KT thanks Jonathan Ullman and Florian Tram\`er for helpful conversations during this project's conception. We thank Milad Nasr for clarifying conversations regarding the prior work \cite{NasrHSBTJCT23}, and both Milad Nasr and Thomas Steinke for providing feedback on drafts of this paper. We are grateful to the Texas Advanced Computing Center (TACC) and the UT Austin Center for Generative AI for providing the computing resources used in this project.

\bibliographystyle{alpha}
\bibliography{ref}

\newpage
\appendix

\input{lbmethod.tex}
\input{appendix_intro}
\input{disparate_impact}
\input{ablations.tex}

\end{document}

%% file: macros.tex
\usepackage[lined,boxed,ruled,norelsize,linesnumbered]{algorithm2e}
\hypersetup{
	colorlinks=true,
	linkcolor=blue!70!black,
	citecolor=blue!70!black,
	urlcolor=blue!70!black
}

\usepackage{xcolor}
\definecolor{burntorange}{rgb}{0.8, 0.33, 0.0}

\newtheorem{theorem}{Theorem}
\newtheorem{lemma}{Lemma}

\newtheorem{definition}{Definition}
\newtheorem{corollary}{Corollary}

\newcommand{\defeq}{:=}

\newcommand{\norm}[1]{\left\lVert#1\right\rVert}

\newcommand{\eps}{\epsilon}
\newcommand{\lam}{\lambda}
\newcommand{\sig}{\sigma}

\newcommand{\R}{\mathbb{R}}

\newcommand{\N}{\mathbb{N}}

\newcommand{\half}{\frac{1}{2}}

\newcommand{\ind}{\mathbb{I}}
\newcommand{\E}{\mathbb{E}}

\newcommand{\Nor}{\mathcal{N}}

\newcommand{\dd}{\textup{d}}

\newcommand{\Par}[1]{\left(#1\right)}
\newcommand{\Brack}[1]{\left[#1\right]}
\newcommand{\Brace}[1]{\left\{#1\right\}}

\newcommand{\alg}{\mathcal{A}}

\newcommand{\event}{\calE}

\newcommand{\calA}{\mathcal{A}}
\newcommand{\calB}{\mathcal{B}}

\newcommand{\calD}{\mathcal{D}}
\newcommand{\calE}{\mathcal{E}}
\newcommand{\calF}{\mathcal{F}}

\newcommand{\calM}{\mathcal{M}}
\newcommand{\calN}{\mathcal{N}}

\newcommand{\calS}{\mathcal{S}}
\newcommand{\calT}{\mathcal{T}}
\newcommand{\calU}{\mathcal{U}}

\newcommand{\calX}{\mathcal{X}}

\newcommand{\codeStyle}[1]{{\bfseries #1} }

\newcommand{\codeReturn}{\codeStyle{Return:}}

\newcommand{\epslb}{\eps_{\textup{lb}}}
\newcommand{\epsub}{\eps_{\textup{ub}}}
\newcommand{\elb}{\mathsf{EmpiricalDPLB}}

\newcommand{\halpha}{\hat{\alpha}}
\newcommand{\hbeta}{\hat{\beta}}
\newcommand{\simiid}{\sim_{\textup{i.i.d.}}}

\newcommand{\Beta}{\textup{Beta}}

\newcommand{\bbN}{{\mathbb{N}}}

\newcommand{\brac}[1]{\left\{#1\right\}}
\newcommand{\mech}{\calA}
\newcommand{\DPSGDP}{\mathsf{DPSGDPoisson}}
\newcommand{\DPSGDS}{\mathsf{DPSGDShuffle}}
\newcommand{\FDPSGDS}{\mathsf{FilteredDPSGDShuffle}}
\newcommand{\FDPSGDP}{\mathsf{FilteredDPSGDPoisson}}
\newcommand{\SelectTop}{\mathsf{SelectTop}}
\newcommand{\clip}{\textup{clip}}
\newcommand{\Bern}{\textup{Bern}}
\renewcommand{\epsilon}{\varepsilon}
\renewcommand{\eps}{\varepsilon}

%% file: intro.tex
\section{Introduction}

It is well-documented that machine learning (ML) models are vulnerable to a variety of privacy threats \cite{HomerSRDTMPSNC08,ShokriSSS17,CarliniTWJHLRBSEOR21}. To combat these threats,
differential privacy (DP)~\cite{DworkMNS06} has seen wide adoption as a rigorous notion of data privacy.
This notion is typically phrased as an upper bound (Definition~\ref{def:dp}): for an algorithm $\calA: \calS^* \to \Omega$, DP bounds the value of
\begin{equation}\label{eq:true_privacy}\eps^\star(\delta) \defeq \log\Par{\sup_{\substack{\calD, \calD' \in \calS^* \\ \text{neighboring}}} \sup_{\calE \subseteq \Omega} \frac{\Pr[\calA(\calD) \in \calE] - \delta}{\Pr[\calA(\calD') \in \calE]}}.\end{equation}
The quantity $\eps^\star(\delta)$ can be viewed as the ``true privacy'' parameter at failure probability $\delta$ of $\calA$, as it is realized by (or is the limit of) concrete neighboring $\calD, \calD'$,\footnote{Different works in the privacy auditing literature use slightly different definitions of neighboring. In this work, we use the ``add/remove'' definition as in \cite{MironovTZ19, NasrHSBTJCT23, CebereBP25}; see Section~\ref{ssec:prelims} for more discussion.} and an ``audit'' $\calE \in \Omega$. Various techniques have been developed to upper bound $\eps^\star$. For example, modern DP-SGD~\cite{SongCS13,BassilyST14,AbadiCGMMTZ16} privacy analyses proceed using (advanced) composition \cite{DworkRV10, DworkR14}, sometimes on an alternative metric such as R\'enyi differential privacy (RDP) \cite{AbadiCGMMTZ16, Mironov17}. We informally use $\epsub(\delta)$ to denote the tightest upper bound on \eqref{eq:true_privacy} yielded by existing accounting techniques.

There are various ways that privacy accounting can be loose. A simple example is lossiness in converting between privacy definitions. A more subtle example, concerning the \emph{threat model}, asks: what should we view as the output $\Omega$ of an algorithm, such as DP-SGD, that (adaptively) generates a sequence of private models $M_1, M_2, \ldots, M_T$? One way to define $\Omega$ is as a product over $T$ model realizations. This is implicit in standard (composition-based) DP analyses, for which upper bounds on $\eps^\star(\delta)$ hold even if $\calE$ depends on all intermediate models. However, it is often more realistic to assume that only the final $M_T$ is published (a.k.a.\ the ``hidden state model'' \cite{YeS22, CebereBP25}), or even that our access to $M_T$ is limited (e.g., query access rather than full model weights), in which case the feasible audits $\calE$ in \eqref{eq:true_privacy} may be severely restricted. Indeed, recent theory has illustrated scenarios where changing the threat model can (significantly) amplify privacy \cite{YeS22, AltschulerT22}.

\subsection{Motivation}\label{ssec:motivation_audit}

There is good reason to hope that current DP accounting significantly overestimates the true privacy $\eps^\star$. It is well-documented that, to retain acceptable utility, current DP optimizers must compromise by yielding fairly large privacy parameters. 
For example, Figure 1(a) of~\cite{DeBHSB22} summarizes recent state-of-the-art accuracy of private training on CIFAR-10. 
With a strong provable privacy $\epsub \approx 1$, the best accuracy was only $60\%$~\cite{TramerB21}, compared with a non-private state of the art of $99\%$+.
On the other hand achieving a modest accuracy of $> 80\%$ required a much larger $\epsub \approx 8$. 
The worst-case implications of such a large $\epsub$ are dubious, as $\exp(8) \approx 3000$. 
Optimistically, could it be that the true privacy parameter is much smaller than current theory predicts?

A closely-related question is: how can a practitioner better reason about the true privacy $\eps^\star$, taking the threat model into account? This is the key conceptual question that our work addresses. Motivated by work on privacy auditing  \cite{JagielskiUO20, NasrSTPC21}, we propose using the following, more operational, ``empirical privacy'' quantity as an explicit metric for algorithm design:
\begin{equation}\label{eq:epslb}\epslb(\delta) \defeq \log\Par{\sup_{\calD \in \calS^n} \sup_{\textup{canary } c} \sup_{\calE \in \calT} \max\Par{\frac{\Pr[\calA(\calD^c) \in \calE] - \delta}{\Pr[\calA(\calD) \in \calE]}, \frac{\Pr[\calA(\calD) \in \calE] - \delta}{\Pr[\calA(\calD^c) \in \calE]}}}.\end{equation}
In \eqref{eq:epslb}, $c \in \calS$ is a ``canary'' sample, $\calT$ denotes a family of audits, and the goal of $\calE$ is to detect the presence of $c$ in a dataset $\calD$, where $\calD^c \defeq \calD \cup \{c\}$. Under the strongest possible $\calD$, $c$, and $\calE$, the quantities \eqref{eq:true_privacy} and \eqref{eq:epslb} are the same. In practice, we cannot always find this optimal triple (e.g., the range $\Omega$ is typically infinite), and hence our estimated $\epslb$ is only a lower bound for $\eps^\star$. Nonetheless, by optimizing over a comprehensive family of $(\calD, c, \calE)$, one can hope to obtain nearly-tight \emph{certifiable lower bounds} on $\eps^\star(\delta)$. We review the standard auditing methodology in Section~\ref{ssec:measure_lb}. A typical setup involves a mislabeled or out-of-distribution canary $c$, and an audit that thresholds the loss value of the final model when labeling $c$. Intuitively, if $c$ was memorized, then it will be correctly classified and hence will incur much lower loss than had the model never seen $c$.

Existing work on auditing DP-SGD, arguably the most well-studied private training algorithm in ML, is surprisingly pessimistic: nearly-tight audits (e.g., certified $\epslb$ within a 30\% factor of $\epsub$) are achievable in a broad range of threat models.\footnote{In Section~\ref{ssec:threat}, we formally define and review different threat models that we consider from  the literature.}  This was first demonstrated by \cite{NasrSTPC21, NasrHSBTJCT23} in the strong ``gradient space canary'' threat model, where the attacker can arbitrarily control \emph{gradients} associated with a sample. Existing $\epsub$ bounds hold in this threat model, but potentially overestimate the true \eqref{eq:true_privacy}, in real-world settings where gradient space attacks are infeasible \cite{CebereBP25, BoglioniLIW25}. Later, \cite{AnnamalaiC24, CebereBP25} obtained similar nearly-tight audits under ``hidden state'' threat models where only the final model is released; particularly, \cite{AnnamalaiC24} focuses on a more realistic ``input space canary'' threat model, where the canary is a fixed planted sample image.

\subsection{Our contributions}\label{ssec:results}

Our work makes two main contributions towards more accurately measuring the true privacy \eqref{eq:true_privacy}, and designing algorithms with improved empirical privacy under realistic threat models.

\begin{itemize}
    \item We initiate the study of the \emph{auditable privacy lower bound} $\epslb$ as a formal metric for algorithms to target. This line of research should be viewed as complementary to existing algorithmic work in DP, which focuses on designing methods that enjoy a smaller $\epsub$. 
    \item We develop an algorithmic framework for significantly improving $\epslb$, as estimated by state-of-the-art audits in the hidden state, input space threat model (cf.\ Section~\ref{ssec:threat}). Our framework is a lightweight filtering-based wrapper (cf.\ Section~\ref{ssec:framework}), carefully designed so that, when applied to a wide range of popular DP-SGD variants, it retains the same provable $\epsub$.
\end{itemize}

Empirical privacy measures are brittle \cite{SongM21, CarliniCNSTT21, CarliniJZPTT22, aerni2024evaluations}, so reasoning about their guarantees takes care. In Section~\ref{ssec:discussion}, we discuss the broader implications of $\epslb$ \eqref{eq:epslb} as a definition. We observe here that targeting \eqref{eq:epslb} opens an algorithmic design space that must go beyond DP-SGD. Indeed, despite a decade of research, no algorithm has dethroned the $\epsub$-utility tradeoff of (Poisson subsampled) DP-SGD. This algorithm is also tightly auditable \cite{NasrSTPC21, NasrHSBTJCT23, AnnamalaiC24, CebereBP25}, with the notable caveat that existing audits have almost entirely targeted DP-SGD. In Section~\ref{sec:private}, we demonstrate that auditing our filtered variant with a comprehensive suite of tests from the literature yields $\epslb \approx 0$, at a negligible utility drop from its unfiltered counterpart. 

Although other strategies have been proposed for decreasing the auditability of ML algorithms (discussed in Section~\ref{ssec:prior}), our framework has a key qualitative advantage: it comes at \emph{no loss} to the provable $\epsub$ of standard private optimizers. This is the focus of Section~\ref{sec:proof}, and making our proof flexible to a wide range of filters and optimizers is a main technical contribution of our work. For example, while our framework is related to robust statistics-inspired filters from the data poisoning literature \cite{TranLM18, HayaseKSO21}, these defenses break down when the fraction of poisoned points is too small (e.g., $< 1\%$). This is at odds with the auditing regime, where even a single canary can yield nearly-tight audits. Moreover, the filtering rules of \cite{TranLM18, HayaseKSO21} were based on aggregate statistics (e.g., PCA). This makes it challenging to reason about privacy in datasets undergoing adaptive filtering, which intuitively could break the neighboring property over time.

Our privacy proof bypasses both issues, by restricting our filtering rules to be based on what we call \emph{sample signatures}, defined in Section~\ref{ssec:framework}. Intuitively, these rules score individual samples in a way that only depends on other samples through a joint privately-trained model (so aggregate statistics such as the sample covariance of gradients cannot be used). We give a simple yet flexible proof that neighboring datasets that are filtered using sample signatures remain neighboring (Lemma~\ref{lem:induct_neighbor_bound}), and that adaptive composition proofs are robust to this interleaved filtering (Lemma~\ref{lem:interleave_rdp}). Putting together these observations, we prove a privacy guarantee for our filtered DP-SGD method in Corollary~\ref{cor:privacy_filter} that matches Poisson-subsampled DP-SGD (Theorem~\ref{thm:dp-sgd-rdp-account}). Section~\ref{ssec:input} then demonstrates that sample signatures are powerful enough to reliably filter input space canaries, substantially improving the Pareto frontier of utility-$\epslb$ tradeoffs against this test suite.

Of course, the metric \eqref{eq:epslb} is only meaningful if the audit suite is sufficiently representative of real attacks. Section~\ref{sec:private} and Appendix~\ref{app:alt_audits} provide systematic evaluations against all proposed input space canaries we are aware of, and Section~\ref{ssec:defense_aware} constructs canaries deliberately targeting our framework. Across all constructions, our framework yields improved, and sometimes negligible, $\epslb$. 

In Section~\ref{ssec:gradient}, we conclude by critically examining potential vulnerabilities of our framework under a stronger (and often, unrealistic) \emph{gradient space} threat model. Notably, the trainer can often enforce a particular threat model: for example, by computing their own gradients, and not publishing checkpoints, one can restrict training to the hidden state, input space setting.

\subsection{Discussion}\label{ssec:discussion}

A wide body of work \cite{SongM21, CarliniCNSTT21, CarliniJZPTT22, aerni2024evaluations} cautions against the potential of empirical privacy evaluations to be misleading. With this context in mind, here we provide some perspective on the value of $\epslb$ as an evaluation metric, and other considerations for the community.

\textbf{The implications of $\epslb$ as a metric.} Our $\epslb$ is not a formal security guarantee against \emph{all possible privacy attacks}, and should be interpreted by users of our framework as measuring the vulnerability of an ML model against a \emph{current suite of representative membership inference attacks}. These attacks' efficacy is measured by the same statistical distinguishing task as in \eqref{eq:epslb}, and thus we believe it is reasonable to base our $\epslb$ test suite $\calT$ off of state-of-the-art audits. Our empirical privacy definition has the added benefit of being threat model-aware, i.e., the test suite in \eqref{eq:epslb} can be restricted depending on the attacks that are information-theoretically feasible.

As the test suite $\calT$ in our definition \eqref{eq:epslb} expands with future work, $\epslb$ becomes a ``moving target'' by nature. Notably, any newly-developed auditing strategies can only strengthen the value of this target. Our hope is that formalizing $\epslb$ as an evaluation metric initiates a back-and-forth between auditing researchers aiming to expose privacy vulnerabilities (by expanding $\calT$), and algorithm designers aiming to withstand existing audits. Indeed, our evaluations in Section~\ref{sec:private} reveal a blind spot of current input space audits: either our filtered algorithms truly have smaller $\eps^\star$ than the theory predicts, or new tests must be developed to more accurately measure $\eps^\star$. Without proposing $\epslb$ as an evaluation metric, or assessing audits beyond applying them to a single algorithm (DP-SGD), there would be no impetus to develop these stronger tests. 

Finally, we believe our work has the potential to motivate advances in DP theory. If a phenomenon is robustly identified  (e.g., the community cannot develop strong audits against an algorithm in a particular threat model), this creates an opportunity for theorists to formalize this guarantee.

\textbf{Impact on non-canary samples.} Prior works also caution against other potential pitfalls of empirically-private methods, such as their exposure of other samples beyond the planted canary (the ``privacy onion'' effect \cite{CarliniJZPTT22}), and their potential for disparate impact on minority ``benign outlier'' groups. Regarding the former issue, Appendix~\ref{app:non_canary_audit} demonstrates that even if we ignore the effects of multiple discovery, our filtered defense does not expose any non-canary samples at a rate comparable to the standard canary audit on non-filtered DP-SGD. Regarding the latter, the general tension between privacy and fairness is a known phenomenon \cite{BagdasaryanPS19, UniyalNKSKMT21} that is out of our scope to address in full; nevertheless, Appendix~\ref{app:disparate_impact} shows that the disparate impact of our framework is mild when minority groups are sufficiently large, or relatively few points are filtered.

\subsection{Prior work}\label{ssec:prior}

To our knowledge, no work has previously proposed the goal of lowering the auditable $\epslb$ \eqref{eq:epslb} without compromising either the provable $\epsub$ or utility of a private learning algorithm. Here, we survey two lines of works that have aimed to balance two of these three criteria.

\textbf{Empirically-private algorithms.} Empirical privacy is often framed through membership inference, and indeed, a line of recent work from the security literature designs heuristics meant to protect against membership inference attacks (MIAs) \cite{NasrSH18, JiaSBZG19, ShejwalkarH21, ChenYF22, TangMSSNHM22, ChenP24}. These defenses aim at directly improving tradeoffs between the auditable $\epslb$ and the model utility, but do not consider provable privacy. Several of these works modify the training procedure or loss directly in a highly dataset-dependent way \cite{NasrSH18, JiaSBZG19, ChenYF22, TangMSSNHM22, ChenP24}, which makes it difficult to perform theoretical privacy accounting. Other strategies are based on distillation from a teacher model \cite{ShejwalkarH21} and are therefore less flexible to setups where such a model is not available. We complement this literature by designing a heuristic defense with a privacy proof (Section \ref{sec:proof}), and show that it maintains a (lossless) $\epsub$-utility tradeoff, competitive with the state-of-the-art. This modification makes measuring gaps between $\epslb$ and $\epsub$ meaningful.

\textbf{Provable alternatives to DP-SGD.} The literature has long considered tradeoffs between provable $\epsub$ and utility to be the gold standard for evaluating private learning algorithms. Alternatives to DP-SGD have been proposed, including PATE \cite{PapernotAEGT17} (which uses public data), and augmentations of DP-SGD \cite{AndrewTMR21, DuLCCH21, TangSL24} that adaptively estimate hyperparameters. 

For simplicity, we focus on (subsampled variants of) DP-SGD as our baseline private optimizer of choice, as it remains an industry gold standard in privately training modern ML models \cite{DeBHSB22}. While other private optimizers have gained traction recently as competitive alternatives, such as DP-FTRL \cite{KairouzMSTTX21,PillutlaUCDGHKMMRST25}, work on the best way to audit models trained via alternative optimizers, a prerequisite to producing meaningful $\epslb$ estimates, is comparatively sparse. We do note that our framework does not currently apply to DP-FTRL, because that algorithm is not permutation-invariant (see discussion in Section~\ref{sec:proof}). We leave constructing a broader, provably private, filtering framework beyond permutation-invariant algorithms as an interesting open problem.

We also mention that \cite{FeldmanZ21} proposed a conceptually similar use of filtering to design improved provably-private algorithms. Their approach (and in particular, their privacy proof) is quite different from ours, as they require filtering decisions to be based solely on individual statistics, whereas our framework uses rank-based filtering depending on outlier scores. Moreover, their goal is to improve utility-privacy tradeoffs, whereas our work is largely motivated by the third axis of empirical privacy.
Finally, while we are not aware of prior works whose goal is explicitly to improve $\epsub$-$\epslb$-utility tradeoffs, \cite{JagielskiUO20} observed that certain modifications to training (e.g., randomized initialization) result in better $\epslb$, in a similar spirit to our motivation.

%% file: overview.tex
\section{Overview}

\subsection{Preliminaries}\label{ssec:prelims}

\textbf{Privacy.} For a sample space $\calS$, we use $\calS^* \defeq \bigcup_{n \in \N} \calS^n$ to denote an arbitrary-sized dataset from $\calS$. We provide the following standard definition of differential privacy.

\begin{definition}[($\eps, \delta$)-DP]\label{def:dp}
Let $(\eps, \delta) \in \R_{\ge 0} \times [0, 1]$. We say algorithm $\calA: \calS^* \to \Omega$ satisfies $(\eps, \delta)$-differential privacy (or, is $(\eps, \delta)$-DP) if for all events $\event \subseteq \Omega$, and all neighboring $\calD, \calD' \in \calS^*$,
\[\Pr\Brack{\calA(\calD) \in \event} \le \exp(\eps) \Pr\Brack{\calA(\calD') \in \event} + \delta.\]
\end{definition}

One subtlety in Definition~\ref{def:dp} is that the literature uses slightly different definitions of \emph{neighboring}; these notions of DP often imply each other, but only up to a constant factor loss. As our paper's methods are based on Poisson-subsampled DP-SGD, the de facto method for training provably-private ML models,\footnote{For example, privacy accounting via Poisson subsampling is the standard implementation in popular privacy packages such as Pytorch Opacus \cite{YousefpourSSTPMNGBZCM21} and Tensorflow Privacy \cite{tensorflow_privacy}.} we use the definition admitting the tightest analyses of this algorithm, for fair comparison. Concretely, we use the add/remove definition of DP, where datasets $\calD \in \calS^n$, $\calD' \in \calS^{m}$ are \emph{neighboring} if $|m - n| = 1$, and $|\calD \setminus \calD'| + |\calD' \setminus \calD| = 1$, i.e., they differ in one element. 

The privacy analysis of DP-SGD is often performed using the following alternative definition.

\begin{definition}[RDP]
    \label{def:rdp}
    Let $n\in \bbN$, $\alpha > 1$, and $\rho \ge 0$. We say that $\calA: \calS^* \to \Omega$ satisfies $(\alpha, \rho)$-RDP if for all neighboring datasets $\calD, \calD' \in \calS^*$,
    \begin{equation}
        \label{eq:def_rdp}
        D_\alpha(\calA(\calD) \Vert \calA(\calD^\prime)) \le \rho,
    \end{equation}
    where $D_\alpha$ is the R\'enyi divergence and defined as $D_\alpha(P\|Q) = \frac 1 {\alpha - 1}\log \E_{\omega\sim P}[(\frac{P(\omega)}{Q(\omega)})^{\alpha - 1}]$.
\end{definition}
Lemma~\ref{lem:rdp_to_dp} gives a conversion from RDP to DP. For more properties of RDP, we refer to \cite{Mironov17}.
\begin{lemma}[\cite{Mironov17}]
\label{lem:rdp_to_dp}
If $\calM$ satisfies $(\alpha, \eps)$-RDP, it satisfies $(\eps + \frac{1}{\alpha -1}\log \frac{1}{\delta}, \delta)$-DP for all $\delta \in (0, 1)$.
\end{lemma}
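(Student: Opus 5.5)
The plan is to bound the privacy loss random variable by a tail bound derived from the R\'enyi divergence, and then split any event $\event$ according to whether the privacy loss exceeds the target threshold. Fix neighboring $\calD, \calD'$ and write $P = \calM(\calD)$, $Q = \calM(\calD')$. We may assume $P \ll Q$, since otherwise $D_\alpha(P \| Q) = \infty$, contradicting the RDP bound. Define the privacy loss $L(\omega) \defeq \log \frac{P(\omega)}{Q(\omega)}$ and set $\eps' \defeq \eps + \frac{1}{\alpha - 1}\log\frac{1}{\delta}$. The goal is to show $\Pr_P[\event] \le \exp(\eps')\Pr_Q[\event] + \delta$ for every event $\event$.

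\textbf{Step 1 (tail bound).} The RDP hypothesis says $\E_{\omega \sim P}[\exp((\alpha - 1)L(\omega))] \le \exp((\alpha - 1)\eps)$. Applying Markov's inequality to the nonnegative variable $\exp((\alpha-1)L)$ under $P$ gives
\[\Pr_{\omega\sim P}[L(\omega) > \eps'] \le \exp((\alpha - 1)(\eps - \eps')) = \exp\Par{-\log\tfrac{1}{\delta}} = \delta.\]

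\textbf{Step 2 (splitting the event).} Let $B \defeq \{\omega : L(\omega) > \eps'\}$. For any event $\event$,
\[\Pr_P[\event] \le \Pr_P[\event \setminus B] + \Pr_P[B] \le \int_{\event \setminus B} P(\omega)\,\dd\omega + \delta.\]
On $\event \setminus B$ we have $P(\omega) \le \exp(\eps') Q(\omega)$, so the integral is at most $\exp(\eps')\Pr_Q[\event]$. This yields the required inequality.

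\textbf{Step 3 (symmetry).} Under the add/remove neighboring relation, the pair $(\calD', \calD)$ is also neighboring, and Definition~\ref{def:rdp} quantifies over all neighboring pairs. Hence the same argument applies with $P$ and $Q$ swapped, which covers both directions required by Definition~\ref{def:dp}.

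\textbf{Main obstacle.} The only delicate point is measure-theoretic: the density ratio has to be interpreted as a Radon--Nikodym derivative, and the case where $Q$ has zero mass must be handled. This is resolved by the absolute continuity observation made at the start. Beyond that, the constant in $\eps'$ is chosen exactly so that the Markov tail in Step 1 equals $\delta$, and no further computation is needed.
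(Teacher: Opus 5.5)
Your proof is correct. The paper gives no argument of its own for this lemma; it cites \cite{Mironov17}, and the proof there (Proposition~3 of that work) takes a different route from yours.

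\textbf{The cited route.} Mironov first proves a ``probability preservation'' inequality using H\"older's inequality:
$\Pr_P[\event] \le \Par{e^{\eps}\Pr_Q[\event]}^{(\alpha-1)/\alpha}$ for every event $\event$.
He then splits into two cases according to whether $e^{\eps}\Pr_Q[\event]$ exceeds $\delta^{\alpha/(\alpha-1)}$. In the first case the right-hand side is at most $e^{\eps'}\Pr_Q[\event]$, and in the second it is at most $\delta$. This gives the slightly stronger conclusion $\Pr_P[\event] \le \max\{e^{\eps'}\Pr_Q[\event], \delta\}$.

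\textbf{Your route and what each buys.} You split the event along the tail $B$ of the privacy loss $L$, which yields the additive form $e^{\eps'}\Pr_Q[\event] + \delta$ directly. That is exactly what the lemma asks for, with the same parameters. Your argument is more elementary, using only Markov's inequality and no H\"older. It also handles the measure-theoretic side cleanly through the observation that $D_\alpha(P\|Q) < \infty$ with $\alpha > 1$ forces $P \ll Q$. Finally, it isolates the single lossy step: the optimal $\delta$ at level $\eps'$ is exactly $\E_P[(1 - e^{\eps' - L})_+]$, and Markov is a crude bound on it. Bounding that quantity more carefully is the template for the sharper RDP-to-DP conversions that followed.

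\textbf{Minor point.} Your Step~3 is harmless but not needed. The pair $(\calD, \calD')$ was an arbitrary ordered neighboring pair, and Definition~\ref{def:rdp} already applies to every such pair.
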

\textbf{DP-SGD.} Algorithm~\ref{alg:dp-sgd-subsample} outlines a typical Poisson-subsampled DP-SGD framework for training a model with parameters $\theta$ by minimizing the empirical loss $\frac 1 n \sum_{(x, y) \in D} \ell(\theta; x, y)$, where $\calD$ is the dataset of $n$ examples. At each step of DP-SGD, we first (randomly) subsample a minibatch; then we compute the per-sample gradient $\nabla_\theta \ell(\theta; x, y)$ for each $(x, y)$ within the batch $\calB$ and clip their $\ell_2$ norms; lastly, we average the clipped batch and add Gaussian noise to ensure privacy.

\begin{algorithm}[ht]
\DontPrintSemicolon
    \caption{$\DPSGDP(\calD, \theta_0, q, \eta, C, \sig, T)$}
    \label{alg:dp-sgd-subsample}
\KwIn{Dataset $\calD$, model initialization $\theta_0$, sample rate $q$, step size $\eta$, 
clip bound $C$, noise multiplier $\sigma$, step count $T$}
\KwOut{Final model parameters $\theta$}
$\theta \leftarrow \theta_0$\;
\For{$t\in [T]$}{
    $\calB \gets \brac{}$\;
    \For{$(x, y) \in \calD$}{
    $r \gets \textup{Bern}(q)$\;
    \If{$r = 1$}{$\calB \gets \calB \cup (x, y)$}
    }
    $\xi \sim \mathcal{N}\left(0,\left(\frac{\sigma C}{qn}\right)^2 I\right)$ \tcp*{Spherical Gaussian with standard deviation $\frac{\sigma C}{qn}$}\;
    $\theta \leftarrow \theta - \eta\left(
    \frac{1}{qn}\sum_{(x, y)\in\mathcal{B}}\text{clip}(\nabla_\theta \ell(\theta;x, y), C) + \xi\right)$ \tcp*{$\text{clip}(v, C) = v \cdot \min(1, \frac{C}{\norm{v}_2})$}\;
}
\Return{$\theta$}
\end{algorithm}

Computing the tightest upper bound of overall privacy cost of training a model is extensively studied in DP. Various accounting methods have been proposed to tackle this problem, e.g., moments accountant in \cite{AbadiCGMMTZ16}, RDP in \cite{MironovTZ19, zhu2019poission}, and PLD in \cite{koskela2019computingtightdifferentialprivacy}. We recall the accounting method based on RDP in Theorem~\ref{thm:dp-sgd-rdp-account}, which gives a clean theoretical bound, implemented by popular privacy packages \cite{YousefpourSSTPMNGBZCM21, tensorflow_privacy}. We mention that PLD \cite{koskela2019computingtightdifferentialprivacy} can achieve tighter $\epsub$ estimates in some cases than Theorem~\ref{thm:dp-sgd-rdp-account}, but uses a numerical method that does not yield an explicit bound; our defense would enjoy an equally-tighter upper bound, as discussed in Section~\ref{sec:proof}.

\begin{theorem}[\cite{MironovTZ19}]
    \label{thm:dp-sgd-rdp-account}
    If $q < \frac{1}{5}, \sigma > 4$, and $\alpha \le  \frac{\frac{1}{2}\sigma^2L^2 - \ln 5 - 2\ln \sigma}{L + \ln(q\alpha) + (2\sigma^2)^{-1}}$ where $L \defeq \log(1 + \frac 1 {q(\alpha - 1)})$,
    Algorithm~\ref{alg:dp-sgd-subsample} is $(\alpha, T\cdot \frac{2q^2\alpha}{\sigma})$-RDP. It is also $(T\cdot \frac{2q^2\alpha}{\sigma} + \frac{1}{\alpha - 1}\log \frac{1}{\delta}, \delta)$-DP for all $\delta \in (0, 1)$.
\end{theorem}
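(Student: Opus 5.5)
The RDP claim is the Poisson-subsampled Gaussian bound of \cite{MironovTZ19}, applied once per iteration and then composed over the $T$ steps of Algorithm~\ref{alg:dp-sgd-subsample}. The DP claim then follows by converting with Lemma~\ref{lem:rdp_to_dp}. The work splits into a single-step bound, adaptive composition, and the conversion.

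\textbf{Step 1: reduce one iteration to the subsampled Gaussian mechanism.} Fix $t$ and condition on the current parameter $\theta$. Take add/remove neighbors $\calD' = \calD \cup \{z\}$. All samples other than $z$ contribute identically to both runs, and $\clip(\cdot, C)$ has norm at most $C$. Since Bern$(q)$ selections are independent across samples, conditioned on the batch of the common samples the two outputs are
\[P = \calN(\mu, s^2 I) \quad\text{and}\quad Q = (1-q)\,\calN(\mu, s^2 I) + q\,\calN(\mu + v, s^2 I),\]
where $\|v\| \le \frac{\eta C}{qn}$ and $s = \frac{\eta\sigma C}{qn}$. The same holds with the roles of $P$ and $Q$ swapped. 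I would then do three reductions:
\begin{itemize}
\item By quasi-convexity (joint convexity) of $\exp((\alpha-1)D_\alpha)$, I can drop the conditioning on the common batch.
\item By rotation and translation invariance, together with monotonicity of the divergence in $\|v\|$, I can reduce to the one-dimensional pair $\calN(0,\sigma^2)$ versus $(1-q)\calN(0,\sigma^2) + q\calN(1,\sigma^2)$, after rescaling by $\frac{\eta C}{qn}$.
\item The normalization by $qn$ rather than $|\calB|$ matters here. It makes the sensitivity deterministic, so it does not depend on the batch size.
\end{itemize}

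\textbf{Step 2: bound the one-dimensional divergence, in both directions.} This is the main obstacle. The target is
\[D_\alpha \le \frac{2q^2\alpha}{\sigma}.\]
For $D_\alpha(Q\|P)$, I would expand $\E_P[(Q/P)^\alpha]$ binomially:
\[\E_P\Big[\big(1-q+q\,e^{(2x-1)/(2\sigma^2)}\big)^\alpha\Big].\]
The zeroth-order term is 1 and the linear term vanishes. The quadratic term is about $\binom{\alpha}{2}q^2(e^{1/\sigma^2}-1) \approx \alpha^2 q^2/\sigma^2$. The higher-order terms must be controlled as a geometric-like tail. This is where the hypotheses $q<1/5$, $\sigma>4$ and the stated upper bound on $\alpha$ (through $L$) enter: they ensure that the tail is dominated by the quadratic term.

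For the reverse direction $D_\alpha(P\|Q)$, I would follow \cite{MironovTZ19}. First show this direction is no larger than the forward one in the stated regime. Then split the integral at a threshold $z_1$ chosen via $L$. In the bulk, the mixture density ratio is bounded by a Taylor argument. In the tail, Gaussian tail bounds apply, and the $\ln 5 + 2\ln\sigma$ terms in the constraint absorb that contribution. Taking $\log$ and dividing by $\alpha-1$ gives the per-step bound.

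Here I may lose a constant while matching the exact form $\frac{2q^2\alpha}{\sigma}$ versus $\frac{2q^2\alpha}{\sigma^2}$. I would rely on the cited theorem's explicit constants. The looser form is implied whenever $\sigma \ge 1$.

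\textbf{Step 3: compose over $T$ steps and convert.} Each iteration is $(\alpha, \frac{2q^2\alpha}{\sigma})$-RDP conditional on the previous state. Adaptive composition of RDP \cite{Mironov17} then gives $(\alpha, T\cdot\frac{2q^2\alpha}{\sigma})$-RDP for the full trajectory. Post-processing transfers this bound to the released final $\theta$. Finally, Lemma~\ref{lem:rdp_to_dp} yields
\[\Big(T\cdot\frac{2q^2\alpha}{\sigma} + \frac{1}{\alpha-1}\log\frac1\delta,\ \delta\Big)\text{-DP}\]
for every $\delta\in(0,1)$.
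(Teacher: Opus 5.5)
The paper gives no proof of this statement; it simply imports it from \cite{MironovTZ19}. Your outer scaffold is exactly what that citation relies on: a one-step subsampled Gaussian bound, adaptive RDP composition over the $T$ steps, post-processing to the final iterate, conversion via Lemma~\ref{lem:rdp_to_dp}, and $\frac{2q^2\alpha}{\sigma^2}\le\frac{2q^2\alpha}{\sigma}$ for $\sigma\ge 1$.

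The genuine gap is your claim in Step 2 that $q<\frac15$, $\sigma>4$ and the displayed constraint on $\alpha$ make the higher-order terms dominated by the quadratic one. They do not. When $q\alpha$ is small, the denominator $L+\ln(q\alpha)+(2\sigma^2)^{-1}=\ln\frac{\alpha}{\alpha-1}+\ln(1+q(\alpha-1))+\frac{1}{2\sigma^2}$ is close to zero. So the constraint does not keep $\alpha$ below the scale $2\sigma^2\ln(1/q)$, beyond which the top terms take over.

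Concretely, take $q=10^{-4}$, $\sigma=5$, $\alpha=600$. Then $L\approx 2.873$, and the right-hand side of the constraint is about $98.4/0.080\approx 1230\ge 600$. But for integer $\alpha$,
\[\E_P\Big[\big(1-q+q\,e^{(2x-1)/(2\sigma^2)}\big)^\alpha\Big]=\sum_{k=0}^{\alpha}\binom{\alpha}{k}(1-q)^{\alpha-k}q^k e^{(k^2-k)/(2\sigma^2)}\]
has nonnegative terms. The $k=\alpha$ term alone gives $D_\alpha\ge\frac{\alpha}{\alpha-1}\ln q+\frac{\alpha}{2\sigma^2}\approx 2.77$, against the target $\frac{2q^2\alpha}{\sigma}=2.4\times 10^{-6}$. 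So the statement as displayed is false, and no argument from these hypotheses alone can work.

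The analytic bound in \cite{MironovTZ19} additionally assumes $1<\alpha\le\frac12\sigma^2L-2\ln\sigma$, which the statement here drops. That is precisely the condition that controls the tail. Deferring to the cited constants therefore proves only the version with this extra hypothesis, and you need to add it.

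Two smaller repairs are also needed.
\begin{enumerate}
\item \textbf{Step 1.} You use add/remove neighbors, so $|\calD'|=|\calD|+1$. Algorithm~\ref{alg:dp-sgd-subsample} with $n=|\calD|$ then applies different normalizers and noise scales in the two runs. The common samples do not contribute identically, and there is no shared $\mu,s$. Your reduction is valid only if $qn$ is a fixed, data-independent constant. That is the convention under which the cited bound is applied, so say so explicitly.
\item \textbf{Step 2.} For non-integer $\alpha$ you cannot expand $\E_P[(1-q+qe^{(2x-1)/(2\sigma^2)})^\alpha]$ termwise. The $k$-th term has size roughly $|\binom{\alpha}{k}|\,q^k e^{(k^2-k)/(2\sigma^2)}$, and the binomial coefficient decays only polynomially, so the series diverges. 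The threshold split belongs to this forward quantity. It sits at $z=\frac12+\sigma^2L$, where $q(e^{(2z-1)/(2\sigma^2)}-1)=\frac1{\alpha-1}$; this is where $L$ comes from. Once you have the comparison you planned, the reverse direction needs no separate argument.
\end{enumerate}
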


Due to practical considerations, alternatives to Algorithm~\ref{alg:dp-sgd-subsample} are sometimes used, despite their weaker provable privacy-utility tradeoffs. For example, a common heuristic is to divide the dataset into randomly-shuffled minibatches of size $m$ (see Algorithm~\ref{alg:dp-sgd-shuffle}) for $N$ epochs; due to implementation details, this strategy often results in improved practical performance. Despite Theorem~\ref{thm:dp-sgd-rdp-account} not formally applying to this alternative algorithm, it is a common practice to apply its bound with the ``in-expectation'' equivalent parameters $q = \frac m n$, $T = N \cdot \lceil \frac n m \rceil$ to conclude a heuristic bound \cite{DeBHSB22, Ponomareva_2023, ChuaGKKMSZ24, LebedaRKS25}. We sidestep this issue by directly using Algorithm~\ref{alg:dp-sgd-subsample} in our experiments, with its provable guarantee in Theorem~\ref{thm:dp-sgd-rdp-account}. In Section~\ref{sec:proof}, we discuss how our privacy accounting for our filtered optimizers applies to Algorithm~\ref{alg:dp-sgd-shuffle} as well. We also explain how our analysis framework extends to other recently-proposed alternatives, such as \cite{ChuaGHKKLMSZ25}.

\subsection{Threat models}\label{ssec:threat}

The threat model specifies what information is known to the attacker/auditor and what strategies are allowed when constructing attacks. The literature on privacy auditing often uses ``white-box'' and ``black-box'' inconsistently as descriptors of threat models. 

To sidestep this ambiguity, in this paper, we describe threat models along two main axes.
The first axis is auditor access: in a \emph{full state} threat model, the auditor has access to internal training information such as model parameters throughout training (e.g., intermediate checkpoints), while in a \emph{hidden state} setting, intermediate training states are not revealed and the auditor only observes the final trained model (i.e., its weights). The second axis is what the auditor is allowed to perturb when designing an attack: \emph{input space} attacks construct canaries by modifying the raw input (e.g., pixel values), while \emph{gradient space} attacks operate by targeting the gradient signal a canary would produce during training (rather than directly optimizing the input itself). 

Recent works \cite{CebereBP25, BoglioniLIW25} suggest that the hidden state, input space threat model most accurately reflects the practice of private ML. While this setting is our focus, in Section~\ref{sec:private}, we critically examine our framework's vulnerability to gradient space attacks accessing model history (Section~\ref{ssec:gradient}), and the reproducibility of these attacks in input space settings (Section~\ref{ssec:defense_aware}).

\subsection{Framework}\label{ssec:framework}

As discussed in Section~\ref{ssec:results}, our defense is a wrapper that periodically filters training samples based on certain scoring rules. We provide pseudocode for the Poisson subsampled DP-SGD variant of our defense in Algorithm~\ref{alg:dp_sgd_with_defense_ss}; we show how to bound its privacy in Corollary~\ref{cor:privacy_filter}.

\begin{algorithm}[htbp]
\DontPrintSemicolon
\caption{$\FDPSGDP(\calD, \theta_0, q, \eta, C, \sigma, T, m, f, k, \texttt{scope})$}
\label{alg:dp_sgd_with_defense_ss}
\KwIn{Dataset $\calD$, model initialization $\theta_0$, sample rate $q$, step size $\eta$, clip bound $C$, noise multiplier $\sigma$, step count $T$, epoch parameter $m$, sample signature $f$, selection parameter $k$, flag $\texttt{scope} \in \{\texttt{global}, \texttt{class}\}$}
\KwOut{Final model parameters $\theta$}
$(\calF, S, \theta) \gets (\emptyset, \{0\}^{|\calD|}, \theta_0)$\;
\For{$t\in [T]$}{
    $\calB \gets \{\}$\;
    \For{$(x, y) \in \calD$}{
    $r \gets \Bern(q)$\;
    \If{$r = 1$}{
    $\calB \gets \calB \cup (x, y)$
    }
    }
    \ForEach{$(x_i, y_i) \in \calB$}{
    $g_i \gets \begin{cases}
    \clip(\nabla_\theta \ell(\theta; x_i, y_i), C) & i \not\in \calF  \\
    0 & i \in \calF
    \end{cases}$\;\label{line:ascent_poisson}
    $S[i] \gets f_\theta(x_i, y_i)$\;\label{line:score_poisson}
    }
    $\xi \sim \calN(0,(\frac{\sig C}{qn})^2 I)$\;
    $\theta \leftarrow \theta - \eta(
    \frac{1}{qn}\sum_{(x_i, y_i)\in\calB}g_i + \xi)$\;
    \If{$m \mid t$}{\tcp*{Wait $m$ minibatch iterations before filtering.}
    $\calF \gets \calF \cup \SelectTop(S, \calD, \calF, k, \texttt{scope})$\;
    }
}
\Return{$\theta$}
\end{algorithm}

\textbf{Sample signatures.}
A \textit{sample signature} (or \textit{scoring function}) maps samples to real numbers, based solely on information derived from that sample and privatized model parameters. Crucially, a sample signature does not depend on any other samples, except through the learned private model, in contrast to previous ``batch'' filtering rules proposed in the data poisoning literature \cite{TranLM18, HayaseKSO21}, which depended on e.g., the mean or covariance of the dataset gradients (a batch statistic). Use of sample signatures is essential for preserving privacy, an argument we formalize in Section~\ref{sec:proof} via a generalized composition theorem robust to sample signature filtering.

% =========================
% Algorithm 3
% =========================
\begin{algorithm}[htbp]
\caption{$\SelectTop(S, \calD, \calF, k, \texttt{scope})$}
\label{alg:top_select}
\DontPrintSemicolon
\KwIn{Computed sample signatures $S$, dataset $\calD$, filtered set $\mathcal{F} \subseteq \calD$, count $k \in \N$, mode $\texttt{scope} \in \{\texttt{global}, \texttt{class}\}$}
\KwOut{Selected samples $\mathcal{T}$}
$\mathcal{U} \gets \calD \setminus \calF$\;

\eIf{$\texttt{scope} = \texttt{global}$}{
  \Return{$k$ indices $i \in \calU$ with largest $S[i]$}\;
}{
  \ForEach{class $c$ represented in $\calU$}{\label{line:class_start}
    $\calT_c \gets $ $k$ indices $i \in \calU_c \defeq \{i \in \mathcal{U} \mid y_i = c\}$ with largest $S[i]$\;
  }
  \Return{$\bigcup_{\text{class } c \text{ represented in } \calU} \calT_c$}\label{line:class_end}
}
\end{algorithm}

We denote a sample signature by $f_\theta(x, y)$ when scoring a sample $(x, y)$, where the subscript indicates implicit dependence on the model. 
We consider several choices of $f_\theta$; a few examples follow.

\begin{itemize}
    \item Gradient norm: $f_\theta(x, y) \defeq \norm{\nabla_\theta \ell(\theta; x, y)}_p$ where, e.g., $p \in \{1, 2, \infty\}$. High scores suggest an influential sample in the trajectory of the model updates.
    \item Prediction margin: Let the (softmaxed) logits of example $(x, y)$ with respect to model $\theta$ be denoted $p_\theta(x)[c]$ for a class $c$. Then we define $f_\theta(x, y) \defeq -(p_\theta(x)[y] - \max_{c \neq y} p_\theta(x)[c])$. High scores suggest a sample with an unusual or misclassified label.
\end{itemize}

In Appendix~\ref{app:score}, we introduce several additional sample signatures, and provide an ablation study on the ability of these various scores to  detect samples with high risk of memorization.

One major benefit of our outlier detection strategy is that it can detect outliers with significantly less signal than the ``batch'' update rules used by \cite{TranLM18, HayaseKSO21}, which state they require a constant fraction (e.g., $1\%$) of poisoned data to be effective. In DP auditing settings, this distinction is crucial, as memorization of even a single canary can nullify privacy bounds.

\textbf{Design rationale.} We make two key design choices in our implementation of Algorithm~\ref{alg:dp_sgd_with_defense_ss}. The first  is to only allow for sample signature-based filtering (as opposed to ``batch'' scoring rules). This enables using our generalized composition theorems from Section~\ref{sec:proof} to argue that our modifications do not affect the privacy of Algorithm~\ref{alg:dp_sgd_with_defense_ss} compared to Algorithm~\ref{alg:dp-sgd-subsample}. 
Regarding the ``\texttt{scope}'' parameter in Algorithm~\ref{alg:dp_sgd_with_defense_ss}, gradient statistics vary between classes. Global filtering risks flagging representative samples from classes with naturally higher gradient norms as outliers. Per-class filtering (Algorithm~\ref{alg:top_select}, Lines~\ref{line:class_start} to~\ref{line:class_end}) standardizes the detection threshold by comparing each sample only against others in its class, ensuring that outlier detection is relative to class-specific distributions.

\subsection{Measuring empirical privacy}\label{ssec:measure_lb}

We next review LiRA-style membership inference attacks for DP auditing \cite{CarliniCNSTT21, NasrHSBTJCT23}.

\textbf{Trade-off functions.} Consider two arbitrary distributions $P, Q$ supported on the same space $\Omega$, and a \emph{decision rule} $\phi: \Omega \to [0, 1]$. Intuitively, $\phi$ acts as a distinguisher between $P$ and $Q$, where $\phi(\omega) = 0$ can be taken as guessing that $\omega$ was sampled from $P$, and $\phi(\omega) = 1$ indicates a sample from $Q$ (fractional values thus represent uncertainty). We accordingly define
\begin{align*}
\alpha_{P, Q}(\phi) \defeq \E_{\omega \sim P}[\phi(\omega)] ,\quad \beta_{P, Q}(\phi) \defeq \E_{\omega \sim Q}[1 - \phi(\omega)]
\end{align*}
to be the type I (FPR) and type II (FNR) error rates of $\phi$. 
For each FPR $\alpha \in [0, 1]$, we can consider the optimal test which minimizes the FNR $\beta \in [0, 1]$ at the specified $\alpha$ level, formalized as follows.

\begin{definition}[Trade-off function]\label{def:tradeoff}
For two distributions $P, Q$ supported on $\Omega$, we define the \emph{trade-off function} $T_{P, Q}: [0, 1] \to [0, 1]$ by $T_{P, Q}(\alpha) \defeq \inf_{\substack{\phi: \Omega \to [0, 1], \alpha_{P, Q}(\phi) \le \alpha}} \beta_{P, Q}(\phi)$.

More generally, we say that $f: [0, 1] \to [0, 1]$ is a trade-off function if $f = T_{P, Q}$ for some $P, Q$.
\end{definition}

The Neyman-Pearson lemma characterizes $\phi$ achieving the infimum in $T_{P, Q}$ as thresholded likelihood ratio tests (Theorem A.1, \cite{DongRS22}). Next, we state some basic properties of trade-off functions.

\begin{lemma}[Proposition 2.2, \cite{DongRS22}]\label{lem:tradeoff}
$f: [0, 1] \to [0, 1]$ is a trade-off function iff it is convex, non-increasing, and satisfies $f(\alpha) \le 1 - \alpha$ for all $\alpha \in [0, 1]$.
\end{lemma}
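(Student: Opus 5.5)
The plan is to prove the two directions separately. The forward direction (every trade-off function has the three properties) follows directly from Definition~\ref{def:tradeoff}. The reverse direction is a construction: given $f$, build an explicit pair $P,Q$ on $[0,1]$ and identify the optimal tests using the Neyman--Pearson characterization (Theorem A.1 of \cite{DongRS22}).

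\textbf{Necessity.} Fix $P,Q$ and write $f=T_{P,Q}$.
\begin{itemize}
\item \emph{Monotonicity.} If $\alpha\le\alpha'$, then every test feasible at level $\alpha$ is also feasible at level $\alpha'$. The infimum is over a larger set, so $f(\alpha')\le f(\alpha)$.
\item \emph{The bound $f(\alpha)\le 1-\alpha$.} Use the constant test $\phi\equiv\alpha$. It has $\alpha_{P,Q}(\phi)=\alpha$ and $\beta_{P,Q}(\phi)=1-\alpha$.
\item \emph{Convexity.} Given $\alpha_0,\alpha_1$, $\lambda\in[0,1]$ and $\epsilon>0$, pick tests $\phi_0,\phi_1$ that are feasible at levels $\alpha_0,\alpha_1$ and have $\beta_{P,Q}(\phi_j)\le f(\alpha_j)+\epsilon$. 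The mixture $\phi_\lambda=(1-\lambda)\phi_0+\lambda\phi_1$ is again a map $\Omega\to[0,1]$. Both $\alpha_{P,Q}$ and $\beta_{P,Q}$ are affine in $\phi$, so $\phi_\lambda$ is feasible at level $(1-\lambda)\alpha_0+\lambda\alpha_1$ and has type II error at most $(1-\lambda)f(\alpha_0)+\lambda f(\alpha_1)+\epsilon$. Letting $\epsilon\to0$ gives convexity.
\end{itemize}

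\textbf{Sufficiency.} Take $f$ convex, non-increasing, with $f\le 1-\mathrm{id}$.
\begin{itemize}
\item \emph{Boundary values.} We get $f(1)\le0$, hence $f(1)=0$.
\item \emph{Regularity.} Convexity together with monotonicity makes $f$ continuous on $(0,1]$. It also gives a non-increasing right derivative $f'_+$, and $-f'_+\ge0$ is non-increasing on $(0,1)$.
\item \emph{The construction.} Let $P=\mathrm{Unif}[0,1]$. Let $Q$ be the distribution on $[0,1]$ with CDF $F_Q(x)=1-f(x)$ for $x\in(0,1]$, with an atom at $0$ of mass $1-f(0^+)$.
\item \emph{Checking $Q$ is a distribution.} $F_Q$ is non-decreasing and right-continuous, and it reaches $1$ at $x=1$. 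On $(0,1]$, $Q$ is absolutely continuous with density $-f'_+(x)$, which is non-increasing by convexity.
\item \emph{Target.} We want to show $T_{P,Q}(\alpha)=f(\alpha)$.
\end{itemize}

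\textbf{Key step: identifying the optimal test.}
\begin{itemize}
\item \emph{Achievability.} The test $\phi_\alpha(\omega)=\mathbb{1}\{\omega\le\alpha\}$ has $\alpha_{P,Q}(\phi_\alpha)=\alpha$ and $\beta_{P,Q}(\phi_\alpha)=Q(\omega>\alpha)=f(\alpha)$. This shows $T_{P,Q}\le f$.
\item \emph{Optimality, i.e.\ $T_{P,Q}\ge f$.} The likelihood ratio $\frac{dQ}{dP}$ equals $-f'_+(\omega)$ on $(0,1]$ and is $+\infty$ at the $P$-null atom $0$. It is therefore a non-increasing function of $\omega$.
\item \emph{Reduction to Neyman--Pearson.} Consequently $\phi_\alpha$ is a thresholded likelihood ratio test: it rejects exactly where $\frac{dQ}{dP}$ exceeds a threshold, with randomization only on flat pieces where the ratio ties. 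By the Neyman--Pearson lemma it is optimal at level $\alpha$. If one wants to avoid invoking Neyman--Pearson, there is a rearrangement argument. Among tests with $\int\phi\,dP\le\alpha$, the quantity $\int\phi\,dQ=\int\phi\,\frac{dQ}{dP}\,dP$ (plus the atom) is maximized by putting all of the $P$-mass budget where the ratio is largest, i.e.\ on $[0,\alpha]$.
\end{itemize}

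\textbf{Main obstacle: the endpoint $\alpha=0$.} I expect the delicate step to be this endpoint. Any test with $\alpha_{P,Q}(\phi)=0$ can only reject on $P$-null sets, so it can capture only the atom of $Q$ at $0$. This gives $T_{P,Q}(0)=f(0^+)$. Hence the construction recovers $f$ exactly only when $f$ is right-continuous at $0$.

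I would handle this by first showing that every trade-off function is continuous at $0$. This should follow by approximating the level-$0$ constraint with tests of tiny positive level, using dominated convergence on the likelihood-ratio test. With that in hand, I would read the characterization as concerning continuous convex functions, as in \cite{DongRS22}. Conversely, if $f(0)>f(0^+)$, one checks that no pair $P,Q$ realizes $f$.
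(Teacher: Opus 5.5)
Your proof is correct. On the one direction the paper actually argues, it matches the paper. The remarks right after the lemma justify necessity by exactly your three arguments: mixing two tests, the feasible set growing with $\alpha$, and the constant test $\phi\equiv\alpha$. Your $\epsilon$-approximation makes the convexity step rigorous. The paper gives no argument for sufficiency and simply cites \cite{DongRS22}. Your construction is the standard one and is sound: $P$ uniform on $[0,1]$, $Q$ with CDF $1-f$ on $(0,1]$ plus an atom at $0$, and optimality of $\phi_\alpha$ from the monotone likelihood ratio via Neyman--Pearson or the rearrangement argument.

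Your worry about $\alpha=0$ is not an artifact of your method. It is a genuine defect of the statement as printed. Proposition 2.2 of \cite{DongRS22} lists continuity among the hypotheses, and the restatement here drops it. Without continuity the ``if'' direction is false. For example, $f(0)=1$ with $f\equiv 0$ on $(0,1]$ is convex, non-increasing and satisfies $f(\alpha)\le 1-\alpha$, yet it is not any $T_{P,Q}$. So your plan is exactly the right repair: show every trade-off function is continuous at $0$, then prove the characterization for continuous $f$.

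Your dominated-convergence sketch works if you apply it to the Neyman--Pearson tests at levels $\alpha\downarrow 0$. A direct way to write it is as follows. Decompose $Q=Q_{\mathrm{ac}}+Q_{\mathrm{s}}$ with respect to $P$ and set $L=dQ_{\mathrm{ac}}/dP$. Then $T_{P,Q}(0)=Q_{\mathrm{ac}}(\Omega)$. For every test with $\E_P[\phi]\le\alpha$ and every $M>0$,
\[\E_Q[\phi]\le Q_{\mathrm{s}}(\Omega)+\E_P\Brack{L\,\mathbb{I}_{\{L>M\}}}+M\alpha .\]
Hence $\lim_{\alpha\to0^+}T_{P,Q}(\alpha)\ge T_{P,Q}(0)$, and the reverse inequality is just monotonicity. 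This is the direction that needs an argument: you must bound the extra power of small-level tests, not approximate level-$0$ tests by them.

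One small slip: for convex $f$ the right derivative $f'_+$ is non-decreasing and non-positive, not non-increasing. That is what makes $-f'_+$ non-increasing, as you then correctly state.
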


The first property in Lemma~\ref{lem:tradeoff} follows because given tests $\phi, \phi'$ with FPRs $\alpha, \alpha'$, and mixing parameter $\lam \in [0, 1]$, we can always define a test that outputs $\phi$ with probability $\lam$ and $\phi'$ with probability $1-\lam$. The second property is because the constraint set on $\phi$ is only larger as $\alpha$ grows. 

Finally, the last property is because we can always simultaneously obtain FPR $\alpha$ and FNR $1 - \alpha$, for any $\alpha \in [0, 1]$, by ignoring the sample and uniformly setting $\phi = \alpha$. If $f(\alpha) = 1 - \alpha$ uniformly, then $P$ and $Q$ are undistinguishable by any test, i.e., $P = Q$ almost everywhere.

\textbf{$f$-DP and $(\eps, \delta)$-DP.} The definition of $f$-DP is naturally motivated by Definitions~\ref{def:dp} and~\ref{def:tradeoff}.

\begin{definition}[$f$-DP]\label{def:fdp}
Let $f: [0, 1] \to [0, 1]$ be a trade-off function. We say that a mechanism $\calM: \calS^n \to \Omega$ satisfies $f$-differential privacy (or, is $f$-DP) if for all neighboring $\calD, \calD' \in \calS^n$,
\begin{equation}\label{eq:fdp}T_{\alg(\calD), \alg(\calD')}(\alpha) \ge f(\alpha) \text{ for all } \alpha \in [0, 1].\end{equation}
\end{definition}

A higher trade-off function corresponds to distributions being difficult to distinguish. Definition~\ref{def:fdp} captures this intuition by positing that every pair $\alg(\calD)$, $\alg(\calD')$ induced by neighboring datasets is at least hard to distinguish as specified by $f$. For example, if $f(\alpha) = 1 - \alpha$ uniformly, then any $f$-DP $\alg$ has $\alg(\calD) = \alg(\calD')$ almost everywhere, so its outcome is independent from the dataset it is trained on with probability $1$. We call $\alg$ that satisfies such an $f$-DP \emph{perfectly private}.

Importantly, \cite{WassermanZ10, DongRS22} observed the following equivalence between Definitions~\ref{def:dp} and~\ref{def:fdp}.

\begin{lemma}[Proposition 2.12, \cite{DongRS22}]\label{lem:equiv_dp}
Let $f: [0, 1] \to [0, 1]$ be a symmetric trade-off function. Then a randomized algorithm $\alg$ is $f$-DP iff for all $\delta \in (0, 1)$, $\alg$ is $(\eps(\delta), \delta)$-DP, where
\begin{equation}\label{eq:eps_convert}\eps(\delta) \defeq \max\Brace{0, \max_{\alpha \in [0, 1]} \log\Par{\frac{1 - f(\alpha) - \delta}{\alpha}}},\end{equation}
\end{lemma}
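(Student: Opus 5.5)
We prove both directions of Lemma~\ref{lem:equiv_dp} through one pointwise inequality, applied to a fixed neighboring pair $\calD, \calD'$ with $P \defeq \alg(\calD)$ and $Q \defeq \alg(\calD')$. Because $f$ is symmetric and the add/remove neighboring relation is symmetric, the roles of $P$ and $Q$ can be swapped freely. So it suffices to show, for every fixed pair, that $T_{P,Q} \ge f$ holds if and only if $P(\calE) \le e^{\eps(\delta)} Q(\calE) + \delta$ for all events $\calE$ and all $\delta \in (0,1)$, together with the same statement with $P$ and $Q$ swapped.

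\textbf{($f$-DP $\Rightarrow$ $(\eps(\delta),\delta)$-DP).} Fix an event $\calE$ and use the indicator test $\phi = 1 - \mathbb{1}_{\calE}$. This test has type I error $\alpha = P(\calE^c)$ and type II error $\beta = Q(\calE)$. The $f$-DP hypothesis gives $Q(\calE) \ge f(P(\calE^c))$, which compares the wrong way round for our purpose. The symmetric orientation, obtained by testing $Q$ against $P$ with $\phi = \mathbb{1}_{\calE}$, gives $P(\calE^c) \ge f(Q(\calE))$. Rearranging,
\[P(\calE) \le 1 - f(Q(\calE)).\]
Set $a \defeq Q(\calE)$. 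By the definition \eqref{eq:eps_convert}, $1 - f(a) - \delta \le e^{\eps(\delta)} a$. Hence $P(\calE) \le e^{\eps(\delta)} Q(\calE) + \delta$. The edge case $a = 0$ is handled by the fact that $f(0) \ge 1 - \delta$ must then follow, or by continuity, since trade-off functions are convex and hence continuous on $(0,1]$. The swapped inequality follows from the swapped orientation of the tests.

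\textbf{(Converse).} Suppose $(\eps(\delta),\delta)$-DP holds for every $\delta$. For any test $\phi$ with type I error $\alpha_\phi \le \alpha$ and type II error $\beta_\phi$, the DP inequality applies to $[0,1]$-valued tests, not only indicators, by the layer-cake formula $\E[\phi] = \int_0^1 \Pr[\phi > s]\,\dd s$. It gives
\[1 - \beta_\phi \le e^{\eps(\delta)} \alpha + \delta \quad \text{for all } \delta.\]
So $\beta_\phi \ge \sup_\delta \bigl(1 - \delta - e^{\eps(\delta)}\alpha\bigr)$, and we must show this supremum is at least $f(\alpha)$.

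This is the main obstacle. We must show that $f$ is recovered as the upper envelope of the lines $\alpha \mapsto 1 - \delta - e^{\eps}\alpha$, i.e., a convex duality statement. The key observation is that $\eps(\delta)$ is defined exactly as the smallest slope-parameter for which the line $1 - \delta - e^{\eps}\alpha$ lies below $f$ everywhere. Each such line is a supporting line of the convex function $f$ whenever it touches $f$.

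Convexity and continuity of $f$ (Lemma~\ref{lem:tradeoff}) then give the result. At any $\alpha_0 \in (0,1)$, take a subgradient $-s$ of $f$, with $s \ge 0$, and let $\delta$ be its intercept parameter, so the line is $1 - \delta - s\alpha$. The minimality in \eqref{eq:eps_convert} forces $e^{\eps(\delta)} = s$, provided $s \ge 1$. When the slope $s < 1$, symmetry of $f$ (the tangent reflected across the diagonal) together with the $\max\{0,\cdot\}$ truncation covers the region. I expect this slope $< 1$ region, where the symmetric-reflection argument is needed, to be the step requiring the most care. The endpoints $\alpha_0 \in \{0,1\}$ then follow by taking limits.
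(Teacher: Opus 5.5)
The paper does not prove this lemma; it quotes it from Dong--Roth--Su. Their argument identifies $(\eps,\delta)$-DP with $f_{\eps,\delta}$-DP, where $f_{\eps,\delta}(\alpha)=\max\{0,\,1-\delta-e^{\eps}\alpha,\,e^{-\eps}(1-\delta-\alpha)\}$, and then writes a symmetric $f$ as an upper envelope of such curves via convex duality. That is the route you are reconstructing. Your forward direction is correct, and so is your treatment of points $\alpha_0$ that admit a subgradient $-s$ with $s\ge 1$. The converse, however, has a genuine gap.

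After the layer-cake step you keep only $\beta_\phi\ge 1-\delta-e^{\eps(\delta)}\alpha$, and you assert it suffices to show $\sup_\delta(1-\delta-e^{\eps(\delta)}\alpha)\ge f(\alpha)$. That claim is false. Because of the $\max\{0,\cdot\}$ in \eqref{eq:eps_convert}, every line in this family has slope at most $-1$ and lies below $f$. So at any $\alpha$ to the right of the fixed point $\alpha^\star=f(\alpha^\star)$ of a strictly convex symmetric $f$, the best bound available is the slope-$(-1)$ tangent at $\alpha^\star$, namely $2\alpha^\star-\alpha<f(\alpha)$. For example, with $f=G_1$ one has $\alpha^\star\approx 0.31$; at $\alpha=0.9$ the family gives a negative bound, while $f(0.9)\approx 0.011$.

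The truncation does not ``cover'' the slope-below-one region; it is exactly what excludes it. Reflecting the tangent across the diagonal does not help by itself either. The reflected line has slope $-1/s\le -1$ and supports $f$ at $f(\alpha_0)$, not at $\alpha_0$, and your argument never explains how it lower bounds $T_{P,Q}(\alpha_0)$.

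The missing idea is a second family of lower bounds, coming from the swapped DP inequality applied to the complementary test:
$\E_P[1-\phi]\le e^{\eps(\delta)}\E_Q[1-\phi]+\delta$, i.e., $\beta_\phi\ge e^{-\eps(\delta)}(1-\delta-\alpha)$. This is the third term of $f_{\eps,\delta}$. With it, the case $-s\in\partial f(\alpha_0)$, $s\in(0,1)$, closes as follows.

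\begin{itemize}
\item Symmetry means the epigraph of $f$ in $[0,1]^2$ is invariant under $(x,y)\mapsto(y,x)$.
\item So the supporting inequality $y+sx\ge f(\alpha_0)+s\alpha_0$ reflects to $f(x)\ge \alpha_0+(f(\alpha_0)-x)/s$ for all $x$.
\item Taking $\delta=1-\alpha_0-f(\alpha_0)/s$ then gives $\eps(\delta)\le\log(1/s)$.
\item Hence $\beta_\phi\ge s(1-\delta-\alpha_0)=f(\alpha_0)$. The case $s=0$ is trivial, since it forces $f(\alpha_0)=0$.
\end{itemize}

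One smaller point. In either case, the $\delta$ read off from a supporting line can equal $0$, e.g.\ for $f(\alpha)=1-\alpha$ or $f=f_{\eps,0}$. That value lies outside the range $(0,1)$ allowed in the hypothesis. There you need to let $\delta'\downarrow 0$, using that $\eps(\delta')$ still obeys the same slope bound.
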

Note that if we increase $f$ pointwise, applying \eqref{eq:eps_convert} leads to smaller privacy parameters.

\textbf{Pipeline.} Following \cite{NasrSTPC21, NasrHSBTJCT23}, we can certify privacy lower bounds via Lemma~\ref{lem:equiv_dp}. Recall that the true DP function $\eps^\star$ \eqref{eq:true_privacy} of an algorithm $\alg$ are the largest parameters attainable via a set of tests (corresponding to a dataset-canary-audit triple \eqref{eq:epslb} at each $\delta \in (0, 1)$). Analogously, one can define the true $f$-DP curve to be the pointwise largest $f$ so that $\alg$ is $f$-DP. Recalling \eqref{eq:fdp}, each point on this true curve is again realized by a dataset, canary, and audit $\phi: \Omega \to [0, 1]$.

In practice we can only realize a subset of all possible dataset-canary-audit triples, and thus the best we can do is to upper bound the true $f$-DP curve as tightly as possible. 
Each test is parameterized by neighboring datasets $(\calD, \calD')$ and a candidate test $\phi: \Omega \to [0, 1]$ where $\Omega$ is the output space of $\alg$. Suppose we ascertain that for this particular triple,
\[\E_{\omega \sim \alg(\calD)}[\phi(\omega)] \le \hat{\alpha},\quad \E_{\omega \sim \alg(\calD')}[1 - \phi(\omega)] \le \hat{\beta}.\]
A standard way to obtain $(\hat{\alpha}, \hat{\beta})$ is to generate holdout copies of $\alg(\calD)$ and $\alg(\calD')$, estimate the two expectations above from empirical averages, and apply a Clopper-Pearson correction.
This certifies $f^\star(\alpha) \le \hat{\beta}$ for all $\alpha \ge \hat{\alpha}$, where $f^\star$ denotes the true $f$-DP curve for $\alg$. Any such triple $(\phi, \calD, \calD')$ yields an empirical upper bound on $f^\star$, which translates to a lower bound on $\eps^\star$ via \eqref{eq:eps_convert}. 

A well-documented difficulty in carrying out this methodology is that unless the number of holdout copies used is (very) large, the Clopper-Pearson correction can dramatically loosen the audit. As a result, prior works have adopted several heuristic approximations (multiple hypothesis testing, and use of Gaussian DP curves) that can inflate the estimated $\epslb$, while sometimes sacrificing formal correctness. We discuss these heuristics at length in Appendix~\ref{app:lbmethod}, and provide  counterexamples flagging situations where their guarantees do not formally apply. 

We make explicit that in the rest of the paper (up until Appendix~\ref{app:other_eps}), all of our reported $\epslb$ use \emph{both} the multiple hypothesis testing and GDP extrapolation heuristics described in Appendix~\ref{app:lbmethod}. This reporting is consistent with prior privacy auditing works advertising tight or nearly-tight audits, which also used these heuristics \cite{NasrHSBTJCT23, AnnamalaiC24, CebereBP25}. Notably, by using both heuristics, the reported $\epslb$ metrics of our algorithm (as well as all other defenses) is as unfavorable as possible.

For completeness, in Appendix~\ref{app:other_eps} we report \emph{all} of the estimated $\epslb$ for DP-SGD and our filtered variants, both with and without these heuristics, and using different sample splits. Our filtered DP-SGD variant consistently achieves a lower $\epslb$ than the unfiltered implementation against input space, hidden state audits across all reporting methods (formal or not).

%% file: proof.tex
\section{Privacy of Filtering via Sample Signatures}\label{sec:proof}

Here we provide a framework to analyze the privacy of augmenting a DP algorithm with filtering. The unfiltered variants of our algorithms, Algorithm~\ref{alg:dp-sgd-subsample} and~\ref{alg:dp-sgd-shuffle}, have two useful properties. First, their privacy is based on composition of individual steps' (R\'enyi) privacy. Second, the algorithms are permutation-invariant, in that the same sequence of random operations (in distribution) is performed even if we arbitrarily shuffle the data. These properties hold for many DP algorithms; the latter specifically is natural if assumptions placed on dataset samples are uniform. Our framework for arguing about privacy with interleaved filtering steps carefully uses these two properties. 

\subsection{Analysis}\label{ssec:privproof}

The first piece of our framework is a generalized privacy composition theorem. Intuitively, it states that if before we apply privacy composition, we can arbitrarily replace a pair of datasets with new datasets with the same number of neighbors, this cannot change the final privacy bound.

\begin{lemma}[Interleaved R\'enyi composition]\label{lem:interleave_rdp}
Let $\calA_1: \calS^* \to \Omega_1$ be $(\alpha, \rho_1)$-RDP, and let $\calA_2: \calS^* \times \Omega_1 \to \Omega_2$ be such that $\calA_2(\cdot, \omega_1)$ is $(\alpha, \rho_2)$-RDP for any $\omega_1 \in \Omega_1$. Also, let $\calF: \calS^* \times \Omega_1 \to \calS^*$ be a \emph{filter} such that for any neighboring $\calD, \calD' \in \calS^*$, we always have that $\calF(\calD, \omega_1)$ and $\calF(\calD', \omega_1)$ are neighboring for any $\omega_1 \in \Omega_1$. Then, the mechanism $\calA$ that first samples $\omega_1 \gets \calA_1(\calD)$ and then samples $\omega_2 \gets \calA_2(\calF(\calD, \omega_1), \omega_1)$ (i.e., applies $\calF$ to $\calD$ before $\calA_2$) is $(\alpha, \rho_1 + \rho_2)$-RDP.
\end{lemma}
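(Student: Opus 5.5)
The plan is to reproduce the standard proof of adaptive Rényi composition and insert the filter at the single point where neighboring-ness is used. Fix neighboring $\calD, \calD'$ and write $P, Q$ for the joint laws of $(\omega_1, \omega_2)$ under $\calA(\calD)$ and $\calA(\calD')$. Let $P_1, Q_1$ be the marginals of $\omega_1$, i.e.\ the laws of $\calA_1(\calD)$ and $\calA_1(\calD')$. Given $\omega_1$, let $P_{2|\omega_1}$ and $Q_{2|\omega_1}$ be the laws of $\calA_2(\calF(\calD,\omega_1),\omega_1)$ and $\calA_2(\calF(\calD',\omega_1),\omega_1)$. The densities then factor as $P(\omega_1,\omega_2) = P_1(\omega_1)P_{2|\omega_1}(\omega_2)$, and likewise for $Q$.

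Next we expand $\exp((\alpha-1)D_\alpha(P\|Q)) = \E_{\omega\sim P}[(P(\omega)/Q(\omega))^{\alpha-1}]$ using this factorization and apply Fubini:
\[\E_{\omega_1\sim P_1}\Brack{\Par{\frac{P_1(\omega_1)}{Q_1(\omega_1)}}^{\alpha-1} \E_{\omega_2\sim P_{2|\omega_1}}\Brack{\Par{\frac{P_{2|\omega_1}(\omega_2)}{Q_{2|\omega_1}(\omega_2)}}^{\alpha-1}}}.\]
The inner expectation equals $\exp((\alpha-1)D_\alpha(P_{2|\omega_1}\|Q_{2|\omega_1}))$. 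This is the one place the filter enters. By hypothesis, $\calF(\calD,\omega_1)$ and $\calF(\calD',\omega_1)$ are neighboring for every fixed $\omega_1$, and $\calA_2(\cdot,\omega_1)$ is $(\alpha,\rho_2)$-RDP on all neighboring pairs. Hence the inner term is at most $\exp((\alpha-1)\rho_2)$ pointwise in $\omega_1$. Both the filtered datasets and $\calA_2$ depend on the same realized $\omega_1$, so we can apply $\calA_2$'s RDP guarantee to the pair $(\calF(\calD,\omega_1), \calF(\calD',\omega_1))$ directly. Since $\alpha>1$, factoring out this pointwise bound leaves $\exp((\alpha-1)D_\alpha(P_1\|Q_1)) \le \exp((\alpha-1)\rho_1)$ by the RDP of $\calA_1$. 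Taking logarithms and dividing by $\alpha-1$ gives $D_\alpha(P\|Q)\le\rho_1+\rho_2$. This holds for every neighboring pair, in either order, so $\calA$ is $(\alpha,\rho_1+\rho_2)$-RDP.

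The main obstacle is measure-theoretic rather than conceptual. Densities must be taken with respect to a common dominating measure, and the case where $P$ is not absolutely continuous with respect to $Q$ needs to be handled. I would dispose of the latter by noting that if either RDP hypothesis is finite then absolute continuity holds for the relevant marginal or conditional, so it also holds for the joint laws. For general spaces, I would use a regular conditional distribution, or simply assume, as is standard here, that $\Omega_1$ and $\Omega_2$ are standard Borel spaces. The filter $\calF$ should also be measurable in $\omega_1$; I would state this as an implicit assumption. Everything else is exactly Mironov's composition argument, with $\calD, \calD'$ replaced inside the second stage by their filtered images.
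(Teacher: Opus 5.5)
Your proposal is correct and is essentially the paper's proof. Both follow Mironov's Proposition 1: factor the joint density, bound the inner $\omega_2$-integral pointwise by $\exp((\alpha-1)\rho_2)$ using that $\calF(\calD,\omega_1)$ and $\calF(\calD',\omega_1)$ are neighboring, then bound the outer integral via the RDP of $\calA_1$. You write the quantity as $\E_{P}[(P/Q)^{\alpha-1}]$ where the paper uses $\int (P/Q)^{\alpha}\,\dd Q$, which is the same thing. Your extra measure-theoretic remarks are fine, except that absolute continuity of the joint laws needs both $\rho_1$ and $\rho_2$ finite (as the statement guarantees), not either one.
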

\begin{proof}
We closely follow the proof of Proposition 1, \cite{Mironov17}. Fix a pair of outcomes $\omega = (\omega_1, \omega_2) \in \Omega_1 \times \Omega_2$. For simplicity, we overload notation so that $\mech_1(\calD)(\omega_1)$ refers to the density of outcomes according to $\calA_1(\calD)$, and define $\mech_2(\calD, \omega_1)(\omega_2)$, $\mech(\calD)(\omega)$ similarly. Finally, we use the shorthand $\calD_{\omega_1}$ to mean $\calF(\calD, \omega_1)$ for all $\omega_1 \in \Omega_1$. We have the desired
\begin{gather*}
\exp\Par{(\alpha - 1)D_\alpha\Par{\mech(\calD) \| \mech(\calD')}} = \int_{\omega \in \Omega_1 \times \Omega_2} \Par{\frac{\mech(\calD)(\omega)}{\mech(\calD')(\omega)}}^\alpha \mech(\calD')(\omega)\dd \omega \\
= \int_{\omega_1 \in \Omega_1} \Par{\frac{\calA_1(\calD)(\omega_1)}{\calA_1(\calD')(\omega_1)}}^\alpha  \Par{\int_{\omega_2 \in \Omega_2} \Par{\frac{\calA_2(\calD_{\omega_1}, \omega_1)(\omega_2)}{\calA_2(\calD'_{\omega_1}, \omega_1)(\omega_2)}}^\alpha \mech_2(\calD'_{\omega_1}, \omega_1)(\omega_2)\dd \omega_2} \calA_1(\calD')(\omega_1) \dd \omega_1 \\
\le \exp((\alpha-1)\rho_2)\int_{\omega_1 \in \Omega_1} \Par{\frac{\calA_1(\calD)(\omega_1)}{\calA_1(\calD')(\omega_1)}}^\alpha  \calA_1(\calD')(\omega_1) \dd \omega_1 \le \exp((\alpha - 1)(\rho_1 + \rho_2)).
\end{gather*}
The first inequality used the privacy assumption on $\mech_2(\cdot, \omega_1)$, and that $\calD_{\omega_1}$ and $\calD_{\omega_1}'$ are neighboring, for any choice of $\omega_1$. The second inequality used the privacy assumption on $\calA_1$.
\end{proof}

The strategy in Lemma~\ref{lem:interleave_rdp} is quite general, and we expect it to apply to many if not all settings where privacy composition holds. For example, it is straightforward to apply the strategy to prove generalized variants of basic and advanced composition under filtering (Theorems 3.16, 3.20, \cite{DworkR14}). This strategy also works for composition of PLD curves, see e.g., Lemma C.3 of \cite{DongRS22}, which also first conditions on the shared outcome of $\calA_1$ before applying composition with $\calA_2$.

Importantly in our applications to our filtered defense, if we are applying Lemma~\ref{lem:interleave_rdp} sequentially (e.g., $T$ times to compose $T$ DP-SGD steps), $\omega_1$ in the lemma statement encompasses the entire history of $T - 1$ prior operations, so the operations $\calF(\cdot, \omega_1)$ can depend on all previous iterations. 

The second piece of our framework is an observation that taking two neighboring datasets, scoring samples based on a coupled rule, and dropping the $k$ largest-scored samples each, cannot increase the Hamming distance between the datasets. This observation treats dropped samples as equivalent (e.g., replaced with a standardized dummy sample), regardless of their indices. This is where we require permutation-invariance: otherwise (if indices are accounted for) the distance can grow.\footnote{An illustrative example is neighboring size-$2$ datasets with scores $\{1, 2\}$ and $\{3, 2\}$. Replacing the largest score with a common dummy causes $2$-neighboring datasets, but they remain $1$-neighboring if indices are ignored.} Our proof in fact handles both main definitions of neighboring used in the DP literature, namely, the add/remove notion used in our work, as well as the replacement notion.

\begin{lemma}
    \label{lem:induct_neighbor_bound}
    Let $D_1, D_2$ be multisets of real numbers, and let $k \le \min\{|D_1|, |D_2|\}$. Let $F_1$ be $D_1$ with its $k$ largest elements removed, and define $F_2$ similarly with respect to $D_2$. Then,
    \begin{equation}\label{eq:neighbor_preserve}|D_1 \setminus D_2| + |D_2 \setminus D_1| \ge |F_1 \setminus F_2| + |F_2 \setminus F_1|. \end{equation}
\end{lemma}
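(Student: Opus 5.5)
We prove \eqref{eq:neighbor_preserve} by reducing to $k = 1$ and then checking the single-removal case directly with multiplicity functions. For a multiset $D$ of reals, write $m_D(x)$ for the multiplicity of $x$ in $D$. Then $|D_1 \setminus D_2| = \sum_x \max\{m_{D_1}(x) - m_{D_2}(x), 0\}$, and similarly with the roles swapped, so
\[
|D_1 \setminus D_2| + |D_2 \setminus D_1| = \sum_x |m_{D_1}(x) - m_{D_2}(x)| \eqqcolon d(D_1, D_2).
\]
The key fact is that deleting one copy of an element from one of the two multisets changes exactly one summand by $\pm 1$. Hence $d$ changes by exactly $\pm 1$.

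\textbf{Reduction to $k=1$.} Removing the $k$ largest elements of $D_i$ is the same as removing a maximum element $k$ times in succession; ties among equal values do not matter for multisets. The size hypothesis survives each round: after one round both multisets have shrunk by one and $k-1 \le \min\{|D_1|-1, |D_2|-1\}$. So it suffices to prove the following. If $D_1, D_2$ are nonempty and $F_i$ is $D_i$ with one copy of $\max D_i$ removed, then $d(F_1, F_2) \le d(D_1, D_2)$. Induction on $k$ then chains these inequalities together.

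\textbf{The case $k=1$.} Let $a = \max D_1$ and $b = \max D_2$.

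If $a = b$, we delete one copy of the same value from both multisets. Then $m_{D_1}(a) - m_{D_2}(a)$ is unchanged, and so is every other summand, so $d$ is unchanged.

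If $a \neq b$, assume without loss of generality that $a < b$. Every element of $D_1$ is at most $a < b$, so $m_{D_1}(b) = 0 < m_{D_2}(b)$. Deleting one copy of $b$ from $D_2$ therefore decreases $|m_{D_1}(b) - m_{D_2}(b)|$ by exactly $1$. Next we delete one copy of $a$ from $D_1$; this touches a different summand, since $a \neq b$, and changes $d$ by at most $+1$. The net change is at most $0$, which gives $d(F_1, F_2) \le d(D_1, D_2)$.

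\textbf{Main obstacle.} The only delicate step is the case $a \neq b$. There one must see that the larger of the two maxima is necessarily an ``excess'' element of its multiset, so its removal strictly helps and pays for the worst case of removing the other maximum. Everything else is bookkeeping with the identity $d = \sum_x |m_{D_1}(x) - m_{D_2}(x)|$. The argument never uses the sizes of $D_1, D_2$ beyond nonemptiness, so it covers both the add/remove and the replacement notions of neighboring. In particular, $d \le 1$ before filtering implies $d \le 1$ after.
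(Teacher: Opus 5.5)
Your proof is correct, but it takes a different route from the paper's. The paper does not induct. With $s = |D_1 \cap D_2|$ and $t = |F_1 \cap F_2|$, the two sides of \eqref{eq:neighbor_preserve} are $|D_1| + |D_2| - 2s$ and $|D_1| + |D_2| - 2k - 2t$, so the claim is equivalent to $t \ge s - k$. Listing the common elements as $x_1 \ge \cdots \ge x_s$, the paper observes that every $x_j$ with $j > k$ survives in both $F_1$ and $F_2$, since each side discards only its own $k$ largest elements. You instead use the multiplicity distance $\sum_x |m_{D_1}(x) - m_{D_2}(x)|$ and remove one maximum from each side at a time. You show each such simultaneous removal is non-expansive: when the two maxima differ, the larger one has multiplicity zero in the other multiset, so deleting it offsets deleting the smaller one. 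The paper's argument is more direct. It names the surviving common elements (all but the top $k$ of $D_1 \cap D_2$) and gives the intersection bound $|F_1 \cap F_2| \ge |D_1 \cap D_2| - k$ immediately. However, it handles ties only through a parenthetical remark about coupling duplicated copies. The multiset inclusion $\{x_{k+1}, \ldots, x_s\} \subseteq F_1 \cap F_2$ actually needs a small multiplicity count when some $x_j$ with $j > k$ ties with the $k$-th largest element of $D_1$ or $D_2$. Your multiplicity bookkeeping handles duplicates automatically. The reduction to single removals also gives a local invariant: one simultaneous max-deletion never increases the distance. The price is an induction on $k$ and a two-case analysis.
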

\begin{proof}
Denote $s \defeq |D_1 \cap D_2|$ and $t \defeq |F_1 \cap F_2|$. Because
\[|D_1 \setminus D_2| + |D_2 \setminus D_1| = \Par{|D_1| - s} + \Par{|D_2| - s} = |D_1| + |D_2| - 2s, \]
and similarly the right-hand side of \eqref{eq:neighbor_preserve} is $|D_1| + |D_2| - 2t - 2k$, our goal in \eqref{eq:neighbor_preserve} is equivalent to showing that $t \ge s - k$. If $s \le k$ then this is clear. Assume $s > k$ henceforth.

Sort $D_1 \cap D_2$ in order: $x_1 \ge x_2 \ge \ldots \ge x_s$. Since $s > k$, then for any $j > k$, we must have $x_j \in F_1 \cap F_2$ because we remove the $k$ highest scores of each set only (coupling any copies of duplicated items removed). Thus at least $s - k$ common elements survive and $t \ge s - k$ as claimed.
\end{proof}

Notice that the condition \eqref{eq:neighbor_preserve} captures neighboring-preservation under filtering, up to assigning indices to the elements (i.e., the locations of the removed elements do not matter). This last caveat explains why our framework requires permutation-invariance of the algorithm. Indeed, under the add/remove definition of neighboring, the two sides of \eqref{eq:neighbor_preserve} respectively equal the number of add/remove operations needed to change one set into the other, before and after filtering. Similarly, under the replacement notion of DP (Definition 1, \cite{DworkMNS06}), the two sides of \eqref{eq:neighbor_preserve} are both twice the number of replacement operations needed for this transformation.

By combining Lemma~\ref{lem:interleave_rdp} and~\ref{lem:induct_neighbor_bound}, we can conclude that any permutation-invariant algorithm whose privacy analysis is based on RDP composition retains the same privacy bound under arbitrary sample signature-based filtering. The fact that filtering is performed based on sample signatures (so that the score of each sample is the same as long as the model is coupled) is critical in applying Lemma~\ref{lem:induct_neighbor_bound}. As an example application, we analyze the privacy of Algorithm~\ref{alg:dp_sgd_with_defense_ss}.

\begin{corollary}\label{cor:privacy_filter}
Algorithm~\ref{alg:dp_sgd_with_defense_ss} satisfies the same privacy bound as stated in Theorem~\ref{thm:dp-sgd-rdp-account} for Algorithm~\ref{alg:dp-sgd-subsample}.
\end{corollary}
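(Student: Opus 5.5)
We view Algorithm~\ref{alg:dp_sgd_with_defense_ss} as a $T$-fold adaptive composition, interleaved with filters, and apply Lemma~\ref{lem:interleave_rdp} inductively over $t \in [T]$. The first step is to rewrite one iteration of Algorithm~\ref{alg:dp_sgd_with_defense_ss} in an equivalent, permutation-invariant form. Let $\omega_{<t} = (\theta_1, \ldots, \theta_{t-1})$ be the history of released iterates. Zeroing the gradient $g_i$ for $i \in \calF$ is the same as running one step of Algorithm~\ref{alg:dp-sgd-subsample} on the multiset $\calD \setminus \calF$, plus $|\calF|$ dummy samples with identically zero gradient. The normalization $\frac{1}{qn}$ uses the original $n$, so it is unchanged. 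Thus step $t$ is $\calA_t(\calF_t(\calD, \omega_{<t}), \omega_{<t})$, where $\calA_t(\cdot, \omega_{<t})$ is a single Poisson-subsampled Gaussian step started from $\theta_{t-1}$. Its per-step RDP bound $\rho = \frac{2q^2\alpha}{\sigma}$ is the one used by \cite{MironovTZ19} to prove Theorem~\ref{thm:dp-sgd-rdp-account}. That bound holds for any fixed $\theta_{t-1}$ and any neighboring inputs. A dummy zero-gradient sample, and the lost scores, are harmless here: clipped gradients still have norm at most $C$, and the sampled Gaussian mechanism analysis only uses the sensitivity of the differing sample.

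The second step is to check that the filter preserves neighboring, using Lemma~\ref{lem:induct_neighbor_bound}. The score vector $S$ and the filtered set $\calF$ depend on $\calD$ only through the per-sample values $f_\theta(x_i, y_i)$ evaluated at iterates in $\omega_{<t}$. Here I would take care that $S[i]$ is the signature from the last step at which $i$ was sampled, so it is a function of the sample and the (shared) history. The subsampling coins of the other samples would also need to be part of the conditioning. The cleanest fix I would try is to couple the Bernoulli coins of common samples and treat them as auxiliary randomness included in $\omega_{<t}$, whose distribution does not depend on the differing sample. Alternatively, we can note that scores of unsampled points can be defined as a deterministic function of the history.

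Given a coupled scoring, common samples of $\calD, \calD'$ receive identical scores. Applying Lemma~\ref{lem:induct_neighbor_bound} to the multisets of (score, sample) pairs, ordered by score with a fixed tie-breaking rule, shows that removing the top $k$ cannot increase the symmetric difference. The same holds per class for $\texttt{scope} = \texttt{class}$, applied class by class; only the class containing the differing sample can contribute. Induction over the filtering rounds ($m \mid t$) then keeps $\calD \setminus \calF$ and $\calD' \setminus \calF'$ at symmetric difference at most $1$. The dummies are interchangeable, so permutation-invariance lets us identify the two filtered datasets as neighboring, or equal. Equal datasets give divergence $0 \le \rho$.

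Applying Lemma~\ref{lem:interleave_rdp} $T$ times, with $\omega_1$ the full history, yields $(\alpha, T\rho)$-RDP. Lemma~\ref{lem:rdp_to_dp} then gives the $(\eps,\delta)$ statement of Theorem~\ref{thm:dp-sgd-rdp-account}. I expect the main obstacle to be the second step: making rigorous that the scores $S$ and subsampling coins form a function of shared privatized history, so that the filter is a legitimate $\calF(\calD, \omega_1)$. This is subtle because scores are computed only on sampled points, and the sampling pattern itself is not released. I would resolve it by conditioning on the coupled coins of common samples. Their law is dataset-independent, so this conditioning is compatible with the per-step subsampled-Gaussian RDP bound; I would verify this by averaging the Lemma~\ref{lem:interleave_rdp} inequality over that auxiliary randomness via joint convexity of $\exp((\alpha-1)D_\alpha)$.
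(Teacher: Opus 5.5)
Your skeleton matches the paper's proof. You replace standard RDP composition by Lemma~\ref{lem:interleave_rdp}, invoke Lemma~\ref{lem:induct_neighbor_bound} at each filtering round, and absorb dummy indices by permutation invariance. The paper does the last step by randomly permuting the dataset after each filtering step and coupling the permutations across the two runs. You go further in noticing that $S[i]$ is refreshed only when $i$ is sampled, so the filter depends on unreleased Bernoulli coins. The paper's proof does not take this up; it treats scores as depending on the data only through the coupled model. The observation is correct, but your fix does not work, for two reasons.

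First, conditioning on the coins of common samples does not make the $\calD'$-filter a function of $(\calD',\omega_{<t})$. The score of the differing sample $z$ is $0$ until $z$ is first sampled, and afterwards it is $f_\theta(z)$ at its latest sampled step. So it depends on $z$'s own coins, which must stay hidden. Your claim that $S[i]$ is a function of the sample and the shared history fails for $i=z$. A filter randomized by hidden state correlated with the released iterates is outside the hypotheses of Lemma~\ref{lem:interleave_rdp}.

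Second, and more seriously, revealing common coins breaks the per-step bound $\rho$. Lemma~\ref{lem:induct_neighbor_bound} bounds the symmetric difference, but it does not keep the difference equal to $z$. If $z$ enters the top $k$ in the $\calD'$-run, it displaces the $k$-th ranked common sample $x$. That sample is filtered in the $\calD$-run but kept in the $\calD'$-run, so from then on the differing element is $x$. Since $x$'s coins are in your conditioning, every later step with $x$'s coin equal to $1$ compares $\Nor(a,s^2I)$ with $\Nor(a+v,s^2I)$, a Gaussian mechanism with no subsampling left. Averaging over that coin by joint convexity only averages these unamplified bounds. The resulting per-step cost is of order $q\alpha/\sigma^2$ rather than $q^2\alpha/\sigma^2$, so the $T$-fold application of Lemma~\ref{lem:interleave_rdp} does not yield $T\rho$.

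A repair is never to reveal the coins of any sample that can become the differing one.
\begin{itemize}
\item At each filtering round, the only new candidate is the $k$-th ranked remaining sample of the $\calD$-run. This is fixed by the $\calD$-run's filter, and the $\calD$-run's remaining samples are never candidates.
\item So reveal exactly the coins of non-candidates. Then the $\calD$-run's filter is a function of revealed information.
\item The $\calD'$-run's remaining set is then the $\calD$-run's plus at most one hidden candidate $w$, whose current coin is a fresh $\Bern(q)$.
\item Given the revealed history, the $\calD'$-step is a posterior mixture over the hidden state of subsampled Gaussian steps on neighbors of the $\calD$-step's dataset. The $\calD$-step kernel does not depend on that state.
\item Joint convexity of $(P,Q)\mapsto\int P^\alpha Q^{1-\alpha}$, with equal weights on both sides, gives $\rho$ per step in both directions.
\item The iterated-integral argument from the proof of Lemma~\ref{lem:interleave_rdp} then composes these to $T\rho$.
\end{itemize}

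Your alternative of recomputing all scores from the current iterate would let Lemma~\ref{lem:interleave_rdp} apply verbatim. However, that proves the claim for a modified algorithm, not Algorithm~\ref{alg:dp_sgd_with_defense_ss} as written.
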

\begin{proof}
Theorem~\ref{thm:dp-sgd-rdp-account} proceeds by applying the RDP composition theorem to individual DP-SGD steps, whose RDP is analyzed directly. We apply Lemma~\ref{lem:interleave_rdp} in place of standard RDP composition. Lemma~\ref{lem:induct_neighbor_bound} shows that every time we wish to apply Lemma~\ref{lem:interleave_rdp}, the datasets indeed satisfy the precondition of being neighboring, with one subtlety: the indices of deleted samples (which are effectively replaced by a dummy sample with zero gradient) are not guaranteed to be coupled. However, Algorithm~\ref{alg:dp_sgd_with_defense_ss} remains unchanged in distribution if after every filtering step, we randomly permute the dataset (dummy samples included). We can thus couple the random permutations between runs of Algorithm~\ref{alg:dp_sgd_with_defense_ss} on neighboring datasets, so that the dummy indices are always aligned. This coupling does not affect the privacy proof, as it preserves marginal distributions. 
\end{proof}

\subsection{Variants of DP-SGD}\label{ssec:variant}

In this section, we briefly comment on the applicability of the framework in Section~\ref{ssec:privproof} to variants of DP-SGD that see use in practice, beyond the Poisson-subsampled variant in Algorithm~\ref{alg:dp-sgd-subsample}.

First, consider the variant using randomly-shuffled minibatches (as discussed in Section~\ref{ssec:prelims}, with pseudocode in Algorithm~\ref{alg:dp-sgd-shuffle}). It is straightforward to check that, due to the random permutation at the start of every epoch (Line~\ref{line:permute}), the privacy framework of Section~\ref{ssec:privproof} applies if filtering is performed after any number of gradient steps are taken. This is true whether the privacy accounting is done using the standard (formal) accounting based on RDP of the Gaussian mechanism, or
the (heuristic) ``in-expectation equivalent'' variant of Theorem~\ref{thm:dp-sgd-rdp-account} discussed in Section~\ref{ssec:privproof}. In Algorithm~\ref{alg:dp_sgd_with_defense}, we give pseudocode of one potential implementation of a filtered variant of Algorithm~\ref{alg:dp-sgd-shuffle}.

\begin{algorithm}[ht]
    \caption{$\DPSGDS(\calD, \theta_0, m, \eta, C, \sig, N)$}
    \label{alg:dp-sgd-shuffle}
    \DontPrintSemicolon
\KwIn{Dataset $\calD$, model initialization $\theta_0$, batch size $m$, step size $\eta$, 
clip bound $C$, noise multiplier $\sigma$, epoch count $N$}
\KwOut{Final model parameters $\theta$}
$\theta \leftarrow \theta_0$\;
\For{$t\in [N]$}{
    $\{(x_i, y_i)\}_{i \in [n]} \gets$ randomly permuted ordering of elements of $\calD$\;\label{line:permute}
    $\calB_j \gets \brac{(x_i, y_i)}_{i \in [n] \mid j = \lceil \frac i m\rceil}$ for all $j \in [\lceil \frac{n}{m}\rceil]$\;
    \For{$j \in [\lceil \frac{n}{m}\rceil]$}{
    $\xi \sim \calN\Big(0,\Big(\frac{\sig C}{|\calB_j|}\Big)^2 I\Big)$\;
    $\theta \leftarrow \theta - \eta(
    \frac{1}{|\calB_j|}\sum_{(x_i, y_i)\in\calB_j}\text{clip}(\nabla_\theta \ell(\theta;x_i,y_i), C) + \xi)$\;
    }

}
\Return{$\theta$}
\end{algorithm}

\begin{algorithm}[ht]
\DontPrintSemicolon
\caption{$\FDPSGDS(\calD, \theta_0, m, \eta, C, \sigma, N, f, k, \texttt{scope})$}
\label{alg:dp_sgd_with_defense}
\KwIn{Dataset $\calD$, model initialization $\theta_0$, batch size $m$, step size $\eta$, clip bound $C$, noise multiplier $\sigma$, epoch count $N$, sample signature $f$, selection parameter $k$,  flag $\texttt{scope} \in \{\texttt{global}, \texttt{class}\}$}
\KwOut{Final model parameters $\theta$}
$(\calF, S, \theta) \gets (\emptyset, \{0\}^{|\calD|}, \theta_0)$\;
\For{$t\in [N]$}{
    $\{(x_i, y_i)\}_{i \in [n]} \gets$ randomly permuted ordering of elements of $\calD$\;
    $\calB_j \gets \brac{(x_i, y_i)}_{i \in [n] \mid j = \lceil \frac i m\rceil}$ for all $j \in [\lceil \frac{n}{m}\rceil]$\;
    \For{$j \in [\lceil \frac{n}{m}\rceil]$}{
    \ForEach{$(x_i, y_i) \in \calB_j$}{
    $g_i \gets \begin{cases}
    \clip(\nabla_\theta \ell(\theta; x_i, y_i), C) & i \not\in \calF  \\
    0 & i \in \calF
    \end{cases}$\;\label{line:ascent}
    $S[i] \gets f_\theta(x_i, y_i)$\;\label{line:score}
    }
    $\xi \sim \calN(0,(\frac{\sig C}{|\calB_j|})^2 I)$\;
    $\theta \leftarrow \theta - \eta(
    \frac{1}{|\calB_j|}\sum_{(x_i, y_i)\in\calB_j}g_i + \xi)$\;
    }
    $\calF \gets \calF \cup \SelectTop(S, \calD, \calF, k, \texttt{scope})$\;
}
\Return{$\theta$}

\end{algorithm}

Second, we observe that unless provable privacy accounting methods change substantially, our framework is likely to extend to future (permutation-invariant) variants of DP-SGD that are developed. Indeed, any algorithm that is based on an iterative optimization method, whose individual steps are history-dependent and data-dependent, presumably requires an adaptive composition property. In essentially all existing privacy frameworks, adaptive composition is proven through an argument compatible with Lemma~\ref{lem:interleave_rdp}: after conditioning on the outcome of the first mechanism, the proof no longer requires that the datasets be the same in the second mechanism.

To give an example of this phenomenon, consider the recent algorithm of \cite{ChuaGHKKLMSZ25}. This scheme is also permutation-invariant, and its privacy (as a \emph{random allocation scheme}) is established using RDP and PLD \cite{FeldmanS25, FeldmanS26}. These accounting frameworks (particularly, their proofs of adaptive composition) are compatible with Lemma~\ref{lem:interleave_rdp}, and thus a variant of \cite{ChuaGHKKLMSZ25} using sample signature-based filtering would inherit the privacy analyses of \cite{FeldmanS25, FeldmanS26}.

%% file: private.tex
\section{Privacy-Preserving Empirically Private Learning}\label{sec:private}

In this section, we overview our evaluations of our framework in the hidden state, input space threat model. We also provide an evaluation in the gradient space setting in Section~\ref{ssec:gradient}, and additional experiments in the appendix.
Code for all experiments can be found
\href{https://github.com/pineappleEnthusiast/empirical-privacy-defense}{here}.

\subsection{Experimental and audit setup}\label{ssec:setup}

In this section, we describe the experimental setup used to evaluate our defense. Then, we evaluate our defense framework against a suite of privacy audits. %Our focus in this section is specifically on examining the $\epslb$-utility tradeoffs achieved by our defense, so that we can compare against an existing defense from the MIA literature (see discussion in Section~\ref{ssec:prior}).

%\gautam{We had some of this earlier. Redundant to have it twice? }

%\textbf{Datasets.} We evaluate our defense on three standard benchmarks: MNIST (grayscale handwritten digits, 10 classes, 28$\times$28 images), CIFAR-10 (natural images, 10 classes, 32$\times$32 RGB images), and Purchase100 (tabular customer purchase histories represented as binary feature vectors, 100 classes).

\textbf{Datasets.} 
We evaluate audits and defenses on three standard supervised learning benchmarks spanning both vision and tabular domains. For image classification, we use MNIST~\cite{LeCunBBH98}, a grayscale handwritten digit dataset with 10 classes and 28×28 inputs, and CIFAR-10~\cite{Krizhevsky09}, a natural image dataset with 10 classes and 32×32 RGB inputs. We also use Purchase100~\cite{ShokriSSS17}, a tabular dataset of customer purchase histories. Each example is represented as a binary feature vector indicating the presence/absence of purchased items, with 100 class labels.

\textbf{Initializations.} 
We reuse the same model initialization across all training runs for each experiment. Fixed initialization reduces run-to-run variability, which makes it easier to attribute differences in the final model to the data (e.g., whether a canary was included) rather than to randomness in initialization. We use Xavier initialization~\cite{GlorotB10} for CNN-based models and Kaiming initialization~\cite{HeZRS15} for WideResNet models~\cite{ZagoruykoK16}. All models are trained from scratch.

\textbf{Model architectures.}
For MNIST, we use a convolutional neural network (CNN). For CIFAR-10, we test both a CNN and a WideResNet architecture to demonstrate robustness across different model families.\footnote{Our WideResNet configuration is inspired by the WRN-16-4 model of~\cite{DeBHSB22}. Their results are near SOTA, in particular, achieving 79.5\% test accuracy for CIFAR-10 at $\varepsilon=8$. Our pipeline is slightly modified from theirs (e.g., smaller batch size, no augmentation multiplicity), largely due to computational constraints. Consequently, our model achieves a slightly lower accuracy for the same setting, but we anticipate qualitatively similar findings would hold if our audit were run in their exact setup.} For Purchase100, we use a 3-layer multilayer perceptron (MLP).

\textbf{Training configuration.}
For all experiments, unless otherwise specified, we use privacy parameters $\eps$ = 10 and $\delta$ = $10^{-5}$ for DP-SGD and our composed defense. We train for $N = 100$ epochs using Poisson subsampling. We perform hyperparameter grid search to identify learning rates that maximize test accuracy: $\eta$ = 3 for MNIST and CIFAR-10, and $\eta$ = 10 for Purchase100. To disentangle improvements due to the proposed defense from improvements due to tuning, we use the same hyperparameter configurations when auditing with and without our defense.

\textbf{Defense configuration.}
For all experiments, we run our defense using the $L_\infty$ norm as the scoring function (Algorithm~\ref{alg:dp_sgd_with_defense_ss}, Line~\ref{line:score_poisson}) unless otherwise specified. Ablation studies (Appendix~\ref{app:score}) show that alternative scoring functions perform comparably. We use per-class (local) filtering rather than global filtering to align with prior work \cite{TranLM18}. We find that for blank canaries, global filtering also works if we change the scoring function (Appendix~\ref{app:global_filter}). For MNIST and CIFAR10, we filter 5 samples from each class per epoch, amounting to dropping $8.3\%$ and $10\%$ of each training set, respectively. For Purchase100, we filter 1 sample from each class per epoch, amounting to dropping $6.5\%$ of the training set. The choice of $\#$samples to drop per class per epoch was arbitrary, as long as the total $\#$samples dropped was reasonable ($\leq 10\%$).

We find that for blank canaries, our defense is robust to filtering frequency (Appendix~\ref{app:filter_frequency_ablation}) and ``bandwidth'' (the number of samples discarded, Appendix~\ref{app:bandwidth_ablation}).

\textbf{Augmentation multiplicity.}
Practitioners sometimes generate multiple augmented versions of the same sample within a single batch and average their gradients before clipping, a technique known as augmentation multiplicity. This approach is used to improve the privacy-utility tradeoff by reducing the variance of the gradient estimate for each sample. While we do not employ augmentation multiplicity in all of our experiments, our evaluation of our defense includes this setting (cf.\ Table~\ref{tab:private_tradeoffs_acc}) to ensure it remains effective against this commonly-used privacy-enhancing modification.

\textbf{Audit models and evaluation.}
We train $k$ = 400 models for each audit: 200 with the canary and 200 without. While some prior works train more models, we find that with the heuristics in Appendix~\ref{app:lbmethod} applied, 400 shadow models offers a sufficient scale to observe a substantial gap in attack success between filtered and unfiltered settings. Our alternative $\epslb$ reporting in Appendix~\ref{app:other_eps} shows that even without these heuristics, a smaller but noticeable gap persists. As validation, we also provide an MNIST audit with 1000 models (Appendix~\ref{app:trial_count}), which shows limited improvement in audit tightness. Our primary focus is the relative gap in empirical privacy with versus without the filter, so we adopt the same methodology for auditing all optimizers.

\textbf{Compute resources.}
All experiments were run on NVIDIA H200 GPUs with 96GB HBM3 memory. Individual audit jobs used between 5 and 20 GPUs and ran for 5–30 hours, with resource requirements scaling according to model architecture and dataset size. Specific job configurations (number of GPUs, hours per job, batch scripts) are available in the scripts directory of our code repository.

\textbf{Multi-canary audits.} 
For attacks involving multiple canaries, we compute each model's score as the maximum loss over all canary samples. Note that the $\epsilon_{\text{lb}}$ value reported for these attacks is a measure of $k$-group privacy, where $k$ denotes the number of canaries; while it is reasonable to expect the $k$-group privacy parameter to scale as $\approx k \times$ the $1$-group privacy parameter, one cannot formally convert $k$-group privacy into $1$-group privacy in this way.

\subsection{Input space attacks}\label{ssec:input}

Input space attacks operate at the data level by injecting carefully crafted training samples into the dataset. 
We first evaluate our defense on a singular canary across varying privacy budgets ($\epsilon \in \{2, 4, 6, 8, 10\}$), datasets, model architectures, and training configurations (``AM4'' refers to training with augmentation multiplicity using 4 augmentations). For MNIST and CIFAR-10, we use a blank (zero) canary; for Purchase100, we use a random dense vector canary, as a blank input to an MLP propagates no signal through the network (the hidden states are driven entirely by the biases), and is therefore not auditable.

We find that our defense generally reduces $\epslb$ with minimal impact on model utility. Notably, our defense reduced the $\epslb$ for MNIST on a CNN and CIFAR-10 on a WRN-16 (the model with the test utility on a more “complex” dataset) to $\approx$ 0 for almost every setting of $\epsub \in \{2, 4, 6, 8, 10\}$ even after applying both heuristics in Appendix~\ref{app:lbmethod}. Moreover, we found that alternative scoring rules detailed in Appendix~\ref{app:score}, including prediction entropy and gradient kurtosis, substantially reduce the auditable $\epslb$ for several of the other dataset / model combinations. 

Our defense may not universally lower $\epslb$; MNIST and CIFAR-10 on WRN-16 at $\epsub = 2$ and Purchase at $\epsub = 6$ observe an increase in $\epslb$ post-defense. We believe that this apparent discrepancy may potentially arise from finite sample error. In particular, the $90\%$ two-sided confidence intervals overlap for MNIST at $\epsub = 2$ and  Purchase at $\epsub = 6$, so this could just be an artifact of the number of models used. The $90\%$ confidence interval for CIFAR-10 at $\epsub = 2$ without the defense was not very informative, as it was $(0.0, 0.0)$, but the $\epslb$ with the defense was also negligible in this case. Notably, the reversal cases identified either had a small $\epslb$ to begin with, or were on the Purchase dataset, previously remarked on as potentially ill-suited for privacy audits in \cite{CarliniCNSTT21}. 

While the accuracy gaps are generally contained to $\pm 1$, notice the anomalous and relatively large accuracy gap for the Purchase dataset post-defense at $\epsub = 10$. In general, we believe that our finding that the accuracy was minimally affected is fairly robust: on all 24 other tasks, both training and test accuracy dropped by $1.5\%$ or less. For the specific Purchase at $\epsub = 10$ task, we believe the steeper accuracy drop may be due to the simplicity of the dataset (as highlighted in \cite{CarliniCNSTT21}), or due to use of a high $\epsub$ level causing more sensitivity to dropping a small amount of data. 

\begin{figure}[htbp]
    \centering
    \includegraphics[width=\linewidth]{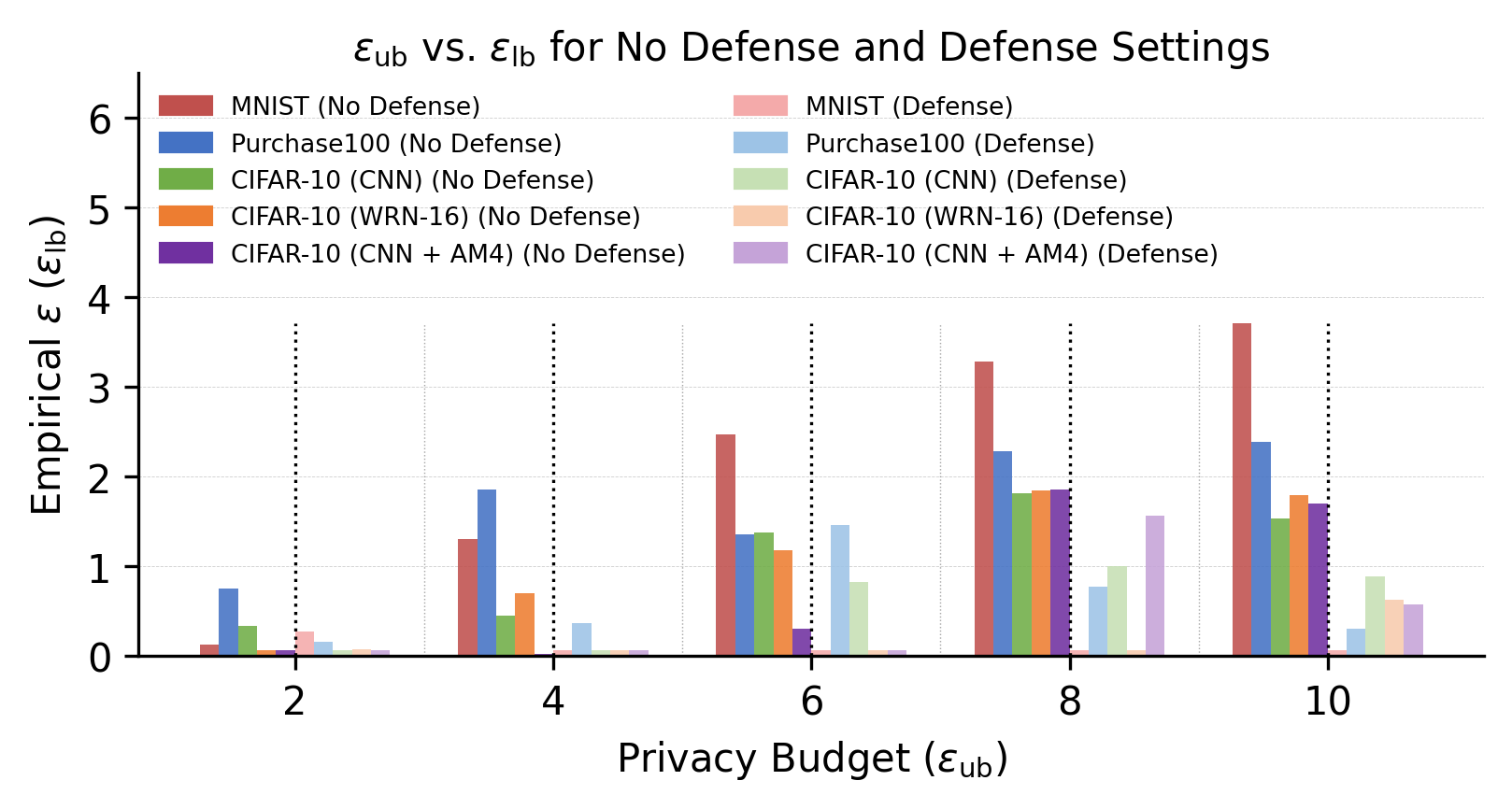}
    \caption{$\epslb$ across varying $\epsub$ budgets. }
    \label{fig:new_private_tradeoff_curves}
\end{figure}

\begin{table}[h]
    \centering
    \small
    \begin{tabular}{llcc}
    \toprule
    \textbf{Dataset (Model)} & \textbf{Theoretical }$\boldsymbol{\varepsilon}$ & \textbf{Train Acc (\%)} & \textbf{Test Acc (\%)} \\
    \midrule
    MNIST (CNN) & 2  & 97.47/96.36 & 97.40/96.57 \\
    MNIST (CNN) & 4  & 98.42/97.23 & 98.37/97.52 \\
    MNIST (CNN) & 6  & 98.65/97.44 & 98.53/97.74 \\
    MNIST (CNN) & 8  & 98.75/97.54 & 98.61/97.86 \\
    MNIST (CNN) & 10 & 98.80/97.59 & 98.64/97.89 \\
    \midrule
    Purchase (MLP) & 2  & 77.82/76.79 & 74.51/73.19 \\
    Purchase (MLP) & 4  & 87.74/87.74 & 84.09/84.09 \\
    Purchase (MLP) & 6  & 89.96/90.08 & 86.92/86.91 \\
    Purchase (MLP) & 8  & 90.40/90.40 & 87.12/87.12 \\
    Purchase (MLP) & 10 & 90.57/85.98 & 87.69/83.11 \\
    \midrule
    CIFAR-10 (CNN) & 2  & 32.89/32.77 & 32.81/32.87 \\
    CIFAR-10 (CNN) & 4  & 61.28/60.26 & 59.33/58.57 \\
    CIFAR-10 (CNN) & 6  & 70.14/68.59 & 66.93/65.55 \\
    CIFAR-10 (CNN) & 8  & 73.97/72.91 & 69.56/69.05 \\
    CIFAR-10 (CNN) & 10 & 76.15/75.35 & 71.18/70.85 \\
    \midrule
    CIFAR-10 (WideResNet-16) & 2  & 53.14/54.07 & 52.53/53.17 \\
    CIFAR-10 (WideResNet-16) & 4  & 62.86/63.60 & 61.75/62.42 \\
    CIFAR-10 (WideResNet-16) & 6  & 68.40/68.63 & 66.57/67.07 \\
    CIFAR-10 (WideResNet-16) & 8  & 72.40/70.99 & 70.13/68.75 \\
    CIFAR-10 (WideResNet-16) & 10 & 73.87/72.97 & 71.36/70.70 \\
    \midrule
    CIFAR-10 (CNN + AM4) & 2  & 31.28/31.45 & 31.72/31.45 \\
    CIFAR-10 (CNN + AM4) & 4  & 57.05/56.36 & 56.36/56.36 \\
    CIFAR-10 (CNN + AM4) & 6  & 66.07/65.26 & 65.24/64.46 \\
    CIFAR-10 (CNN + AM4) & 8  & 69.66/69.16 & 68.36/68.36 \\
    CIFAR-10 (CNN + AM4) & 10 & 71.52/70.94 & 70.49/70.03 \\
    \bottomrule
    \end{tabular}
    \caption{Defense performance on private models across privacy budgets $\varepsilon\in\{2,4,6,8,10\}$. Training and testing accuracies are reported in the table as \textit{No Defense/Defense}.}
    \label{tab:private_tradeoffs_acc}
\end{table}

We also evaluate our defense against harder input space constructions that better reflect membership inference threats. We test three attack variants on CIFAR-10 and MNIST (Table~\ref{tab:other_input_space_attacks}). First, we used adversarial FGSM examples, which iteratively apply small bounded perturbations to a sample until the model predicts a target class \cite{goodfellow2014explaining}. Second, we tested mislabeled in-distribution images, where random training samples are assigned incorrect labels uniformly at random from the remaining classes. Third, we implemented ClipBKD, which identifies the direction of least variance in the training data distribution via singular value decomposition and constructs a canary along the smallest singular vector \cite{JagielskiUO20}. 
Across all attacks, our defense reduced empirical privacy loss.

We note that in this section, we only presented results on single-canary input space audits; however, see Appendix~\ref{app:alt_audits} for experiments on input space canaries under multi-canary audits.

\begin{table}[h]
    \centering
    \small
    \begin{tabular}{lllccc}
        \toprule
        \textbf{Attack} & \textbf{Dataset (Model)} & \textbf{Defense Setting} & \textbf{Train Acc (\%)} & \textbf{Test Acc (\%)} & \textbf{Empirical $\varepsilon$} \\
        \midrule
        FGSM       & CIFAR-10 (CNN) & No Defense & 76.06 & 70.75 & 1.14 \\
        FGSM       & CIFAR-10 (CNN) & Defense    & 74.95 & 70.14 & 0.07 \\
        FGSM       & MNIST (CNN)    & No Defense & 98.78 & 98.57 & 3.03 \\
        FGSM       & MNIST (CNN)    & Defense    & 97.58 & 97.78 & 0 \\
        \midrule
        Mislabeled & CIFAR-10 (CNN) & No Defense & 76.31 & 71.37 & 0.47 \\
        Mislabeled & CIFAR-10 (CNN) & Defense    & 75.05 & 70.61 & 0 \\
        Mislabeled & MNIST (CNN)    & No Defense & 98.80 & 98.64 & 2.45 \\
        Mislabeled & MNIST (CNN)    & Defense    & 97.59 & 97.89 & 0.69 \\
        \midrule
        ClipBKD    & CIFAR-10 (CNN) & No Defense & 76.13 & 71.02 & 2.93 \\
        ClipBKD    & CIFAR-10 (CNN) & Defense    & 75.74 & 70.86 & 0 \\
        ClipBKD    & MNIST (CNN)    & No Defense & 98.88 & 98.60 & 3.46 \\
        ClipBKD    & MNIST (CNN)    & Defense    & 97.59 & 97.89 & 0 \\
        \bottomrule
    \end{tabular}
    \caption{Empirical privacy loss under non-blank input space attacks for $\epsub=10$.}
    \label{tab:other_input_space_attacks}
\end{table}

\subsection{Label-only attack}

Prior work has proposed defenses without provable privacy guarantees against membership inference attacks, and follow-up work has proposed targeted attacks to break them. In this section, we evaluate one such attack proposed by \cite{aerni2024evaluations}, which works in the hidden state, input space setting, and is thus consistent with our threat model. This adaptive, label-only attack is known to break several previously-proposed empirical defenses, including HAMP \cite{ChenP24}. 

HAMP combines a training-time modification with a test-time output defense. During training, the model is trained on soft labels rather than hard labels: the target class is assigned probability determined by an entropy threshold, and the remaining probability mass is spread uniformly over the other classes. The training loss combines a KL-divergence term (between the model's predicted class distribution and the soft label distribution) with a term that rewards higher entropy of the model's predicted class distribution, weighted by a tunable regularization strength. Training uses a batch size of 64, weight decay of 1e-5, and a learning rate of 0.5 with a step schedule (decayed by 10x at epochs 60, 90, and 150), for 200 epochs. At test time, HAMP replaces the model's output logit vector with a logit vector obtained by querying the model on random noise images, sorting those logits, and reordering them to match the rank order of the real prediction’s logits. This preserves the predicted label while destroying all information about the model's confidence.

Follow-up work \cite{aerni2024evaluations} proposes a label-only attack to break defenses, like HAMP, that rely on confidence masking. For a target sample, the attack constructs an 18-dimensional binary feature vector by querying the model on 18 fixed augmentations of the sample (all combinations of horizontal flip {none, flip} and pixel shifts $\{0, -4, +4\}$ along each axis), recording whether the model's predicted label matches the true label under each augmentation. A logistic regression classifier (l2 penalty, C = 1.0) is trained in a leave-one-out fashion across 64 shadow models, with 500 in-distribution mislabeled canary samples: for each target model, the classifier is fit on the binary feature vectors and membership labels of all other shadow models, and evaluated on the held-out target model's vector. The classifier's predicted probability of membership serves as the attack's membership score.
Attack success is measured by \cite{aerni2024evaluations} as TPR at 0.1\% FPR.

We implemented both HAMP and the label-only attack in our codebase and reproduced the results of \cite{aerni2024evaluations}: the label-only attack achieves a TPR of 28.5\% at 0.1\% FPR against HAMP (evaluated on mislabeled canaries), showing that HAMP's confidence-masking can be easily circumvented by a label-only signal, since the defense only erases confidence information while leaving the predicted label (and hence the model's tendency to correctly classify memorized canary samples) fully intact.

We then ran our filtering defense, in the non-private setting, against the same label-only attack, using the same number of canaries (500), shadow models (64), and learning rate (0.5) as the HAMP configuration. We omit HAMP's momentum and weight decay: weight decay causes numerical instability in our custom WideResNet implementation (which replaces batch norm with group norm to support vmap), and we found that omitting momentum yields higher model utility for our defense. 

Under this setup, the attack achieves a TPR of only 0.95\% at 0.1\% FPR against our defense. This suggests that, unlike HAMP, our defense's protection is not reliant on obscuring confidence information that a label-only attack could otherwise exploit; instead, filtering removes the underlying vulnerable canary samples themselves before they are memorized.

\subsection{Defense-aware input space attack}\label{ssec:defense_aware}

Beyond evaluating previously-proposed input space canaries in the literature (in Section~\ref{ssec:input}), here we detail an attempt to learn a defense-aware canary that dodges our filter while remaining auditable. We formulate this as a bi-level optimization problem, where the inner loop simulates the training procedure and the outer loop updates the canary.

Let $x$ denote a candidate canary; we want to map $x$ to $y_w$, a target incorrect label. For a given $x$, the inner loop trains the model using DP-SGD and our gradient-norm filtering defense. Let $K(x)$ denote the epoch at which the defense detects $x$, with $K(x) = -1$ if the canary is never detected. When the canary is detected on epoch $K(x)$, let $\theta_{K(x)}$ denote the model parameters used to compute the defense score on that epoch, and let $\tau_{K(x)}$ denote the corresponding detection threshold i.e. the $k$-th largest gradient score in that epoch. 

\begin{equation}
\mathcal{L}(x)
= \ell\big(f_{\theta_0}(x),\, y_w\big)
+ \lambda \cdot
\begin{cases}
\max\left(0,\;
\left\|\nabla_{\theta} \ell\big(f_{\theta_{K(x)}}(x),\, y_w\big)\right\|_{\infty}
- \tau_{K(x)}\right), & \text{if } K(x) \neq -1, \\
0, & \text{otherwise}
\end{cases}
\label{eq:defense_aware_opt}
\end{equation}

The first term of \eqref{eq:defense_aware_opt} pushes the canary to be compatible with its assigned label under the initial model. We use this initial-model loss as a tractable proxy for auditability because directly optimizing the canary’s loss after training would require differentiating through the entire DP-SGD training trajectory. Using the loss at the detection epoch suffers from the same problem and would also create a dependency on $K(x)$. The model initialization, $\theta_0$, is set independent of the canary, so this offers a simple differentiable objective with respect to $x$. 

The second term of \eqref{eq:defense_aware_opt} penalizes the canary when it has a gradient above the detection threshold, with a penalty proportional to its distance above the detection threshold. 

 The $\lambda$ parameter controls the trade-off between auditability and defense evasion.

The canary is updated using an FGSM-style step. For our experiments, we optimize a canary chosen from class 1 and assigned a target label 0. We use 50 outer optimization steps with a fixed step size of 1e-2 and 3 independent inner training runs per outer step. 

Table \ref{tab:defense_aware_opt} shows that across various $\lambda$, while we increase the average epoch on which the canary is dropped, we cannot push the drop epoch past the end of training (100 epochs). We run a full audit on the learned canary with the largest drop epoch, but we find that the initial canary (pre-optimization) yields a higher $\epslb$ than our learned canary (2.453 vs 1.962). Post defense, the initial canary’s $\epslb$ drops to 0.65, while our learned canary’s $\epslb$ drops to 0.

\begin{table}[htbp]
\centering
\begin{tabular}{c c}
\hline
$\lambda$ & Average drop epoch \\
\hline
0.0   & 5.7  \\
0.0   & 6.0  \\
0.1   & 16.7 \\
0.5   & 14.0 \\
1.0   & 15.0 \\
5.0   & 13.0 \\
10.0  & 13.0 \\
100.0 & 14.0 \\
\hline
\end{tabular}
\caption{Defense-aware canary optimization attempts across different values of $\lambda$.}
\label{tab:defense_aware_opt}
\end{table}

\subsection{Gradient space attacks}\label{ssec:gradient}

In this section, we consider defense-aware canaries that operate under a gradient space threat model. Our goal is to qualitatively examine the effectiveness of our filtered defense when exposed to a stronger, not always realizable, threat model for canary construction.

\textbf{Gradient space attacks.}
For gradient space attacks, we follow the construction of \cite{CebereBP25}, where the attacker uses a holdout model to construct canary gradients as one-hot vectors in the direction of the largest parameter update: $\arg\max_i (\Delta \theta_T - \Delta \theta_0)$, where $\theta_T$ is the final model and $\theta_0$ is the initialization. The auditor then measures the parameter difference $\Delta\theta = \theta_T - \theta_0$ at index $i$, corresponding to the one-hot index of the canary gradient vector.

\textbf{Gradient bandwidth attack.}
In this attack, we allow the attacker to observe intermediate gradient distributions and defense hyperparameters over the course of training for some holdout model. The attacker first tracks $s_t$, the 6th largest gradient norm within class  0 at each epoch $t$. Then, the attacker constructs 500 copies of a one-hot gradient space canary, each with $L_\infty$ norm equal to $\min_t s_t$ (the minimum $6^{\text{th}}$ largest gradient norm observed across all training epochs). This construction exemplifies an attacker with the specific goal of dodging our filtering bandwidth in the gradient space, full state threat model. If we choose to filter 5 samples from each class on every epoch, the gradient space canaries will remain  just below the filtering threshold throughout training. Under this attack, we obtained (group privacy) $\epslb = 27.26$ both with and without the defense. We emphasize that this $\epslb$ is a \emph{group privacy} statistic, so it does not contradict the $\epsub$ setting.

\textbf{Gradient canceling attack.} We were able to break our defense more dramatically by using a more complex \emph{gradient canceling} attack~\cite{KohSL22,LuKY23}, designed with knowledge of our defense's hyperparameter choices. 
This attack exploits our filter by constructing two groups of gradient space canaries with norms that sum to zero. Group A contains 500 canaries with $L_\infty$ norm 1, while Group B contains 600 canaries with $L_\infty$ norm $\frac 5 6$ in the opposite direction. Without our defense, the two groups cancel each other out during training, yielding an $\epslb$ near zero. However, when the defense is active and the defense hyperparameters are chosen so that we only filter Group A, it inadvertently exposes Group B, resulting in non-zero privacy loss. We evaluate privacy using the group $\epslb$  (Table~\ref{tab:gradient_canceling}).

\begin{table}[h]
    \centering
    \small
    \begin{tabular}{lc}
        \toprule
        Defense Setting & Group-Privacy $\varepsilon_{\mathrm{lb}}$ \\
        \midrule
        No Defense & 4.43 \\
        Defense    & 27.26 \\
        \bottomrule
    \end{tabular}
    \caption{Group-privacy $\varepsilon_{\mathrm{lb}}$ under the gradient canceling attack on MNIST.}
    \label{tab:gradient_canceling}
\end{table}

This attack assumes a stronger adversary with knowledge of the defense hyperparameters to choose the canary group sizes. For example, if we instead construct Group A with 10 canaries and Group B with 5 canaries, even if we only allow filtering 1 canary across the whole dataset per epoch, we can still discard both canary groups from the training data, yielding $\epslb = 0$.

We also attempt to reproduce this gradient canceling attack within our intended hidden state, input space threat model by leveraging the structure of a single-layer MLP, where sample gradients are aligned with sample features by the chain rule. This suggests that, in principle, one could construct input space canaries whose gradients mimic the desired cancellation behavior; the key challenge is to simulate the desired magnitude, not just the direction, of a gradient space attack in input space.

To this end, we explore several input space constructions for Group A: (i) one-hot vectors with $L_\infty$ norm exceeding the maximum per-sample gradient norm observed across all epochs on a holdout set, (ii) dense random vectors with $L_2$ norm similarly exceeding this maximum, and (iii) adversarially optimized inputs obtained via FGSM to explicitly induce large gradients.
However, these input space constructions fail to replicate the gradient space attack. The core issue is that we cannot reliably control the per-sample gradient magnitudes across training epochs. As a result, only a small fraction of canaries ($1\%$) are consistently filtered, which is insufficient to disrupt the cancellation effect.

%% file: lbmethod.tex
\section{Empirical Privacy Reporting Details}\label{app:lbmethod}

\subsection{Heuristic conventions and their (potential) pitfalls}\label{ssec:heuristics}

In this section, we detail two practices often used in the privacy auditing literature, and point out when their guarantees formally apply (contrasted with when they should be treated as a heuristic). While this discussion is sometimes present in prior works, we believe making it explicit, with specific counterexamples to the validity of heuristics, will prove valuable to future auditing research, as well as practitioners reasoning about the guarantees of these reporting metrics.

Following Section~\ref{ssec:measure_lb}, the formal way to certify a privacy lower bound on an algorithm $\alg$ is as follows. First, a test $\phi$ however the auditor desires (e.g., the most effective loss threshold is selected using a set of models, which are discarded). Next, fresh holdout models are trained using $\alg$ and evaluated according to the test $\phi$, producing an empirical FPR-FNR tradeoff $(\alpha, \beta)$. Then, a Clopper-Pearson correction with failure probability $\gamma$ is applied to obtain $(\hat{\alpha}, \hat{\beta})$ lying above the $f$-DP curve with probability $1 - \gamma$. Finally, \eqref{eq:eps_convert} converts this point into an $\epslb$ estimate.
The two main heuristics we discuss in this section are \emph{multiple hypothesis testing} and \emph{GDP extrapolation}. We describe these two heuristics in turn, and then examine their potential pitfalls.

\textbf{Multiple hypothesis testing.} Algorithm~\ref{alg:fdp_audit} provides pseudocode for the \emph{multiple hypothesis testing} heuristic, where the same models are used for selecting the test $\phi$, and evaluating it to produce a point $(\halpha, \hbeta)$ above the $f$-DP curve. Observe that the same $k$ i.i.d.\ copies are used in all of the tests $\{\phi_i\}_{i \in [m]}$, and then the best $\epslb$ estimate amongst these tests is reported using \eqref{eq:eps_convert}. It is formally incorrect to say that the selected pair $(\halpha_i, \hbeta_i)$ lies above the true $f$-DP curve with probability $1 - \gamma$, due to the potential for false discovery. The formal fix is to either use a Bonferroni correction (union bound) and set the coverage of each Clopper-Pearson correction in Algorithm~\ref{alg:fdp_audit} to $1 - \frac \gamma m$, or to apply a selected test $\phi_i$ to holdout samples (with Clopper-Pearson coverage $1 - \gamma$). Indeed, the original work of \cite{JagielskiUO20} prescribes using the holdout strategy to ensure statistical validity.

\begin{algorithm}
\DontPrintSemicolon
\caption{$\elb(\alg, \{\calD_i, \calD'_i, \phi_i\}_{i \in [m]}, \delta, \gamma, k)$}
\label{alg:fdp_audit}
\textbf{Input:} Randomized algorithm $\alg: \calX^n \to \Omega$, neighboring datasets $\calD, \calD' \in \calS^*$ and $m$ tests $\phi_i: \Omega \to [0, 1]$, $\delta \in (0, 1)$, $\gamma \in (0, 1)$, $k \in \N$ \;
\textbf{Output:} Empirical privacy lower bound $\epslb(\delta)$\;
%$\epslb \gets 0$\;
$\{\omega_j\}_{j \in [k]} \simiid \alg(\calD)$\;
$\{\omega'_j\}_{j \in [k]} \simiid \alg(\calD')$\;
\For{$i \in [m]$}{
$\alpha_i^{\text{init}} \gets \frac 1 k \sum_{j \in [k]} \phi_i(\omega_j)$\;
$\beta_i^{\text{init}} \gets 1 - \frac 1 k \sum_{j \in [k]} \phi_i(\omega'_j)$\;
$(\hat{\alpha}_i, \hat{\beta}_i) \gets (\Beta^{-1}_{k\alpha_i^{\textup{init}} + 1, k(1 - \alpha_i^{\textup{init}})}(1 - \gamma), \Beta^{-1}_{k\beta_i^{\textup{init}} + 1, k(1 - \beta_i^{\textup{init}})}(1 - \gamma))$ \tcp*{Clopper-Pearson, $\Beta_{a, b}$ is CDF of beta distribution with parameters $a, b$}
}
\codeReturn $\epslb(\delta) \gets \max_{i: \hat{\alpha}_i > 0} \log(\frac{1 - \hat{\beta}_i - \delta}{\hat{\alpha}_i})$\;\label{line:epslb_compute}
\end{algorithm}

Subsequent works in privacy auditing are less careful on this matter, with many (e.g., \cite{MaddockSS23, ZanellaBeguelinWTSRPNKJ23, NasrHSBTJCT23, AnnamalaiC24, CebereBP25}) utilizing a variant of the multiple hypothesis testing method in Algorithm~\ref{alg:fdp_audit}, \emph{without} explicitly using a Bonferroni correction or holdouts. Indeed, Section 5.2 of \cite{NasrHSBTJCT23} explicitly flags this issue, where they state that selecting an audit in this way is ``technically not valid,'' but that ``it has become common'' to report lower bounds in this way. 

Our experiments (cf.\ Appendix~\ref{app:other_eps}) suggest that the Bonferroni correction yields substantially smaller (often negligible) $\epslb$ estimates. On the other hand, the holdout method still results in nontrivial formal $\epslb$ estimates when auditing models trained with DP-SGD, although the tightness of these estimates often worsens by a factor of $3$-$10 \times$. We thus caution against taking reported $\epslb$ wholly at face value if multiple discovery is not accounted for, at least without ablations justifying the heuristic. For completeness, we report estimated $\epslb$ both with and without multiple discovery, as trends between our algorithms and DP-SGD remain consistent across reporting methods.

Another subtlety regarding multiple discovery that arises in the practice of privacy auditing (and private ML more broadly) is \emph{hyperparameter selection}, see e.g., \cite{MohapatraSHKT22,PapernotS22,Ponomareva_2023}. To circumvent this issue, all of our hyperparameter selection is explicitly performed using holdout models that are discarded before selecting our audits and reporting our $\epslb$ estimates.

Finally, we make one observation that is implicit in prior works using Algorithm~\ref{alg:fdp_audit}, but which has not received explicit discussion to our knowledge.
A reasonable modification to Line~\ref{line:epslb_compute} of Algorithm~\ref{alg:fdp_audit} uses Lemma~\ref{lem:tradeoff}, i.e., it first computes the largest convex function $f$ below the points $\{(\hat{\alpha_i}, \beta_i)\}_{i \in [m]}$, before applying the formula \eqref{eq:eps_convert}. This modification could in principle yield tighter audits, as it leverages the ability to use points on the $f$-DP curve outside the $\{(\halpha_i, \hbeta_i)\}_{i \in [m]}$. As shown in Lemma~\ref{lem:no_convexify}, this convexification step in fact does not change the computed $\epslb$.

\begin{lemma}\label{lem:no_convexify}
Let
$\mathcal S=\{(\alpha_i,\beta_i)\}_{i\in [n]} \subset [0,1]^2$
have distinct first coordinates, and let $f$ be the lower convex envelope of
\(\mathcal S\) (the pointwise largest convex function with
$f(\alpha_i)\le \beta_i$ for all $i\in[n]$).\footnote{The lower convex envelope is uniquely defined, see Exercise 3.30 of \cite{BoydV04}.} Also, assume that $\{(0, 1), (1, 0)\} \subset \calS$.
Then for all $\delta \in (0, 1)$,
\[
\sup_{x\in(0,1]} \log\Par{\frac{1-f(x)-\delta}{x}}
=
\max_{i:\alpha_i>0}
\log\Par{\frac{1-\beta_i-\delta}{\alpha_i}},
\]
with the convention that the logarithm of a nonpositive number is $-\infty$.
\end{lemma}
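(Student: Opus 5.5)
We prove the two inequalities separately; the direction ``$\ge$'' is immediate. Since $f$ is the lower convex envelope, $f(\alpha_i) \le \beta_i$ for every $i$. So for each $i$ with $\alpha_i > 0$, taking $x = \alpha_i \in (0,1]$ gives
\[\log\Par{\frac{1 - f(\alpha_i) - \delta}{\alpha_i}} \ge \log\Par{\frac{1 - \beta_i - \delta}{\alpha_i}}.\]
This holds under the $-\infty$ convention, because $\log$ is monotone on positive arguments. Hence the supremum dominates the maximum.

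For ``$\le$'', we first describe $f$ explicitly. The envelope is supported on $[\min_i \alpha_i, \max_i \alpha_i] = [0,1]$, since $(0,1),(1,0) \in \calS$. On that interval it is piecewise linear, with breakpoints among the $\alpha_i$. Concretely, for each $x \in (0,1]$ there are points $(\alpha_a,\beta_a), (\alpha_b,\beta_b) \in \calS$ with $\alpha_a \le x \le \alpha_b$ and $\lambda \in [0,1]$ such that
\[x = \lambda\alpha_a + (1-\lambda)\alpha_b, \qquad f(x) = \lambda\beta_a + (1-\lambda)\beta_b.\]
This holds because the lower convex envelope of finitely many points is the lower boundary of their convex hull. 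Equivalently, $f(x) = \min\{\sum_i \lambda_i\beta_i : \sum_i \lambda_i\alpha_i = x,\ \lambda \in \Delta_n\}$, and a minimizer can be taken to be supported on at most two points by Carathéodory, i.e.\ a vertex of this one-constraint LP.

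Now fix $x \in (0,1]$ and set $g(x) \defeq 1 - f(x) - \delta$. If $g(x) \le 0$ there is nothing to prove, since the left side is then $-\infty$. Otherwise write $c_j \defeq 1 - \beta_j - \delta$, so that $g(x) = \lambda c_a + (1-\lambda)c_b$ and $x = \lambda\alpha_a + (1-\lambda)\alpha_b$. The core step is the \emph{mediant inequality}. Let $M \defeq \max_{j : \alpha_j > 0} \frac{c_j}{\alpha_j}$. Then
\[c_j \le M\alpha_j \quad \text{whenever } \alpha_j > 0.\]
For the point with $\alpha_j = 0$, which is $(0,1)$ by distinctness of first coordinates, we have $c_j = -\delta < 0 = M\alpha_j$. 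So $c_j \le M\alpha_j$ for both endpoints, whatever the sign of $M$. Combining,
\[g(x) \le M\Par{\lambda\alpha_a + (1-\lambda)\alpha_b} = Mx, \qquad\text{so}\qquad \frac{g(x)}{x} \le M.\]
Since $g(x) > 0$, this forces $M > 0$, and taking logs gives $\log\frac{g(x)}{x} \le \log M$, which equals the right side of the lemma. Taking the supremum over $x$ finishes the proof.

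The main obstacle is justifying the two-point representation of $f$ cleanly. The assumption $(0,1),(1,0) \in \calS$ ensures that every $x \in (0,1]$ lies in the hull's projection, so $f(x)$ is finite there. The endpoint $(0,1)$ is the delicate case: there $\alpha_j = 0$, so the ratio is undefined, and it is handled separately by the sign observation $c_j = -\delta < 0$.
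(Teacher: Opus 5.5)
Your proof is correct, but it reaches the ``$\le$'' direction by a different route from the paper.

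The paper's route has two steps. First, it uses the piecewise-affine structure of $f$, with breakpoints only at tight points $f(\alpha_i)=\beta_i$ (Rockafellar, Theorem 19.1). Second, on each affine piece $f(x)=ax+b$ it differentiates $\psi(x)=\log\frac{1-b-\delta-ax}{x}$, finds that $\psi'$ has a fixed sign, and concludes that the supremum sits at a breakpoint.

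Your route writes $f(x)$ as an optimal convex combination of sample points and applies the mediant inequality. Because $c_j\le M\alpha_j$ for every $j$, any convex combination satisfies $\sum_j\lambda_j c_j\le M\sum_j\lambda_j\alpha_j$. This captures the same linear-fractional monotonicity without calculus, and without needing to know that breakpoints are tight.

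Your version also has a concrete advantage: it handles the degenerate point $(0,1)$ explicitly through $c_j=-\delta<0$. In the paper's argument, the first piece $[0,\alpha_j]$ has left endpoint $0$, which is outside the domain of $\psi$ and excluded from the right-hand maximum. The argument therefore silently relies on $b=f(0)=1$ there, which makes $\psi$ increasing, so the supremum is not approached as $x\to0^+$.

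You can streamline your proof further. The reduction to two support points, which you flag as the main obstacle, is unnecessary: your inequality works verbatim for any $\lambda\in\Delta_n$ attaining the minimum. In fact you need no description of $f$ at all. The affine function $\ell(x)\defeq 1-\delta-Mx$ is convex and satisfies $\ell(\alpha_j)\le\beta_j$ for all $j$; that condition is exactly $c_j\le M\alpha_j$. By maximality of the envelope, $f\ge\ell$, which gives $1-f(x)-\delta\le Mx$ for every $x\in(0,1]$ in one line.
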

\begin{proof}
Throughout the proof, fix some $\delta \in (0, 1)$ and define 
\[\psi(x) \defeq \log\Par{\frac{1 - f(x) - \delta}{x}},\quad M_1 \defeq \sup_{x \in (0, 1]}\psi(x),\quad M_2 \defeq \max_{i: \alpha_i > 0} \log\Par{\frac{1 - \beta_i - \delta}{\alpha_i}}. \]
It is immediate that $M_2 \le M_1$, because $\beta_i \ge f(\alpha_i)$ and $\alpha_i$ is a candidate argument in the definition of $M_1$. For the reverse inequality, the lower convex envelope of
finitely many points is piecewise affine, and its breakpoints occur at points
\(\alpha_i\) where the constraint is tight, i.e., $f(\alpha_i)=\beta_i$ (see Theorem 19.1, \cite{Rockafellar97}). Therefore, it is enough to show that on an interval $x \in [\alpha_i, \alpha_j]$ such that $f(x) = ax + b$, $f(\alpha_i) = \beta_i$, and $f(\alpha_j) = \beta_j$, $\psi$ achieves its maximum at an endpoint $\alpha_i$ or $\alpha_j$.

To see this last fact, observe that on this interval,
\[\psi(x) = \log\Par{\frac{1 - b - \delta - ax}{x}},\quad \psi'(x) = -\frac{1 - b - \delta}{x(1 - b - \delta - ax)},\quad \text{ for } x < \frac{1 - b - \delta}{a}.\]
Hence $\psi'$ has a fixed sign on this interval (whenever $\psi > -\infty$), so $\psi$ is supremized at an endpoint.
\end{proof}

\textbf{Gaussian DP (GDP) extrapolation.} GDP curves \cite{DongRS22} are a family of $f$-DP curves that are particularly analytically convenient. Informally, these are a one-parameter family of $f$-DP curves that exactly capture the trade-off functions $T_{P, Q}$ (Definition~\ref{def:tradeoff}) when $P = \Nor(0, 1)$ and $Q = \Nor(\mu, 1)$ are equal-variance Gaussians; here, the one parameter is $\mu$. This family is relevant to our setting because DP-SGD utilizes the Gaussian mechanism, and the composition of GDP curves remains a GDP curve (Corollary 3.3, \cite{DongRS22}). GDP curves are notably compatible with sample-efficient auditing: if the true privacy loss curve is GDP, then the auditor can extrapolate from a single tradeoff point $(\alpha, \beta)$ to rule out part of the GDP parametric family, improving statistical efficiency.
Consequently, prior works starting from \cite{NasrHSBTJCT23} (see also \cite{AnnamalaiC24, CebereBP25}) propose using GDP methodology to achieve tighter audits. We call this strategy \emph{GDP extrapolation}, and note that it should be treated as a heuristic whenever the true $f$-DP curve is not a GDP curve.

In this section, we point out two potential pitfalls to using this methodology to formally audit DP-SGD variants. These pitfalls stem from the facts that GDP curves are \emph{not} closed under the following operations: (1) Poisson subsampling, and (2) marginalizing intermediate outputs (i.e., in the hidden state threat model). We formalize these points with explicit counterexamples in Corollaries~\ref{cor:ss_nogdp} and~\ref{cor:hide_nogdp}.
Prior works have also discussed these issues to some extent, e.g., Section 5.5 of \cite{NasrHSBTJCT23}, and an extended discussion of the subsampling issue in \cite{gomez2025gaussian}. 

One common justification for use of GDP-based auditing, when the $f$-DP curve is not formally a GDP curve, is the $f$-DP central limit theorem (Theorem 3.4, \cite{DongRS22}), as essentially all private optimizers in ML are iterative methods that successively apply the same mechanism. We caution that again, the proof of the $f$-DP central limit theorem relies on accessing intermediate outputs (which becomes problematic under hidden state threats, e.g., as in \cite{CebereBP25}), and hence this logic is susceptible to the counterexample in Corollary~\ref{cor:hide_nogdp}. Altogether, GDP-based auditing requires careful justification depending on the setup; for this reason, our experiments report both the GDP-based $\epslb$ and a weaker, but more formal, estimate based on Clopper-Pearson correction.

Let $\Phi$ denote the CDF of a standard Gaussian, and let $\varphi = \Phi'$ denote its PDF. The following lemma is the basis of our counterexamples, and states that the trade-off function between a particular Gaussian and mixture of two Gaussians is not a GDP curve.

\begin{lemma}\label{lem:not_gdp}
    Consider the trade-off function $T_{P, Q}(\alpha)$ for $P = \Nor(0, 1)$ and $Q = \frac{1}{2} \Nor(0, 1) + \frac{1}{2} \Nor(1, 1)$. 
    There does not exist a $\mu > 0$ such that $T_{P, Q} = G_\mu$, where $G_\mu(\alpha) \defeq \Phi(\Phi^{-1}(1-\alpha) - \mu)$.
\end{lemma}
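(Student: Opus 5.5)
Because $P$ and $Q$ are explicit, I would compute $T_{P,Q}$ in closed form via Neyman--Pearson (Theorem A.1, \cite{DongRS22}) and then show that no single $\mu$ fits it. The likelihood ratio is
\[\frac{Q(x)}{P(x)} = \tfrac12 + \tfrac12 e^{x - 1/2},\]
which is strictly increasing in $x$. Hence the optimal tests are thresholds $\phi(x) = \ind[x > \tau]$. For the test with $\alpha = 1 - \Phi(\tau)$, the type II error is
\[\beta = \Pr_Q[x \le \tau] = \tfrac12\Phi(\tau) + \tfrac12 \Phi(\tau - 1).\]
Setting $\tau = \Phi^{-1}(1-\alpha)$ gives
\[T_{P,Q}(\alpha) = \tfrac12(1-\alpha) + \tfrac12\Phi(\Phi^{-1}(1-\alpha) - 1).\]
There is no randomization subtlety, because $P$ is atomless.

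To rule out $T_{P,Q} = G_\mu$ for every $\mu>0$, I would compare local behavior near the endpoints, where the two families separate cleanly. The most direct invariant is the slope at $\alpha \to 0^+$. Using $\frac{d}{d\alpha}\Phi(\Phi^{-1}(1-\alpha)-c) = -\varphi(\tau - c)/\varphi(\tau) = -e^{c\tau - c^2/2}$ with $\tau = \Phi^{-1}(1-\alpha)$, we get
\[G_\mu'(\alpha) = -e^{\mu\tau - \mu^2/2},\]
which tends to $-\infty$ as $\alpha\to 0$ (that is, $\tau\to\infty$). Similarly,
\[T_{P,Q}'(\alpha) = -\tfrac12 - \tfrac12 e^{\tau - 1/2},\]
which also diverges, so the slope at $0$ alone does not separate them.

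The comparison at $\alpha \to 1^-$ (that is, $\tau \to -\infty$) does separate them. There $G_\mu'(\alpha) = -e^{\mu\tau-\mu^2/2} \to 0$, whereas $T_{P,Q}'(\alpha) \to -\tfrac12$. Since both functions are differentiable on $(0,1)$, equality $T_{P,Q} = G_\mu$ on $[0,1]$ would force equal derivatives there, and hence equal limits as $\alpha\to1^-$, which gives $0 = -\tfrac12$, a contradiction. Equivalently, the slope at $\alpha=1$ of a trade-off function is minus the infimum of the likelihood ratio: it is $0$ for GDP but $\tfrac12$ for the mixture, because $Q$ places mass $\tfrac12$ on an exact copy of $P$.

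As a backup, I could compare the slope ratio directly: $G_\mu'$ composed with $\tau$ is a pure exponential $-e^{\mu\tau-\mu^2/2}$, while $T'$ is $-\tfrac12 - \tfrac12 e^{\tau-1/2}$, and these cannot agree as functions of $\tau\in\R$ for any $\mu$.

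The main obstacle is only in choosing the right distinguishing invariant. The $\alpha\to 0$ behavior is uninformative, so the argument should be organized around the derivative (the likelihood ratio as a function of $\tau$), and once that is in place everything else is routine calculation.
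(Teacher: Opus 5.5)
Your proposal is correct and is essentially the paper's proof. The paper differentiates $\tfrac12\Phi(t)+\tfrac12\Phi(t-1)=\Phi(t-\mu)$ in $t$ and divides by $\varphi(t)$, which is exactly your comparison of $\alpha$-derivatives since $\dd\alpha/\dd t=-\varphi(t)$; it obtains $\tfrac12+\tfrac12e^{t-1/2}=e^{t\mu-\mu^2/2}$ and reaches the contradiction as $t\to-\infty$, which is your limit $\alpha\to1^-$.
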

\begin{proof}
By the Neyman--Pearson lemma, optimal tests threshold the likelihood ratio. Here
\[
    \frac{dQ}{dP}(x)
    =
    \frac12+\frac12\exp\!\left(x-\frac12\right),
\]
which is strictly increasing in $x$. Therefore, for false-positive rate
\(\alpha = \Pr[X\ge t]=1-\Phi(t)\), the optimal test is \(\ind_{x\ge t}\), , where $\mathbb{I}_{\calE}$ is the indicator of an event $\calE$, and its false-negative rate is
\[
    T_{P,Q}(\alpha)
    =
    \Pr_{X \sim Q}[X < t]
    =
    \frac12\Phi(t)+\frac12\Phi(t-1),
    \qquad t=\Phi^{-1}(1-\alpha).
\]
If \(T_{P,Q}=G_\mu\), then for all \(t\in\mathbb R\),
$\frac12\Phi(t)+\frac12\Phi(t-1)=\Phi(t-\mu)$.
Differentiating gives
\[
    \frac12\varphi(t)+\frac12\varphi(t-1)=\varphi(t-\mu).
\]
Dividing by \(\varphi(t)\),
\[
    \frac12+\frac12 e^{t-\half}=e^{t\mu-\frac{\mu^2} 2}.
\]
As \(t\to -\infty\), the two sides tend to $\half$ and $0$ respectively for every \(\mu>0\), a contradiction.
\end{proof}

Our claimed counterexamples now follow simply from Lemma~\ref{lem:not_gdp}.

\begin{corollary}\label{cor:ss_nogdp}
    The trade-off function for the Gaussian mechanism with Poisson subsampling does not always correspond to a $\mu$-GDP trade-off function for some $\mu > 0$. 
\end{corollary}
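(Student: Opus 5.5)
The plan is to exhibit one concrete instance of the Poisson-subsampled Gaussian mechanism whose trade-off function is exactly the $T_{P,Q}$ of Lemma~\ref{lem:not_gdp}, and then invoke that lemma. Take a one-dimensional Gaussian mechanism with sensitivity $1$ and noise variance $1$. Use sampling rate $q = \frac12$ and a single step ($T=1$). Fix the add/remove neighboring pair $\calD = \emptyset$ and $\calD' = \{c\}$, where the (clipped) contribution of $c$ to the released sum equals $1$.

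On $\calD$, the output is $\Nor(0,1)$. On $\calD'$, the sample $c$ is included with probability $\frac12$, so the output is the mixture $\frac12\Nor(0,1)+\frac12\Nor(1,1)$. These are exactly $P$ and $Q$ in Lemma~\ref{lem:not_gdp}.

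The same instance can be phrased inside Algorithm~\ref{alg:dp-sgd-subsample}. Take a scalar $\theta$ and a loss whose clipped gradient at $c$ is $-C$. Choose $\eta$, $n$, and $q$ so that after rescaling the update is a unit mean shift with unit noise. This scaling is harmless because trade-off functions are invariant under the bijection $x \mapsto ax+b$ applied to both distributions.

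The conclusion then follows directly. The trade-off function for this neighboring pair is $T_{P,Q}$, and Lemma~\ref{lem:not_gdp} shows it is not $G_\mu$ for any $\mu>0$. Hence the subsampled Gaussian mechanism's trade-off function, at least for this pair, is not a GDP curve, which is what ``does not always correspond'' asks for.

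One possible objection concerns the ordering of $P$ and $Q$: the auditor might consider $T_{Q,P}$ instead. I would address this with the standard fact that $T_{Q,P}$ is the inverse of $T_{P,Q}$, while every $G_\mu$ is symmetric, i.e.\ equal to its own inverse. So if $T_{Q,P}=G_\mu$, then $T_{P,Q}=G_\mu$ as well, and Lemma~\ref{lem:not_gdp} rules both out.

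The main obstacle is bookkeeping rather than mathematics. The statement should be read as concerning the trade-off function of a specific neighboring pair, not the worst-case $f$-DP curve over all pairs. This needs to be stated clearly, and the normalization of the mechanism to exactly $\Nor(0,1)$ versus $\frac12\Nor(0,1)+\frac12\Nor(1,1)$ must be made explicit.
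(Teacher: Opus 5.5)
Your proposal is correct and matches the paper's proof, which likewise takes $q=\frac12$ with unit-variance Gaussian noise on a single-element dataset whose entry is either $0$ or $1$ (a replacement-style pair rather than your $\emptyset$ versus $\{c\}$, but producing the same $P=\Nor(0,1)$ and $Q=\frac12\Nor(0,1)+\frac12\Nor(1,1)$), and then invokes Lemma~\ref{lem:not_gdp}. One small slip in your optional embedding into Algorithm~\ref{alg:dp-sgd-subsample}: there the canary's mean shift divided by the noise standard deviation equals $1/\sigma$ no matter how you choose $\eta$, $n$, $q$. So you need $\sigma=1$, together with a fixed nonzero $n$ (for example, padding the dataset with zero-gradient samples), rather than a tuning of $\eta$, $n$, $q$.
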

\begin{proof}
Consider a single-element dataset, either $0$ or $1$, and Poisson subsampling with parameter $q = \half$ composed with the Gaussian mechanism (adding $\Nor(0, 1)$ to the empirical mean). If the dataset is $0$, then the resulting distribution is $P = \Nor(0,1)$, while if the dataset is $1$, then the resulting distribution is $Q = \frac12 \Nor(0,1) + \frac12\Nor(1, 1)$, which is exactly the example described in Lemma~\ref{lem:not_gdp}.
\end{proof}

\begin{corollary}\label{cor:hide_nogdp}
    Consider composing several Gaussian mechanisms (adaptively) and outputting only the result of the final mechanism. 
    The trade-off function of this mechanism does not always correspond to a $\mu$-GDP trade-off function for some $\mu > 0$. 
\end{corollary}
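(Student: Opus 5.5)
The plan is to reduce to Lemma~\ref{lem:not_gdp}, mirroring the proof of Corollary~\ref{cor:ss_nogdp}. We design an adaptive composition of two Gaussian mechanisms on neighboring datasets whose final output has law $P = \Nor(0,1)$ under one dataset and $Q = \half\Nor(0,1) + \half\Nor(1,1)$ under the other. Lemma~\ref{lem:not_gdp} then shows that the trade-off function $T_{P,Q}$ is not $G_\mu$ for any $\mu > 0$.

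\textbf{Construction.} Take a single-element dataset, either $0$ or $1$, as in Corollary~\ref{cor:ss_nogdp}.
\begin{itemize}
\item The first mechanism releases $\omega_1 = x + \xi_1$ with $\xi_1 \sim \Nor(0, \sigma_1^2)$.
\item The second mechanism adaptively computes a bit $b(\omega_1) = \ind_{\omega_1 \ge 0}$ and releases $\omega_2 = b(\omega_1)\cdot x + \xi_2$ with $\xi_2 \sim \Nor(0,1)$.
\end{itemize}
The second step is a Gaussian mechanism with sensitivity at most $1$ for every fixed $\omega_1$, so this is a legitimate adaptive composition of Gaussian mechanisms. Only $\omega_2$ is output.

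\textbf{Computing the laws.} If the dataset is $0$, then $\omega_2 \sim \Nor(0,1)$ regardless of $b$. If the dataset is $1$, then $\omega_2 \sim p\,\Nor(1,1) + (1-p)\Nor(0,1)$, where
\[
p = \Pr[\omega_1 \ge 0 \mid x = 1] = \Phi(1/\sigma_1).
\]
This exceeds $\half$ and never equals $\half$ exactly. The two ways to fix this are:
\begin{itemize}
\item threshold instead at $\omega_1 \ge 1$, which gives $p = \Pr[\xi_1 \ge 0] = \half$ exactly;
\item alternatively, redo the proof of Lemma~\ref{lem:not_gdp} for a general mixture weight $p \in (0,1)$.
\end{itemize}
The second route works verbatim. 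The likelihood ratio $(1-p) + p\,e^{t - 1/2}$ is still increasing, and it tends to $1-p > 0$ as $t \to -\infty$, whereas $e^{t\mu - \mu^2/2} \to 0$. That is the same contradiction.

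The \textbf{main obstacle} is just confirming that the adaptive second mechanism counts as a Gaussian mechanism in the intended sense. It does, since conditional on $\omega_1$ it adds $\Nor(0,1)$ noise to a sensitivity-$\le 1$ query. Apart from that, the step to watch is the exact mixing weight, which the threshold at $1$ settles, so that Lemma~\ref{lem:not_gdp} applies directly.
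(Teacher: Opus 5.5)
Your proposal is correct and is essentially the paper's argument. Both use a two-stage adaptive composition in which a bit computed from the first stage gates the mean of a unit-variance Gaussian second stage, so that only $\omega_2$ is released and the two output laws are exactly $P = \Nor(0,1)$ and $Q = \half\Nor(0,1) + \half\Nor(1,1)$ from Lemma~\ref{lem:not_gdp}. The one difference is that the paper makes the first stage data-independent ($X \sim \Nor(0,1)$ with gate $\ind_{X \ge 0}$), so the weight $\half$ comes for free, whereas your data-dependent first stage needs the threshold moved to $1$ (or your weight-$p$ extension of the lemma, which is also valid), with the small benefit that both stages genuinely query the data.
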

\begin{proof}
Consider a two-stage Gaussian mechanism,
with a single-element dataset $d$ that is either $0$ or $1$. 
The first stage is to draw $X \sim \Nor(0, 1)$, independent of the dataset.
We adaptively choose the second stage based on the result of the first: we output a sample from $\Nor(d \cdot \mathbb{I}_{\{X \geq 0\}}, 1)$. 
Considering the various choices of $d$, this again results in exactly the aforementioned distributions $P = \Nor(0, 1)$ or $Q = \frac12 \Nor(0, 1) + \frac12 \Nor(1, 1)$.
\end{proof}

\subsection{Numerical precision}
\label{ssec:precision}
\newcommand{\epsraw}{\eps_{\rm raw}}
\newcommand{\epscp}{\eps_{\rm cp}}

Applying a fully formal empirical privacy estimation methodology, e.g., Clopper--Pearson corrections after holdout, can sometimes report highly-conservative $\epslb$ figures that sometimes even show up as vanishing. 
We numerically examine how the Clopper--Pearson (CP) correction changes empirical privacy lower bounds in the finite-sample regime. For this discussion, we fix the significant level of confidence internal to \(\gamma=0.05\) and ignore the union bound over multiple tests (i.e., the false discovery direction in Appendix~\ref{app:lbmethod}, so that each of the two error rates, \(\alpha\) and \(\beta\), is corrected using a one-sided CP upper confidence bound with failure probability \(\frac \gamma 2=0.025\). For each sample size \(k\in\{200,500,1000,5000, 10000\}\), we enumerate all possible empirical count pairs
\( (x_\alpha,x_\beta)\in[k]\times [k]\), 
where
\(
    \alpha_{\rm emp}=\frac{x_\alpha}{k},
    \beta_{\rm emp}=\frac{x_\beta} k.
\)
For each pair, we compute the raw empirical lower bound and the CP corrected lower bound: 
\[
    \epsilon_{\rm raw}
    =
    \max\left\{0, \log\frac{1-\beta_{\rm emp}-\delta}{\alpha_{\rm emp}}\right\},\quad \epsilon_{\rm CP}
    =
    \max\left\{0,\log\frac{1-\hat\beta-\delta}{\hat\alpha}\right\},
\]
where \(\hat\alpha\) and \(\hat\beta\) are the CP upper confidence bounds, and restrict attention to count pairs with \(\epsilon_{\rm raw}>0\). This isolates empirical observations that initially provide a nonzero privacy signal and shows how many of them are weakened or collapsed to zero after CP correction. We show our result in Figure~\ref{fig:cp-multik-grid}. We make two observations from the numerical results.
\begin{enumerate}
    \item In the experiment, we filter out all \((\alpha,\beta)\) pairs for which \(\epsraw=0\) or \(\epsraw=+\infty\). After this filtering, the range of finite empirical lower bounds that can appear is still limited by the sample size. In particular, the largest possible auditable $\epslb$ grows only logarithmically with \(k\): it is around \(4\) for \(k=200\) in our plotted range and around \(9\) for \(k=10000\).

    \item As \(k\) increases, fewer nonzero raw observations collapse to zero after CP correction, and the average gap between \(\epsraw\) and \(\epscp\) also decreases. However, the worst-case raw value among the collapsed observations does not vanish; in our experiments, it remains around \(2.4\).
\end{enumerate}

Both observations have simple theoretical explanations. For the first
observation, over all finite count pairs with sample size \(k\), the largest possible raw lower bound is attained when \(x_\alpha=1\) and \(x_\beta=0\), giving
\[
    \epsraw^\star
    =
    \log\frac{1-\delta}{1/k}
    =
    \log\bigl((1-\delta)k\bigr).
\]
Thus, the auditable range of finite empirical lower bounds grows only
logarithmically with the number of samples, suggesting that accurately auditing large \(\epsilon\) values requires very large \(k\).

For the second observation, the non-vanishing worst-case collapse is caused by extreme low-count configurations near the boundary of the binomial experiment. Let \(x_\alpha=1,x_\beta=k-c\), where $c$ is some constant that will be specified later. Equivalently \(\alpha_{\rm emp}=\frac{1}{k},1-\beta_{\rm emp}=\frac{c}{k}.\)
Therefore the raw lower bound is
\(\epsraw=\log\frac{c/k-\delta}{1/k}.\)
When \(\delta\) is negligible compared with \(c/k\), this is approximately \(\epsraw\approx\log c\), which is independent of \(k\).
We then consider the CP correction: 
\[
    \hat\alpha
    \approx
    \frac{q_{1-\gamma/2}(\Gamma(2,1))}{k},   
    \quad  
    \hat\beta
    \approx
    1 - \frac{q_{\gamma/2}(\Gamma(c,1))}{k}.
\]
where \(q_p(\Gamma(a,1))\) denotes the \(p\)-quantile of a Gamma distribution. The CP-corrected lower bound collapses to zero when $\hat{\alpha} + \hat{\beta} + \delta \ge 1$. Substituting the approximations above, the collapse condition becomes
\[
    q_{\gamma/2}(\Gamma(c,1)) - k\delta
    \lesssim
    q_{1-\gamma/2}(\Gamma(2,1)).
\]
For \(\gamma=0.05\), the right-hand side is approximately \(5.57\). When
\(k\delta\) is small, the largest integer \(c\) satisfying this condition is about \(c=11\), giving
\[
    \epsraw
    =
    \log(c-k\delta)
    \approx
    \log 11
    \approx
    2.4.
\]
Thus, the non-vanishing maximum collapsed value is a finite-count boundary effect: it persists for large \(k\) as long as \(k\delta\) remains small relative to the relevant true-positive count \(c\). In the pure-DP case \((\delta=0)\), this gap persists for any choice of finite \(k\). In the common auditing setting with \(\delta=10^{-5}\), the gap begins to disappear only when \(k\delta\) becomes comparable to \(c\); for \(c\approx 11\), this requires \(k\gtrsim 10^6\), which is a rather demanding sample size.
\begin{figure*}[h]
    \centering

    % Row 1: k = 200
    \begin{subfigure}[t]{0.32\textwidth}
        \centering
        \includegraphics[width=\linewidth]{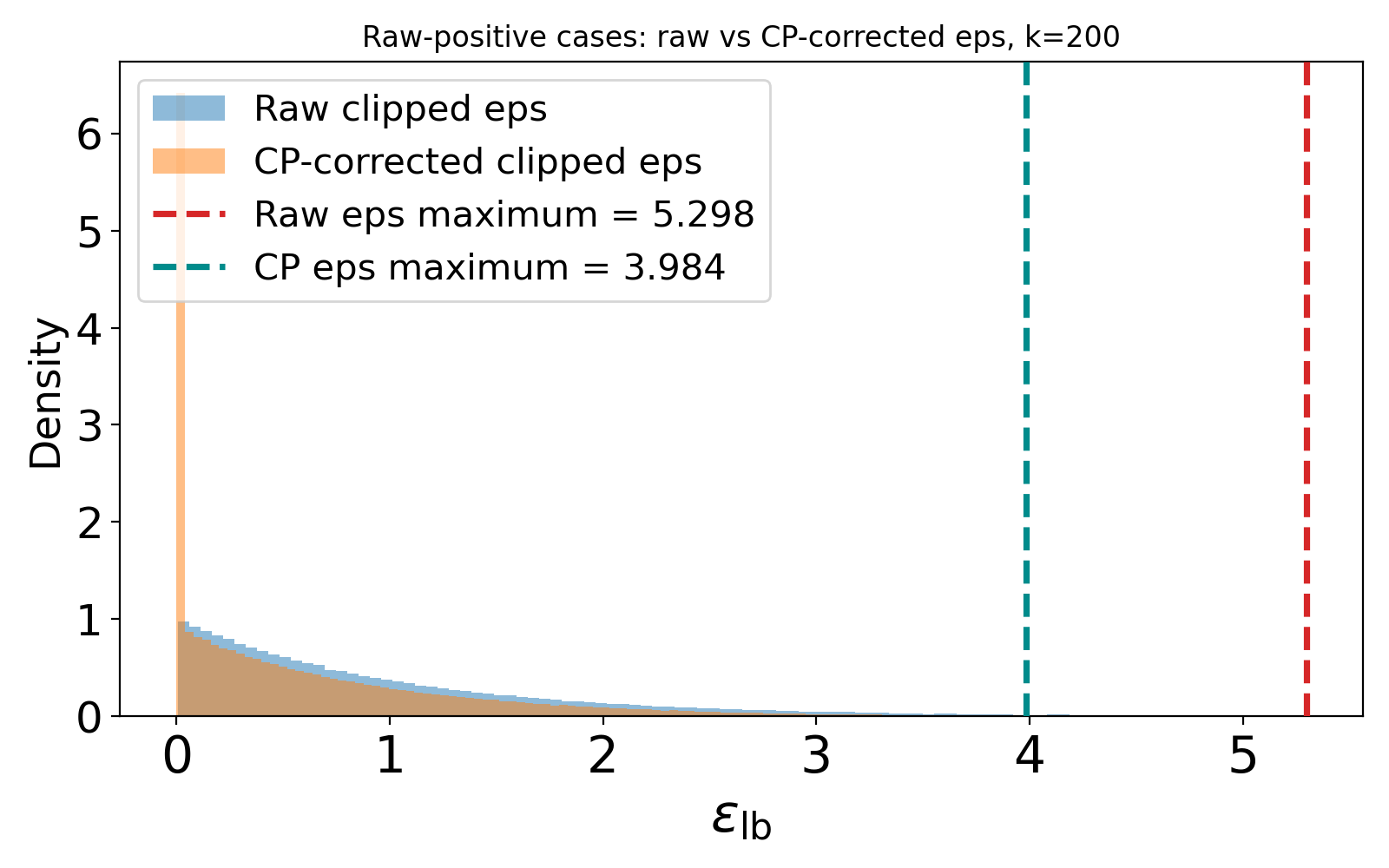}
        \caption*{\(\epsraw\) v.s. \(\epscp\)}
    \end{subfigure}
    \hfill
    \begin{subfigure}[t]{0.32\textwidth}
        \centering
        \includegraphics[width=\linewidth]{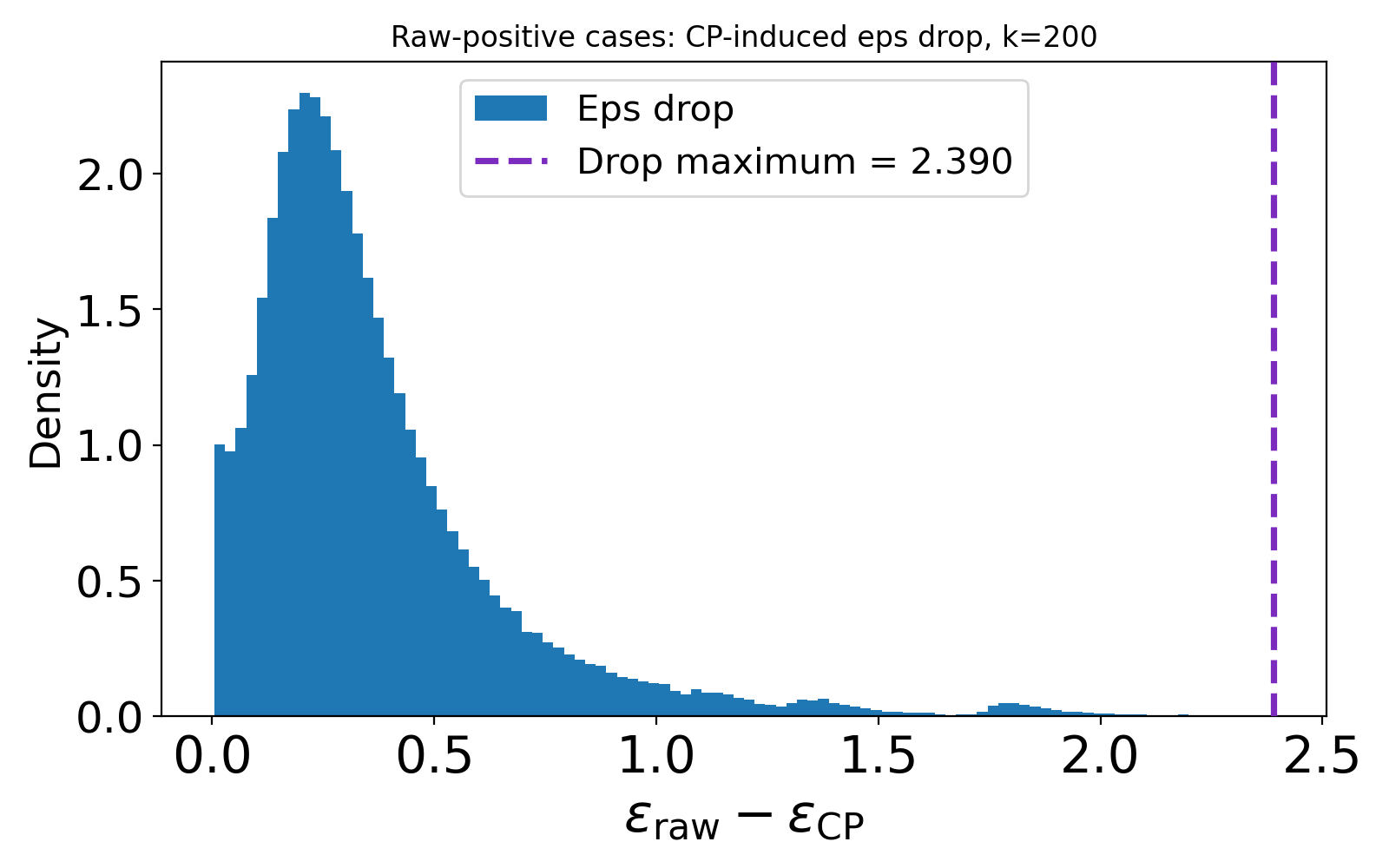}
        \caption*{\(\epsilon_{\mathrm{raw}} -\epsilon_{\mathrm{CP}} \)}
    \end{subfigure}
    \hfill
    \begin{subfigure}[t]{0.32\textwidth}
        \centering
        \includegraphics[width=\linewidth]{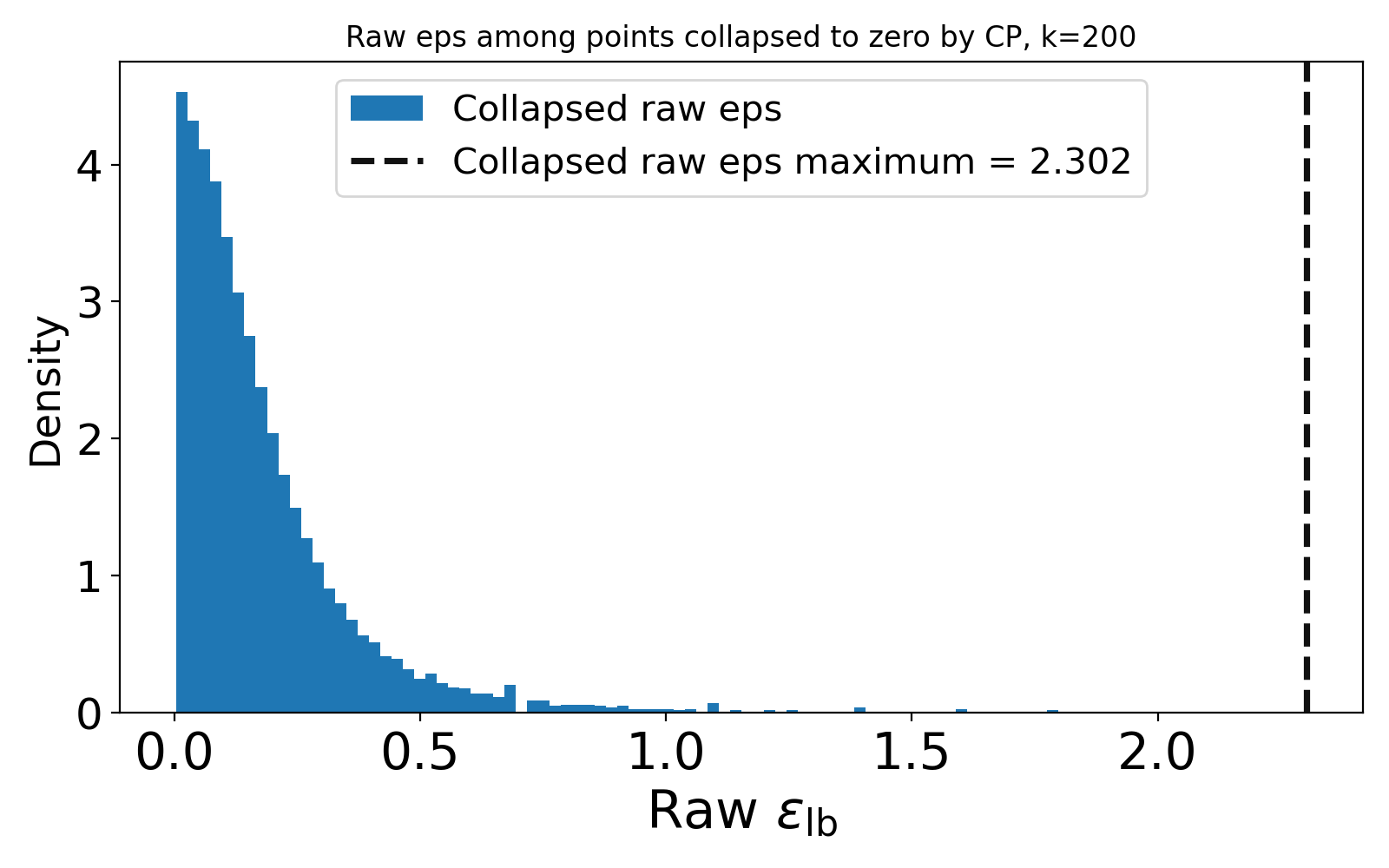}
        \caption*{\(\epsraw\) among collapse points}
    \end{subfigure}

    \vspace{0.5em}
    {\small (a) \(k=200\)}

    \vspace{1em}

    % Row 2: k = 1000
    \begin{subfigure}[t]{0.32\textwidth}
        \centering
        \includegraphics[width=\linewidth]{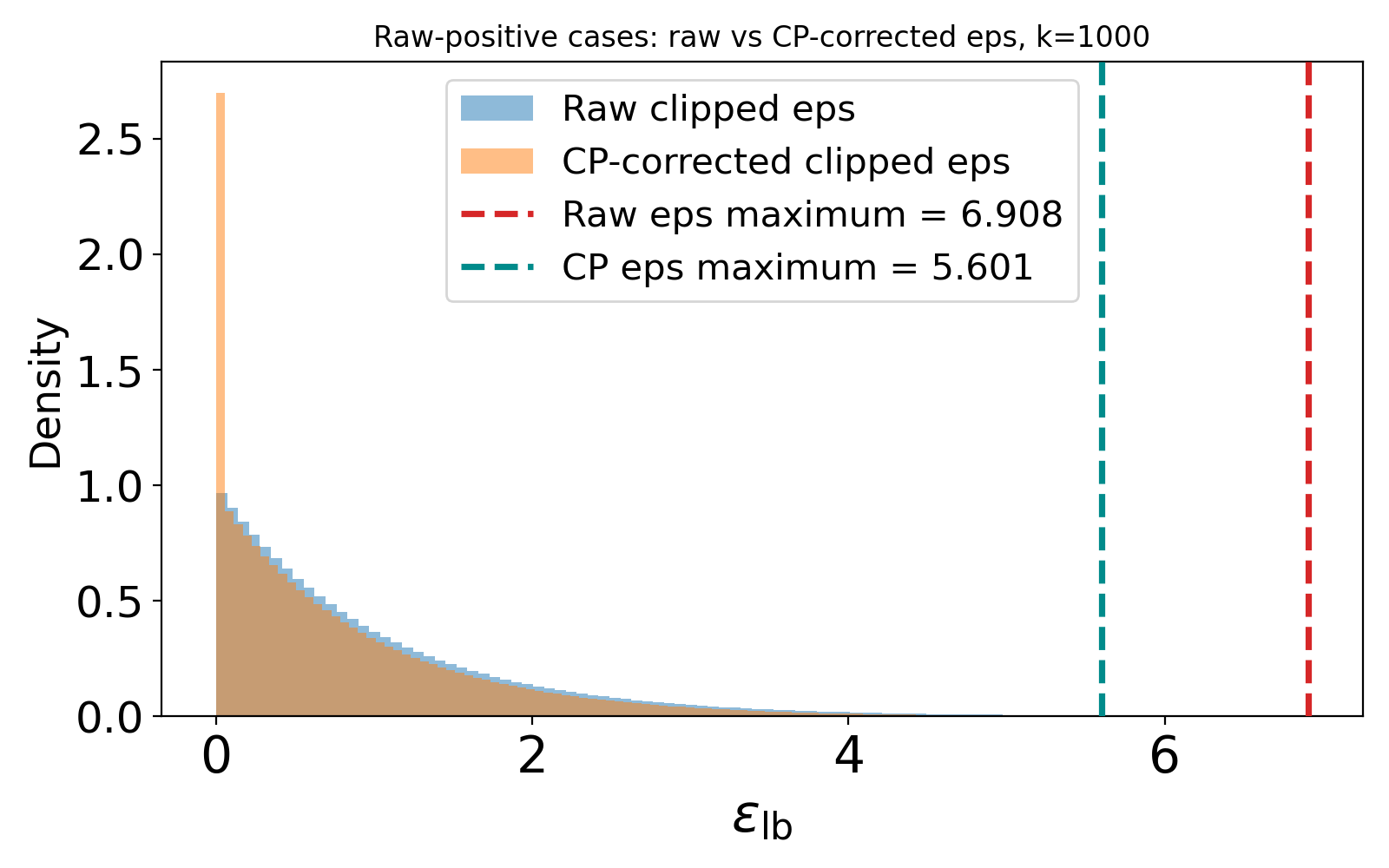}
        \caption*{\(\epsraw\) v.s. \(\epscp\)}
    \end{subfigure}
    \hfill
    \begin{subfigure}[t]{0.32\textwidth}
        \centering
        \includegraphics[width=\linewidth]{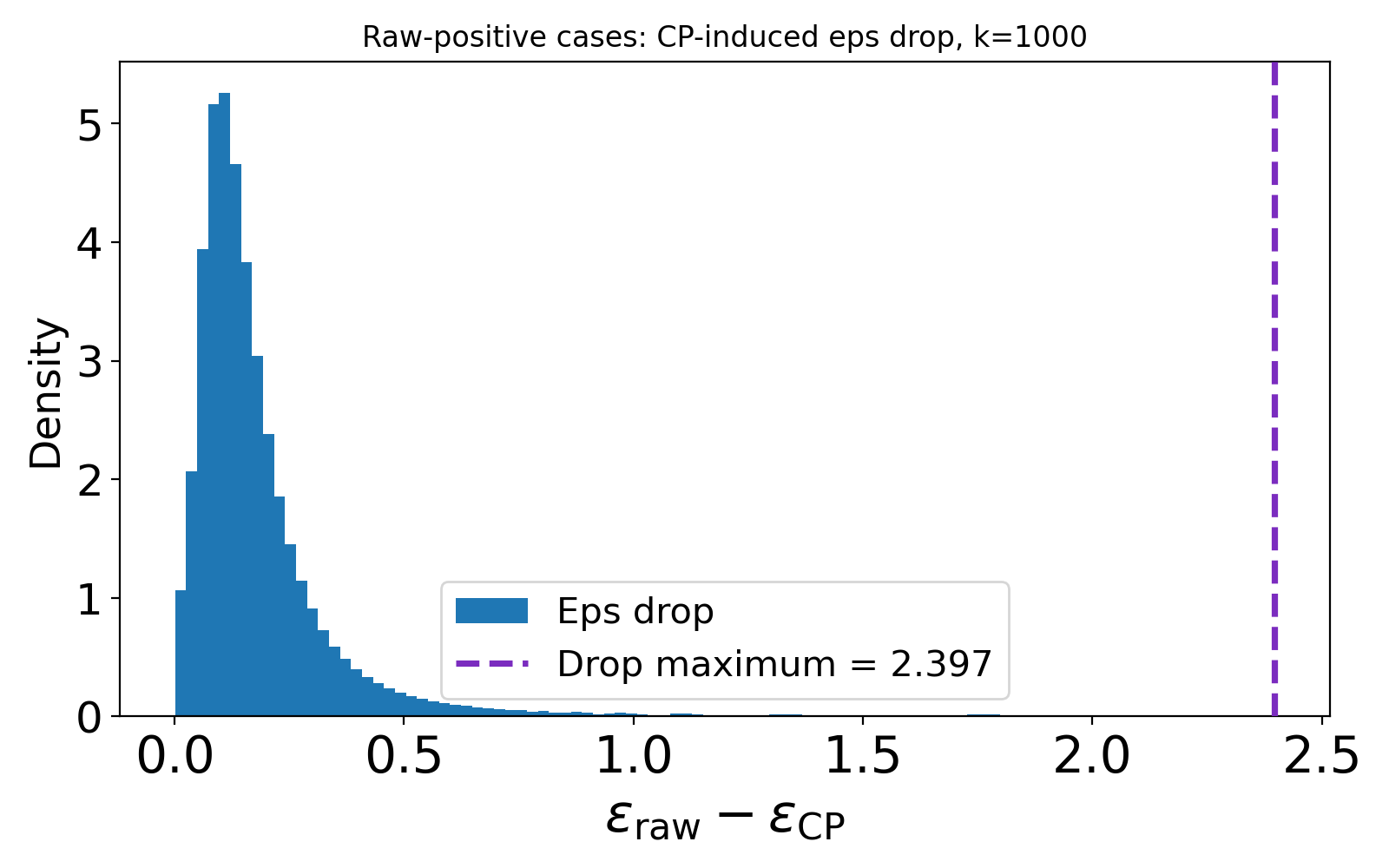}
        \caption*{\(\epsilon_{\mathrm{raw}} -\epsilon_{\mathrm{CP}} \)}
    \end{subfigure}
    \hfill
    \begin{subfigure}[t]{0.32\textwidth}
        \centering
        \includegraphics[width=\linewidth]{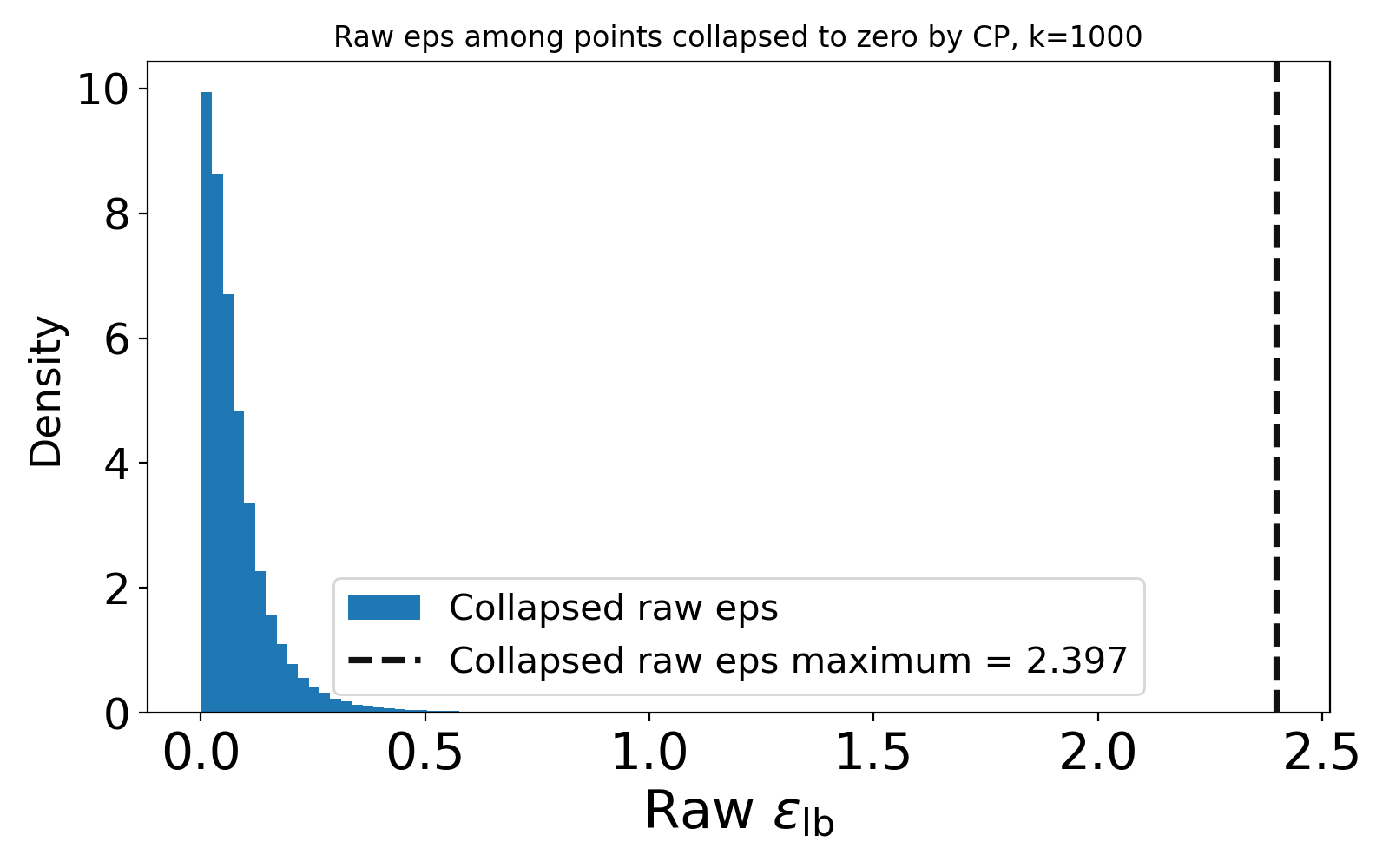}
        \caption*{\(\epsraw\) among collapse points}
    \end{subfigure}

    \vspace{0.5em}
    {\small (b) \(k=1000\)}

    \vspace{1em}

    % Row 3: k = 5000
    \begin{subfigure}[t]{0.32\textwidth}
        \centering
        \includegraphics[width=\linewidth]{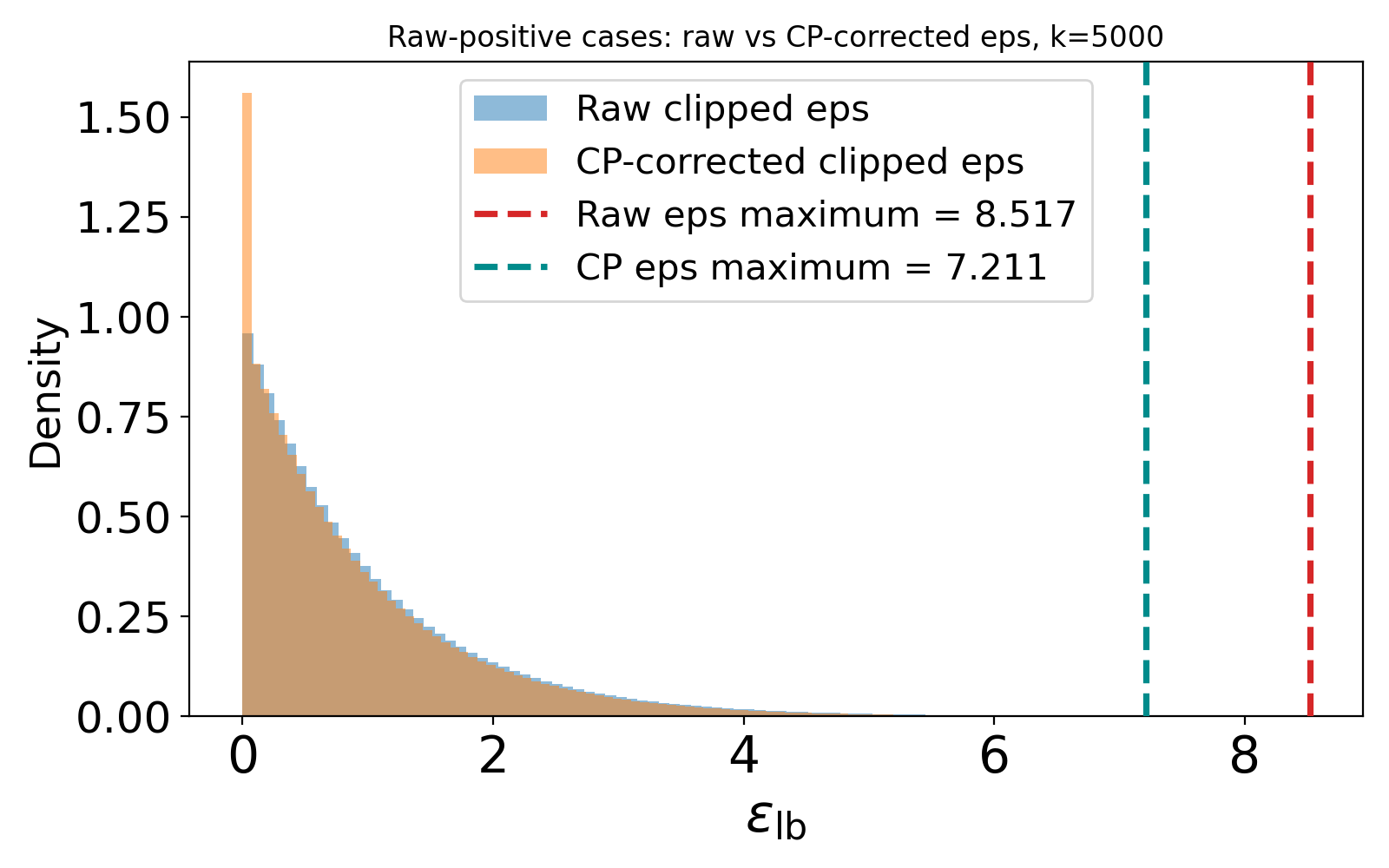}
        \caption*{\(\epsraw\) v.s. \(\epscp\)}
    \end{subfigure}
    \hfill
    \begin{subfigure}[t]{0.32\textwidth}
        \centering
        \includegraphics[width=\linewidth]{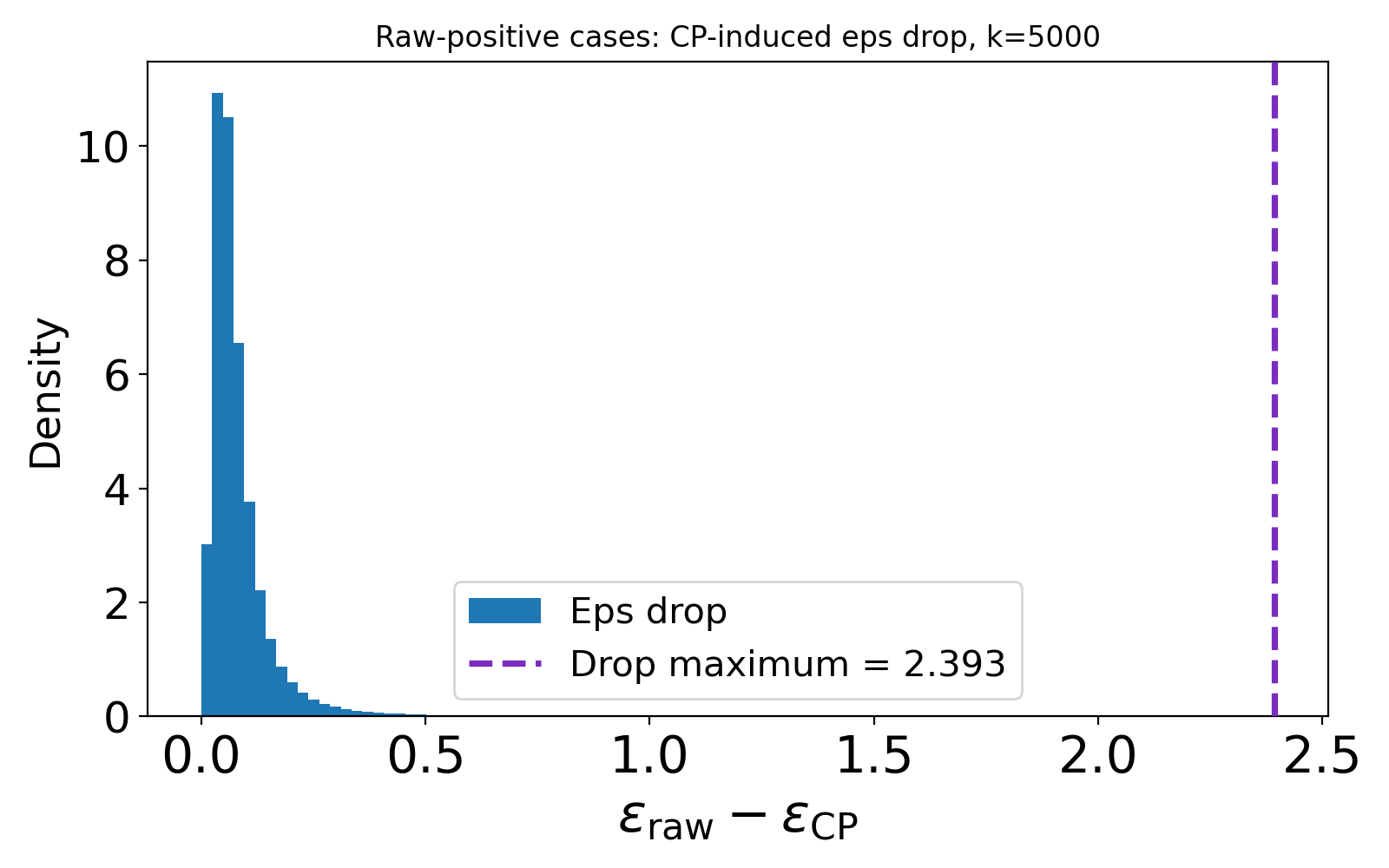}
        \caption*{\(\epsilon_{\mathrm{raw}} -\epsilon_{\mathrm{CP}} \)}
    \end{subfigure}
    \hfill
    \begin{subfigure}[t]{0.32\textwidth}
        \centering
        \includegraphics[width=\linewidth]{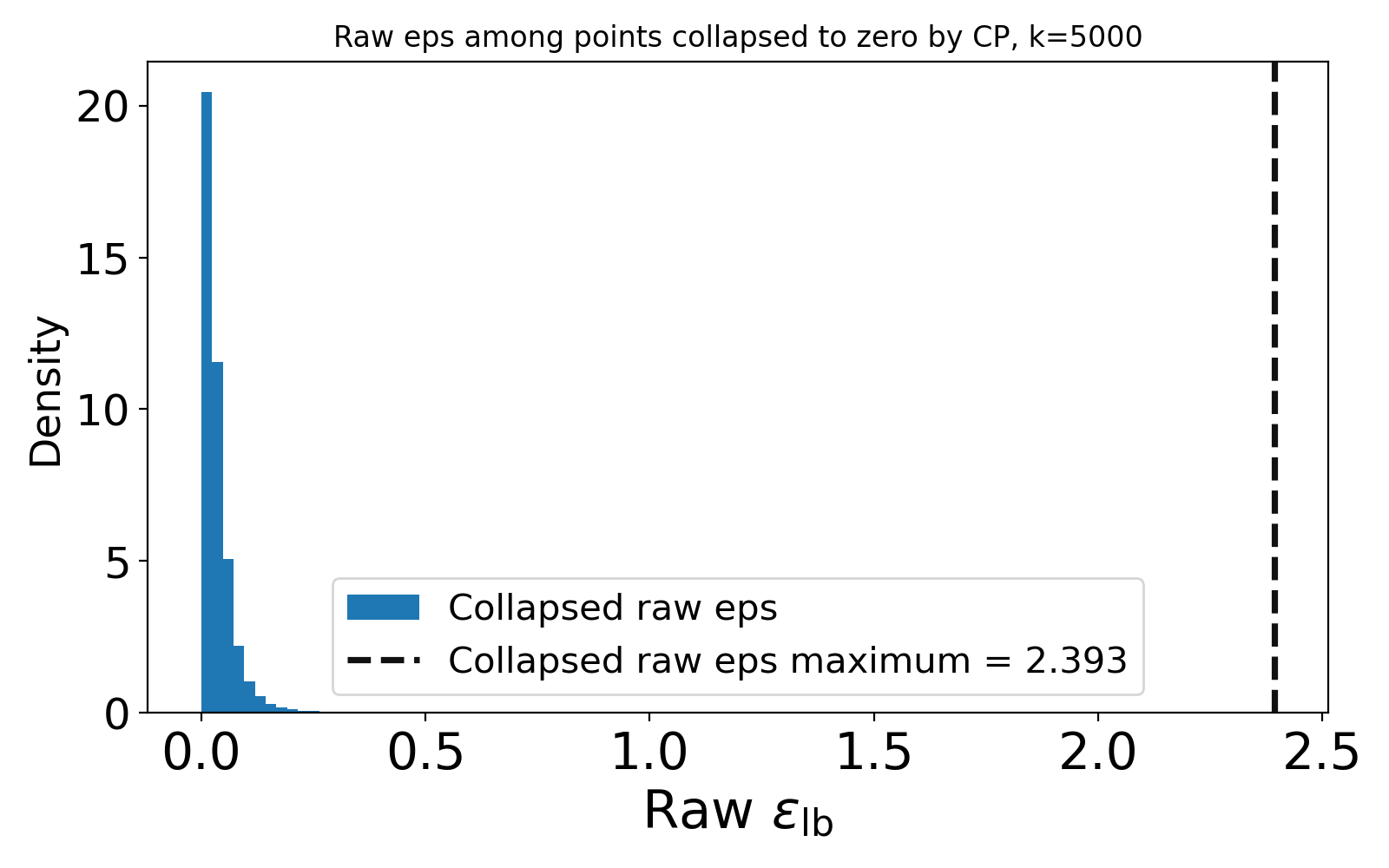}
        \caption*{\(\epsraw\) among collapse points}
    \end{subfigure}

    \vspace{0.5em}
    {\small (c) \(k=5000\)}

    \vspace{1em}

    % Row 4: k = 10000
    \begin{subfigure}[t]{0.32\textwidth}
        \centering
        \includegraphics[width=\linewidth]{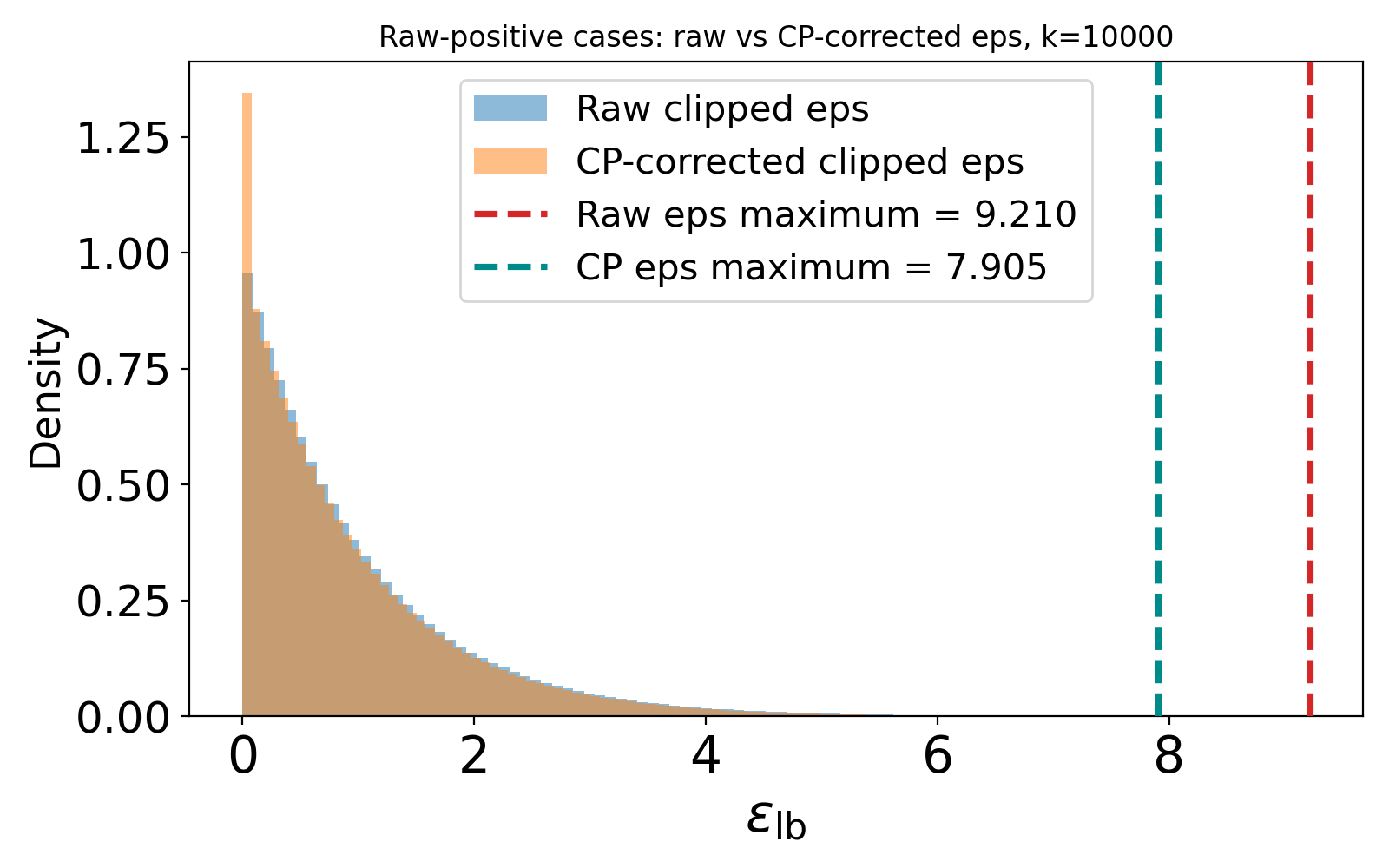}
        \caption*{\(\epsraw\) v.s. \(\epscp\)}
    \end{subfigure}
    \hfill
    \begin{subfigure}[t]{0.32\textwidth}
        \centering
        \includegraphics[width=\linewidth]{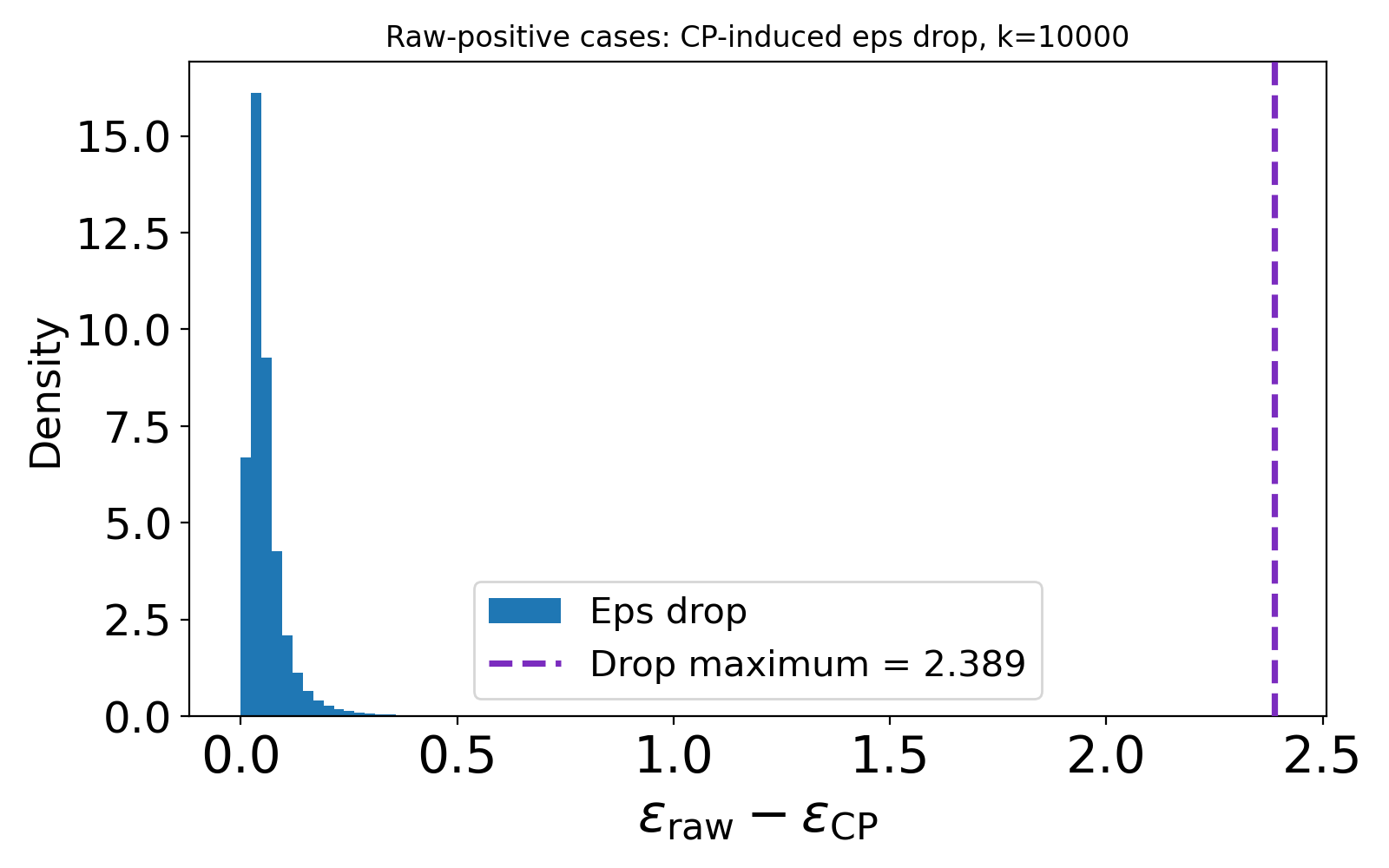}
        \caption*{\(\epsilon_{\mathrm{raw}} -\epsilon_{\mathrm{CP}} \)}
    \end{subfigure}
    \hfill
    \begin{subfigure}[t]{0.32\textwidth}
        \centering
        \includegraphics[width=\linewidth]{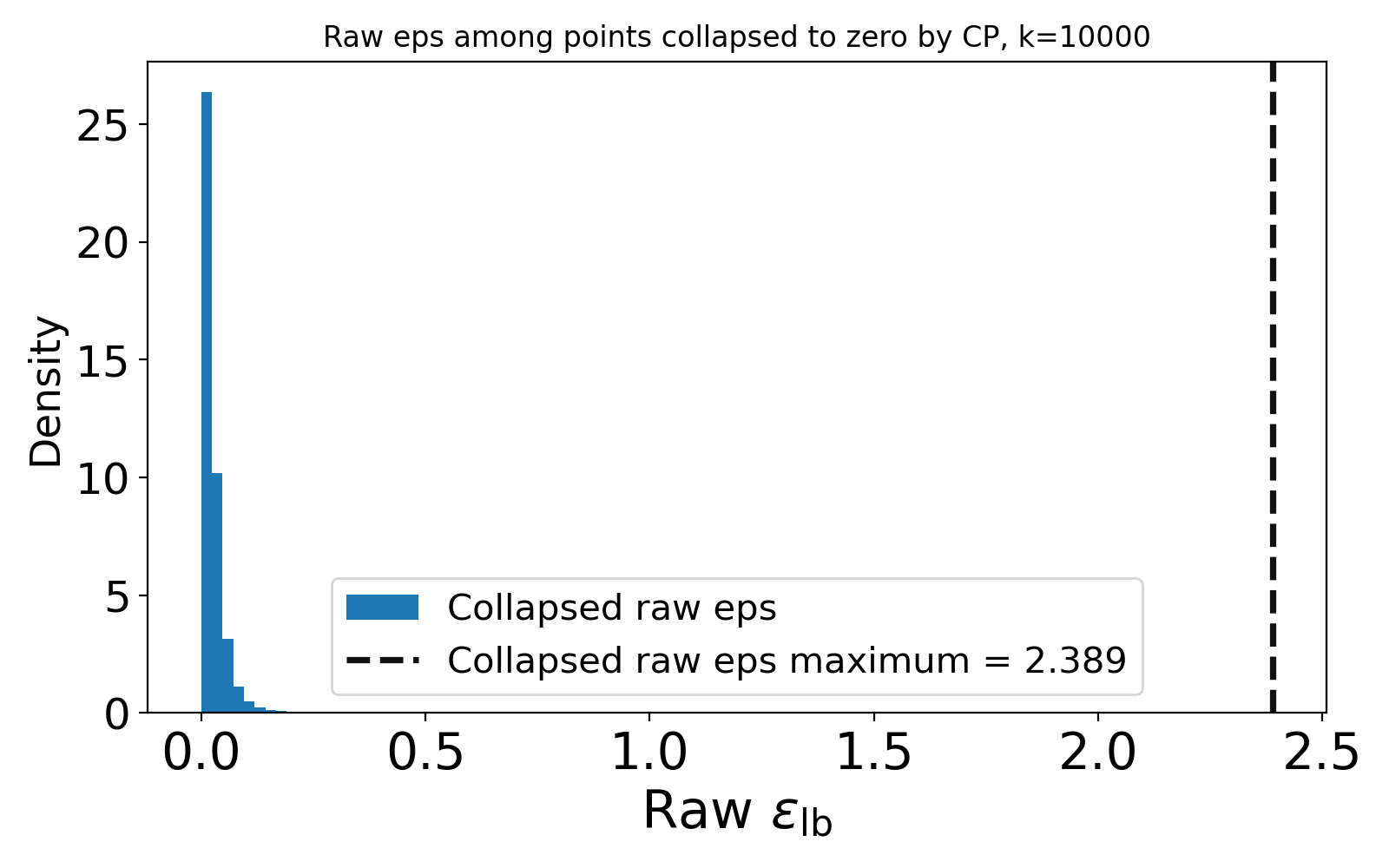}
        \caption*{\(\epsraw\) among collapse points}
    \end{subfigure}

    \vspace{0.5em}
    {\small (d) \(k=10000\)}

    \caption{
    Finite-sample effect of Clopper--Pearson (CP) correction under different sample sizes \(k\), with \(\gamma=0.05\) and \(\delta=10^{-5}\). Each row corresponds to a fixed \(k\). The left column compares the distributions of the raw lower bound \(\epsraw\) and the CP-corrected lower bound \(\epscp\); the middle column shows the distribution of the correction gap \(\epsraw-\epscp\); and the right column shows the distribution of raw lower bounds that collapse to \(0\) after CP correction.
    }
    \label{fig:cp-multik-grid}
\end{figure*}
\newpage

%% file: appendix_intro.tex
%\textbf{Appendix Description}

Our remaining appendices provide additional experimental results and ablation studies that complement our main findings. We summarize the different appendices here. 

\begin{itemize}
    \item \textbf{\ref{app:disparate_impact}: Disparate Impact.} Exploring how defense targets minority vs majority subgroups for sample removal, for varying minority/majority ratios.
%    \item \textbf{\ref{sec:nonprivate}: Empirically Private Learning.} Defense performance on non-private optimizers, and comparison against HAMP \cite{ChenP24}, a state-of-the-art empirical MIA defense.
    \item \textbf{\ref{app:alt_audits}: Alternative Input Space Audits.} Evaluations of input space audit constructions in the multiple canary setting, and the randomized canary setting of \cite{SteinkeNJ23}.
    \item \textbf{\ref{app:score}: Sample Signatures.} Ablation study comparing different scoring functions for filtering.
    \item \textbf{\ref{app:poisson}: Poisson vs.\ Shuffled Subsampling.} Comparison of DP-SGD subsampling variants, and the effect of DP-SGD implementation changes on our estimated $\epslb$.
    \item \textbf{\ref{app:scaling}: Interpolating Canaries.} Testing defense effectiveness across different scalings of canary strengths on a mixture of a blank canary and a natural image.
    \item \textbf{\ref{app:non_canary_audit}: Auditing via Non-Canary Samples.} Analyzing privacy loss on non-canary samples, motivated by the ``privacy onion effect'' observed by \cite{CarliniJZPTT22}.
    \item \textbf{\ref{app:trial_count}: Increasing Trial Count.} Validating audit methodology with 1000 shadow models.
    \item \textbf{\ref{app:params}: Additional Hyperparameter Tuning.} Ablations on global filtering, local filtering bandwidth, and filter frequency.
    \item \textbf{\ref{app:other_eps}: Empirical Privacy Results Across Audit Methods.} Fully specifying $\epslb$ computed under GDP and CP methodology with different holdout ratios, as described in Appendix~\ref{app:lbmethod}.
\end{itemize}

%% file: disparate_impact.tex
\section{Disparate Impact}\label{app:disparate_impact}

A common concern for any filtering-based algorithm is disparate impact. We evaluate our filtering defense on the ColoredMNIST dataset \cite{ArjovskyBGL19}, a variant of MNIST with 2 classes (class 0 has even digits and class 1 has odd digits) where $p\%$ of evens and $(1-p)\%$ of odds are colored red; $(1-p)\%$ of evens and $p\%$ of odds are colored blue. This creates symmetric majority and minority subgroups within both classes. Due to its distribution of relevant (shape) and irrelevant (color) features, balanced by group ratios, ColoredMNIST is a popular synthetic benchmark in the algorithmic fairness and spurious correlations literature \cite{SagawaKHL20, ZhangSZFR22}.

We consider ColoredMNIST under the following values of $p$: $99.5\%, 95\%, 90\%, 85\%, 80\%, 75\%$. For each instantiation of ColoredMNIST, we evaluate how many samples are filtered by our defense from each of the 4 subgroups, and the resulting impact on groupwise model utility. 

Table \ref{tab:defense_removal} indicates that our defense does target minority groups over majority groups when marking samples for removal, yielding a disparate reduction in minority group utility. This targeting effect becomes more benign as the fraction $p$ becomes more mild, as expected. 

Moreover, 
we find that the observed bias towards minority group removal is in some sense definitional from the perspective of sample memorization, which any method with a low auditable $\epslb$ must combat. This finding is thematically consistent with other works investigating tradeoffs between privacy and fairness \cite{BagdasaryanPS19, UniyalNKSKMT21}. To formalize this correlation, we audit ColoredMNIST (at $p=75\%$) without our defense twice, once with the canary as a member of a minority subgroup and once again with the canary as a member of a majority subgroup. Under the minority-group canary, we observe $\epslb=1.145$ whereas under the majority-group canary, we observe $\epslb=0.043$, indicating that minority-group canaries are indeed more prone to memorization.

\begin{table}[ht]
\centering
\caption{Defense Removal Rate (\%): mean $\pm$ std over repetitions}
\label{tab:defense_removal}
\begin{tabular}{c|cccc}
\hline
majority\_pct & c0\_red (maj) & c0\_blue (min) & c1\_red (min) & c1\_blue (maj) \\
\hline
0.995 & $1.2 \pm 0.0\%$ & $100.0 \pm 0.0\%$ & $100.0 \pm 0.0\%$ & $1.1 \pm 0.0\%$ \\
0.950 & $0.3 \pm 0.0\%$ & $27.6 \pm 0.2\%$  & $21.4 \pm 0.2\%$  & $0.6 \pm 0.0\%$ \\
0.900 & $0.5 \pm 0.0\%$ & $12.2 \pm 0.1\%$  & $8.6 \pm 0.1\%$   & $0.8 \pm 0.0\%$ \\
0.850 & $0.6 \pm 0.0\%$ & $7.9 \pm 0.1\%$   & $5.4 \pm 0.1\%$   & $0.9 \pm 0.0\%$ \\
0.800 & $0.7 \pm 0.0\%$ & $5.7 \pm 0.1\%$   & $4.1 \pm 0.1\%$   & $1.0 \pm 0.0\%$ \\
0.750 & $0.8 \pm 0.0\%$ & $4.3 \pm 0.1\%$   & $3.2 \pm 0.1\%$   & $1.1 \pm 0.0\%$ \\
\hline
\end{tabular}
\end{table}

%% file: ablations.tex
\section{Alternative Input Space Audits}\label{app:alt_audits}

\subsection{Single run audit}\label{app:single_run_audit}

A recent work \cite{SteinkeNJ23} provides a computationally cheaper alternative to LiRA-style audits by requiring only a single model training run instead of $2T$ shadow models ($T$ being the number of repetitions). This audit measures individual privacy by generating $N$ canaries and independently selecting $M$ of them at random for inclusion in the training dataset. After training, we compute a score (cross entropy loss) for each of the original $N$ canaries. We then make $k_{+}$ guesses about which canaries were included in the training and $k_{-}$ guesses about which were excluded, abstaining from guessing on the remaining canaries. The TPR and FPR from these guesses are used to compute an $\epslb$ via the $f$-DP framework. We evaluate our defense under the multi-canary individual privacy audit using 500 randomly mislabeled canaries in CIFAR-10. We perform a grid search over all pairs ($k_{+}$, $k_{-}$) and report the maximum $\epslb$ achieved (Table~\ref{tab:individual_group_audits}). With our defense, we filter out approximately $75\%$ of the canaries, substantially reducing the $\epslb$.

\subsection{Group privacy audit}\label{app:group_privacy_audit}

We extended LiRA to group privacy by using the maximum loss across all canaries as the audit score for distinguishing between models trained versus without the canary group. We evaluate our defense under this threat model using $k = 500$ mislabeled canaries on CIFAR-10 (Table~\ref{tab:individual_group_audits}). We evaluate privacy using $k$-group privacy $\epslb$; as discussed in Section~\ref{ssec:setup}, this value does not formally imply a $1$-group privacy $\epslb$ that is $k\times$ smaller, but this is a reasonable scaling to expect. We find that even in the group privacy setting, our defense substantially reduces $\epslb$.

\begin{table}[h]
    \centering
    \begin{tabular}{lcccc}
    \hline
    \textbf{Experiment} & \textbf{Train Acc (\%)} & \textbf{Test Acc (\%)} & \textbf{$\varepsilon_{\mathrm{lb}}$} \\
    \hline
    Mislabeled Single Run Audit, No Defense    & 75.62 & 70.81 & 0.20 \\
    Mislabeled Single Run Audit, Defense       & 74.73 & 70.42 & 0.10 \\
    \hline
    Mislabeled Group Privacy Audit, No Defense & 76.13 & 71.65 & 3.54 \\
    Mislabeled Group Privacy Audit, Defense    & 74.69 & 70.62 & 2.67 \\
    \hline
    \end{tabular}
    \caption{Empirical privacy loss under individual and group privacy audits.}
    \label{tab:individual_group_audits}
\end{table}

\section{Sample Signatures}\label{app:score}

This experiment tested the sensitivity of the local filtering defense to the choice of scoring function used to identify high-risk samples. 
Using these scoring functions, we can detect backdoor canaries that spectral methods \cite{TranLM18} may fail to identify. For each attack, the scoring function should identify and filter out canaries as early as possible in training. Filtering canaries too late is ineffective since they have already been memorized. Future work could address late-stage canary detection through privatized gradient ascent.

On CIFAR-10, we considered the following scoring functions.

\begin{enumerate}
    \item Gradient cosine similarity: how aligned sample gradient direction is with the overall parameter update direction from initialization.
    \item $L_2$ gradient norm: magnitude of their clipped per-sample gradient under the $L_2$ norm. This is computed using both clipped and unclipped gradients.
    \item $L_\infty$ gradient norm: magnitude of the clipped per-sample gradient under the $L_\infty$ norm. This is computed using both clipped and unclipped gradients.
    \item Gradient kurtosis unclipped: kurtosis of the gradient distribution, emphasizing heavy-tailed or outlier-like gradients.
    \item Directional uniqueness: combines gradient magnitude with a proxy for directional “uniqueness” based on how atypical a sample’s gradient direction is relative to a running history.
    \item Prediction entropy: entropy of the predicted probability distribution, emphasizing high-uncertainty predictions.
    \item Prediction margin: inverted gap between the predicted probability of the true class and the strongest competing class.
    \item Random projection variability: projects gradients onto random directions and measures variability in those projections, capturing atypical gradient structure.
\end{enumerate}

All other defense parameters were kept fixed (filtering bandwidth, gradient ascent step, etc.). Across the scoring function choices, the defense consistently detected and removed the canary, with low empirical privacy loss and utility metrics similarly high across configurations (Table~\ref{tab:scoring_function_ablation}). This allows practitioners to select or tune the scoring function that best matches their constraints and threat model without changing the defense behavior. 

\begin{table}[t]
    \centering
    \small
    \begin{tabular}{lccc}
        \toprule
        Scoring Function & Train Acc (\%) & Test Acc (\%) & $\varepsilon_{\mathrm{lb}}$ \\
        \midrule
        Cosine similarity ($\theta_0$)         & 75.85 & 71.05 & 1.11 \\
        Grad norm clipped ($L_{2}$)                    & 75.14 & 70.47 & 1.37 \\
        Grad norm clipped ($L_{\infty}$)               & 75.28 & 70.54 & 0 \\
        Grad norm unclipped ($L_{2}$)          & 75.25 & 70.73 & 1.19 \\
        Grad norm unclipped ($L_{\infty}$)     & 75.20 & 70.58 & 0 \\
        Gradient kurtosis unclipped            & 75.49 & 70.62 & 0 \\
        Directional uniqueness       & 75.26 & 70.75 & 1.20 \\
        Pred.\ entropy               & 75.11 & 70.71 & 0.25 \\
        Pred.\ margin                & 74.97 & 70.49 & 1.11 \\
        Rand.\ proj.\ variability     & 75.44 & 70.83 & 1.18 \\
        \bottomrule
    \end{tabular}
    \caption{Scoring function ablation on CIFAR-10 (CNN).}
    \label{tab:scoring_function_ablation}
\end{table}

\section{Poisson vs.\ Shuffled Subsampling}\label{app:poisson}

As discussed in Section~\ref{ssec:variant}, in practice, many large-scale implementations of DP-SGD favor shuffled mini-batching over Poisson sampling for efficiency reasons; we therefore verify that our defense remains effective under this more practical sampling scheme. Concretely, we compared the results of estimating $\epslb$ after running our defense versus no defense on a blank canary, for the MNIST dataset, with both Poisson sampling and shuffled subsampling (Figure~\ref{fig:new_poisson_vs_shuffled_epsilon}). The results show that the gap in the estimated $\epslb$ persists regardless of sampling strategy.

\begin{figure}
    \centering
    \includegraphics[width=0.75\linewidth]{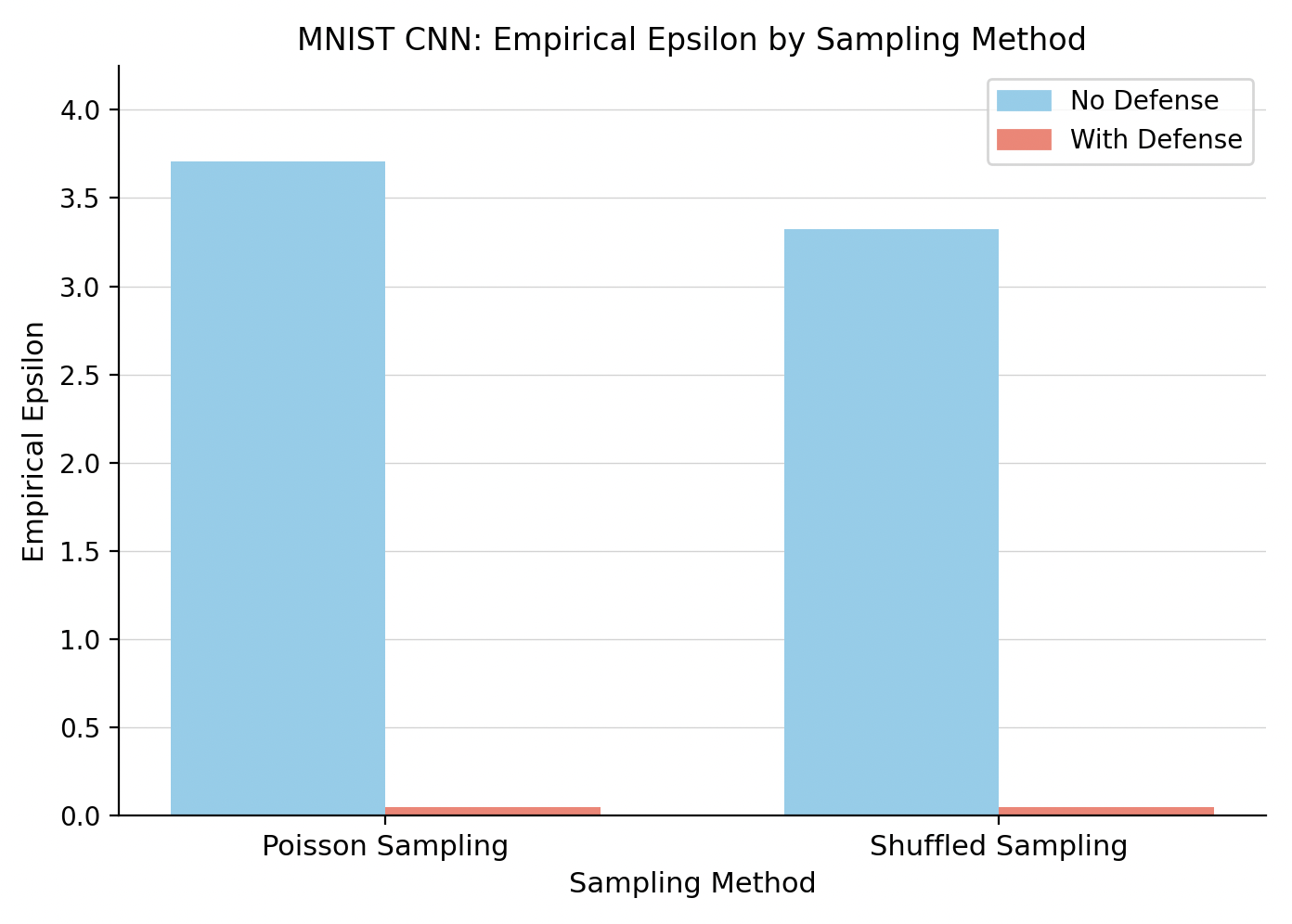}
    \caption{$\epslb$ with/without defense on MNIST (CNN) across Poisson and shuffled sampling methods.}
    \label{fig:new_poisson_vs_shuffled_epsilon}
\end{figure}

\section{Interpolating Canaries}\label{app:scaling}

We linearly interpolated a blank canary with a mislabeled canary with varying degrees of interpolation to test how the effectiveness of the defense scales with canary strength. We constructed canaries as ($\alpha \cdot \text{mislabeled sample}$). When $\alpha$ is 0 the canary is a pure blank image; when $\alpha$ is 1, it is a fully mislabeled image. We evaluated the setup on both MNIST and CIFAR-10 (Table~\ref{tab:alpha_ablation}).

We found that varying $\alpha$ did not produce a clear change in privacy loss. This suggests that privacy loss is not necessarily proportional to the ``strength'' of the signal, and the defense remained effective across the different canary constructions.

\begin{table}[h]
    \centering
    \small
    \begin{tabular}{llccccc}
        \toprule
        Dataset & Defense Setting & $\alpha=0$ & $\alpha=0.25$ & $\alpha=0.5$ & $\alpha=0.75$ & $\alpha=1.0$ \\
        \midrule
        CIFAR-10 & No Defense & 0.46 & 0.42 & 0.30 & 0.35 & 0.73 \\
        CIFAR-10 & Defense    & 0    & 0    & 0    & 0.11 & 0    \\
        \midrule
        MNIST    & No Defense & 2.45 & 2.45 & 2.50 & 2.45 & 2.50 \\
        MNIST    & Defense    & 0.59 & 0.58 & 0.62 & 0.71 & 0.69 \\
        \bottomrule
    \end{tabular}
    \caption{Empirical $\varepsilon_{\mathrm{lb}}$ across interpolated canary strengths on CIFAR-10 and MNIST (CNN).}
    \label{tab:alpha_ablation}
\end{table}

\section{The Privacy Onion Effect: Do We Expose Anyone Else?}\label{app:non_canary_audit}

Prior work \cite{CarliniJZPTT22} finds that removing the most privacy vulnerable samples from a dataset may increase the vulnerability of retained samples rather than yielding an aggregate privacy improvement. This ``privacy onion'' effect suggests that empirical privacy can offer a false sense of security and practitioners ought to employ algorithms with formal privacy guarantees rather than optimizing on empirical privacy. This phenomenon is algorithm-specific, so we analyze its presence for our specific filtering-based defense in this section, by auditing \emph{non-canary samples}.

To empirically test our defense, we trained 1000 shadow models (trained with DP-SGD $\epsilon=10$; 500 with the blank canary, 500 without) and recorded per-sample loss under each final model for every sample in the MNIST training dataset. For each non-canary sample, we compute the $\epslb$ post-defense and compare this to the maximum $\epslb$ pre-defense (corresponding to the canary's $\epslb$).
Our results are striking: the (adaptively-chosen) strongest non-canary audit produces a much weaker $\epslb$ than a fixed canary audit. This is true whether or not the formal holdout (false discovery correction) method in Appendix~\ref{app:lbmethod} is used, on both the canary audit and the non-canary audit; in the holdout case, the auditable $\epslb$ of the best non-canary audit drops to $0$.

Our audited $\epslb$ comparing the canary with the strongest non-canary sample are displayed in Figures~\ref{fig:ultra_new_do_we_expose_1}~and~\ref{fig:ultra_new_do_we_expose_2}. The experiments in these figures respectively apply the multiple hypothesis heuristic from Appendix~\ref{app:lbmethod}, and use a holdout set to estimate $\epslb$, for all audits.

\begin{figure}
    \centering
    \includegraphics[width=0.75\linewidth]{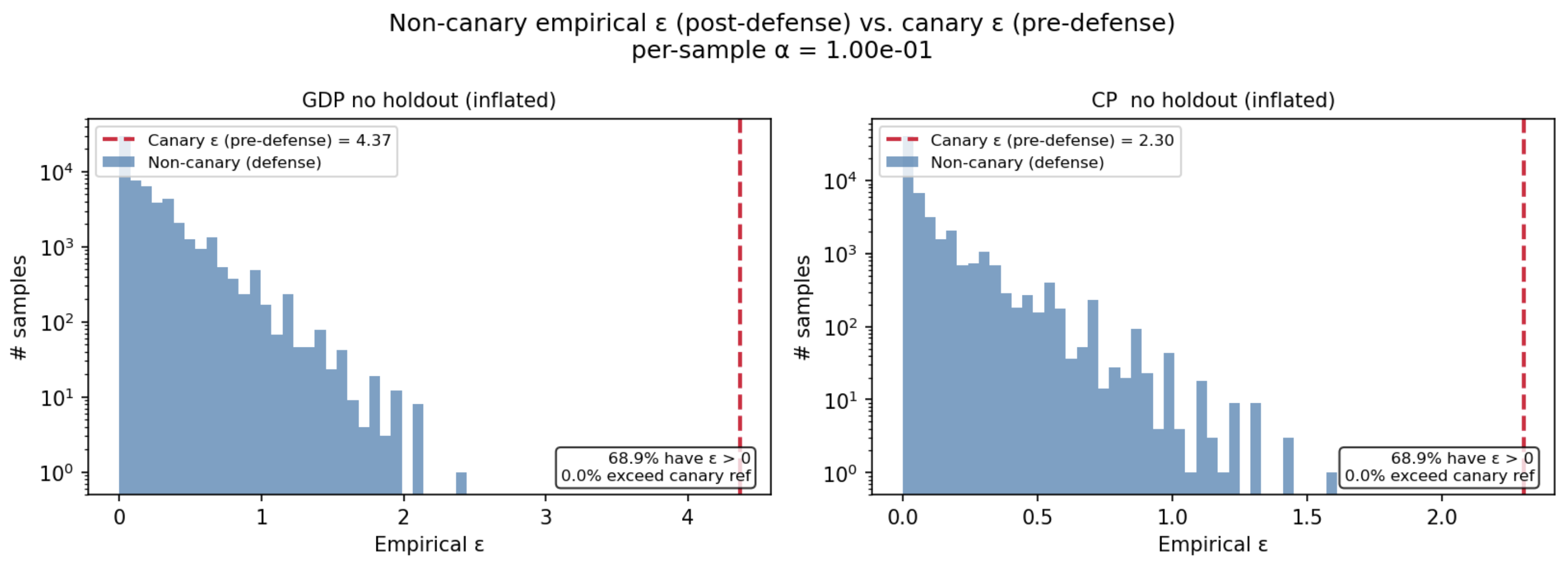}
    \caption{Distribution of empirical $\epslb$ for non-canary training samples 
(post-defense) vs.\ canary $\epslb$ (pre-defense) across two auditing methods. 
In both cases, 0.0\% of non-canary samples exceed the canary reference.}
    \label{fig:ultra_new_do_we_expose_1}
\end{figure}

\begin{figure}
    \centering
    \includegraphics[width=0.75\linewidth]{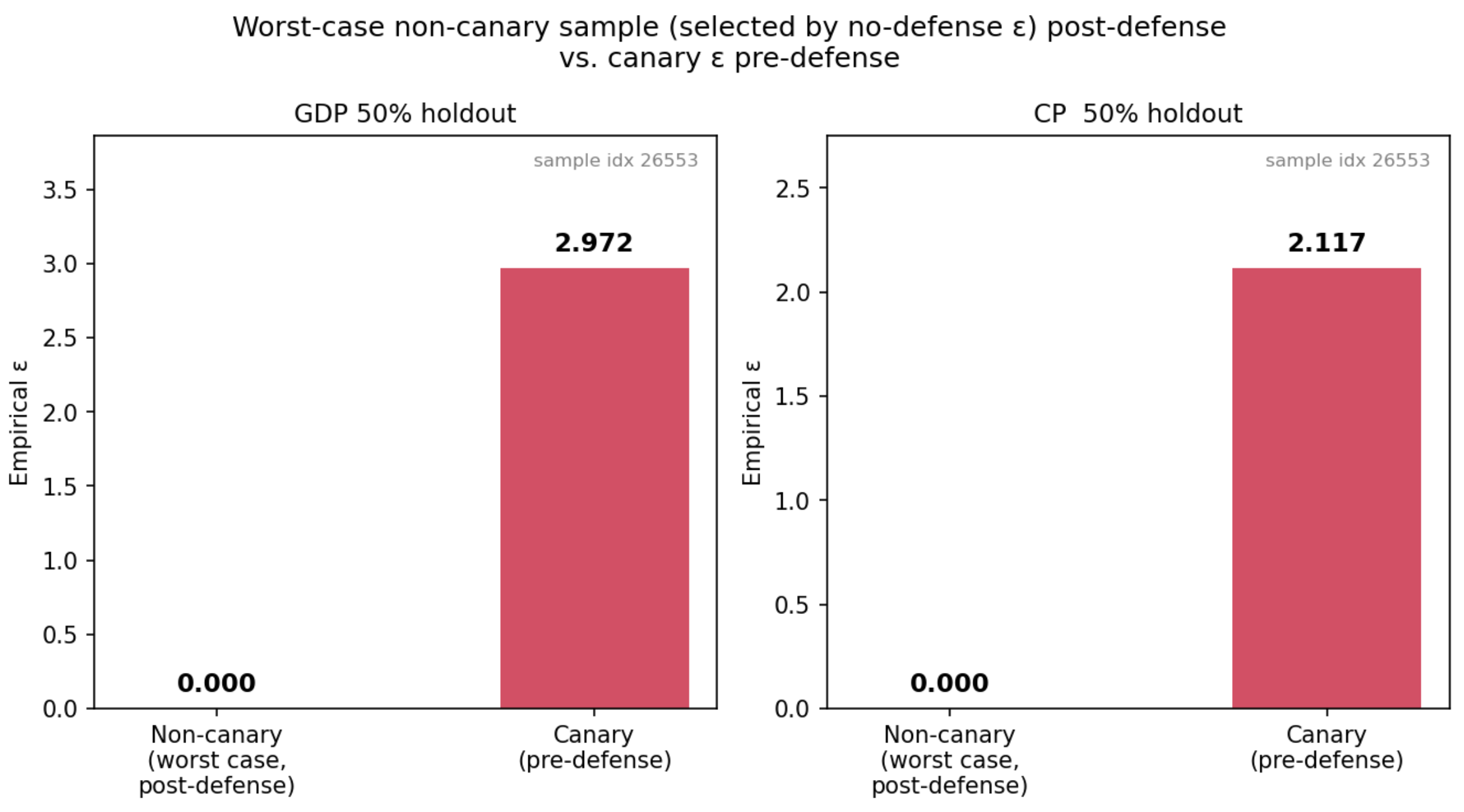}
    \caption{Empirical $\epslb$ of the worst-case non-canary sample post-defense 
vs.\ canary $\epslb$ pre-defense, under GDP 50\% holdout and CP 50\% holdout. 
The worst-case non-canary sample is selected as the sample achieving the highest empirical 
$\epslb$ under no defense; its empirical $\epslb$ 
drops to 0.000 post-defense under both auditing methods.}
    \label{fig:ultra_new_do_we_expose_2}
\end{figure}

\section{Increasing the Trial Count}\label{app:trial_count}

Our primary experiments used 400 shadow models (200 with the canary included, 200 without) to balance computational constraints. However, the Clopper-Pearson confidence intervals used to compute empirical privacy bounds are looser with fewer trials. We conducted a scaled-up experiment on MNIST with a blank canary using 1000 shadow models (500 per condition) instead of the standard 400. We compared the $\epslb$ and model utility between defense and non-defense settings under the higher sample regime (Table~\ref{tab:1000_reps}). The $\epslb$ gap with and without the defense remained consistent with the 400 trial experiments.

\begin{table}[h]
    \centering
    \small
    \begin{tabular}{lcccc}
        \toprule
        Defense Setting & Train Acc (\%) & Test Acc (\%) & $\epslb$ (400 Trials) & $\epslb$ (1000 Trials) \\
        \midrule
        No Defense & 98.80 & 98.64 & 3.71 & 4.01 \\
        Defense    & 97.60 & 97.89 & 0 & 0.09 \\
        \bottomrule
    \end{tabular}
    \caption{Empirical privacy loss on MNIST (CNN) with 1000 shadow models.}
    \label{tab:1000_reps}
\end{table}

\section{Additional Hyperparameter Tuning}\label{app:params}

\subsection{Global Filtering Ablation}\label{app:global_filter}

We conducted experiments on MNIST using a blank input space canary to evaluate how global filtering (selecting the top-$k$ samples across the entire dataset) compares with local filtering (selecting the top-$k$ samples within each class). Our experiments bias towards local filtering to align with prior work \cite{TranLM18}, which found that backdoored data poisons exhibit a stronger signature when compared relative to other samples in the same class rather than the entire dataset. We tested three global filtering bandwidths: $k \in \{10, 25, 50\}$ samples filtered per epoch (Table~\ref{tab:global_filter_ablation}). For comparison, our default per-class filtering with $k=5$ filters, for 50 samples per epoch in total (5 samples per class across 10 classes). 

Global filtering achieved similar empirical privacy and test accuracy compared to per-class filtering across all bandwidth configurations. These results suggest that practitioners can choose between filtering strategies based on implementation convenience or specific dataset characteristics rather than privacy-utility tradeoffs alone. For instance, a model trainer may prefer global filtering to limit the total number of discarded samples, or local filtering when class imbalance is a concern. Future work could explore adaptive scheduling approaches that monitor training dynamics and switch between global and local filtering on a per-epoch basis.

\begin{table}[t]
    \centering
    \small
    \begin{tabular}{lccc}
        \toprule
        Global Filtering $k$ Value & Train Acc (\%) & Test Acc (\%) & $\epslb$ \\
        \midrule
        10 & 0.984 & 0.984 & 0 \\
        25 & 0.980 & 0.981 & 0.015 \\
        50 & 0.975 & 0.977 & 0.090 \\
        \bottomrule
    \end{tabular}
    \caption{Global filtering ablation on MNIST (CNN).}
    \label{tab:global_filter_ablation}
\end{table}

\subsection{Varying Local Filtering Defense Bandwidth Ablation}\label{app:bandwidth_ablation}

To empirically evaluate how the bandwidth parameter $k$ (the number of samples filtered per class per epoch) affects the defense efficacy, we conducted experiments on CIFAR-10 using a blank canary with gradient-norm based local filtering. We tested bandwidth values $k \in \{1, 2, 3, 4, 5\}$ (Table~\ref{tab:local_filter_ablation}). The defense reduced empirical privacy loss across all bandwidth configurations, though the degree of reduction varied with $k$. These results suggest that the choice of k does have some effect on empirical privacy, and practitioners should consider tuning it alongside computational and utility constraints.

\begin{table}[t]
    \centering
    \small
    \begin{tabular}{lccc}
        \toprule
        Local Filtering $k$ Value & Train Acc (\%) & Test Acc (\%) & $\varepsilon_{\mathrm{lb}}$ \\
        \midrule
        1 & 76.01 & 71.07 & 0.44 \\
        2 & 75.92 & 71.11 & 0.52 \\
        3 & 75.52 & 70.62 & 0.43 \\
        4 & 75.28 & 70.60 & 0.12 \\
        5 & 75.27 & 70.77 & 0.36 \\
        \bottomrule
    \end{tabular}
    \caption{Local filtering bandwidth ablation on CIFAR-10 (CNN).}
    \label{tab:local_filter_ablation}
\end{table}

\subsection{Varying Filter Frequency Ablation}\label{app:filter_frequency_ablation}

In this experiment, we varied the filtering frequency during training, applying the defense every $k$ epochs for $k \in \{1, 5, 10, 20\}$. Experiments used the CIFAR-10/CNN architecture with a blank input space canary. 
We observed that filtering frequency has a meaningful impact on empirical privacy outcomes, with more frequent filtering achieving the lowest $\epsilon$lb of 0.43, while less frequent filtering led to substantially higher loss, peaking at 1.39 when filtering every 10 epochs (Table~\ref{tab:filter_frequency_ablation}). These results suggest that practitioners should prefer more frequent filtering when empirical privacy is a priority, though utility metrics remained stable across all configurations.
Filtering frequency may become a more critical hyperparameter for complex canaries or auditing threat models, where model trainers seek to limit the number of discarded samples during training. Future work could explore adaptive scheduling approaches that monitor training dynamics and apply heuristics to determine when filtering is necessary on a per-epoch basis.

\begin{table}[t]
    \centering
    \small
    \begin{tabular}{lccc}
        \toprule
        Epochs Before Filtering & Train Acc (\%) & Test Acc (\%) & $\epslb$ \\
        \midrule
        1  & 75.21 & 70.77 & 0.43 \\
        5  & 75.74 & 70.73 & 1.21 \\
        10 & 76.19 & 71.13 & 1.39 \\
        20 & 76.06 & 70.89 & 1.19 \\
        \bottomrule
    \end{tabular}
    \caption{Effect of filtering frequency on CIFAR-10 (CNN).}
    \label{tab:filter_frequency_ablation}
\end{table}

\section{Full Empirical Privacy Results Across Audit Methods}\label{app:other_eps}

In all experiments throughout the paper, we report $\epslb$ using the GDP no holdout auditing method as our primary metric. For completeness, we report full results across all auditing configurations. For GDP, we vary the fraction of shadow models held out for threshold selection at $25\%$, $50\%$, and $75\%$. For CP, we include results with no holdout as well as the same three holdout levels. These additional metrics provide a more comprehensive picture of the empirical privacy landscape across all experiments, and allow the reader to assess the robustness of our conclusions to the choice of auditing method.

% ============================================================
% Tradeoff Curves
% ============================================================
\begin{table}[t]
    \centering
    \small
    \resizebox{\textwidth}{!}{
    \begin{tabular}{llcccccccc}
        \toprule
        Dataset (Model) & $\varepsilon_{\mathrm{ub}}$ & Setting & GDP 25\% & GDP 50\% & GDP 75\% & CP no holdout & CP 25\% & CP 50\% & CP 75\% \\
        \midrule
        MNIST (CNN)             & 2  & No Defense & 0.000 & 0.000 & 0.000 & 0.000  & 0.000 & 0.000 & 0.000 \\
                                &    & Defense    & 0.000 & 0.000 & 0.000 & 0.015  & 0.000 & 0.000 & 0.000 \\
                                & 4  & No Defense & 0.000 & 0.000 & 0.011 & 0.398  & 0.000 & 0.000 & 0.000 \\
                                &    & Defense    & 0.000 & 0.000 & 0.000 & 0.000  & 0.000 & 0.000 & 0.000 \\
                                & 6  & No Defense & 0.575 & 0.000 & 0.688 & 0.600  & 0.000 & 0.000 & 0.138 \\
                                &    & Defense    & 0.000 & 0.000 & 0.000 & 0.000  & 0.000 & 0.000 & 0.000 \\
                                & 8  & No Defense & 1.054 & 0.424 & 1.851 & 0.840  & 0.000 & 0.000 & 0.544 \\
                                &    & Defense    & 0.000 & 0.000 & 0.000 & 0.000  & 0.000 & 0.000 & 0.000 \\
                                & 10 & No Defense & 0.773 & 1.580 & 2.708 & 1.066  & 0.071 & 0.184 & 0.669 \\
                                &    & Defense    & 0.000 & 0.000 & 0.000 & 0.000  & 0.000 & 0.000 & 0.000 \\
        \midrule
        Purchase (MLP)          & 2  & No Defense & 0.569 & 0.183 & 0.000 & 0.164  & 0.028 & 0.000 & 0.000 \\
                                &    & Defense    & 0.000 & 0.000 & 0.000 & 0.000  & 0.000 & 0.000 & 0.000 \\
                                & 4  & No Defense & 0.000 & 0.984 & 0.493 & 0.739  & 0.002 & 0.155 & 0.000 \\
                                &    & Defense    & 0.000 & 0.000 & 0.000 & 0.045  & 0.000 & 0.000 & 0.000 \\
                                & 6  & No Defense & 0.008 & 1.330 & 0.000 & 0.465  & 0.000 & 0.000 & 0.000 \\
                                &    & Defense    & 0.000 & 0.000 & 0.000 & 0.025  & 0.000 & 0.000 & 0.000 \\
                                & 8  & No Defense & 0.000 & 0.000 & 0.496 & 0.961  & 0.000 & 0.000 & 0.000 \\
                                &    & Defense    & 0.000 & 0.197 & 0.000 & 0.132  & 0.000 & 0.000 & 0.000 \\
                                & 10 & No Defense & 0.000 & 2.047 & 1.205 & 0.753  & 0.000 & 0.397 & 0.314 \\
                                &    & Defense    & 1.281 & 1.278 & 0.902 & 0.503  & 0.205 & 0.263 & 0.000 \\
        \midrule
        CIFAR-10 (CNN)          & 2  & No Defense & 0.000 & 0.000 & 0.000 & 0.017  & 0.000 & 0.000 & 0.000 \\
                                &    & Defense    & 0.000 & 0.000 & 0.000 & 0.000  & 0.000 & 0.000 & 0.000 \\
                                & 4  & No Defense & 0.000 & 0.000 & 0.000 & 0.000  & 0.000 & 0.000 & 0.000 \\
                                &    & Defense    & 0.000 & 0.000 & 0.000 & 0.000  & 0.000 & 0.000 & 0.000 \\
                                & 6  & No Defense & 0.000 & 0.000 & 0.233 & 0.477  & 0.000 & 0.000 & 0.000 \\
                                &    & Defense    & 0.000 & 0.000 & 0.055 & 0.232  & 0.000 & 0.000 & 0.000 \\
                                & 8  & No Defense & 0.000 & 0.523 & 0.931 & 0.570  & 0.000 & 0.000 & 0.194 \\
                                &    & Defense    & 0.000 & 0.000 & 0.074 & 0.243  & 0.000 & 0.000 & 0.000 \\
                                & 10 & No Defense & 0.000 & 0.000 & 0.000 & 0.570  & 0.000 & 0.000 & 0.000 \\
                                &    & Defense    & 0.000 & 0.000 & 0.000 & 0.000  & 0.000 & 0.000 & 0.000 \\
        \midrule
        CIFAR-10 (WideResNet-16)& 2  & No Defense & 0.000 & 0.000 & 0.000 & 0.000  & 0.000 & 0.000 & 0.000 \\
                                &    & Defense    & 0.000 & 0.000 & 0.000 & 0.000  & 0.000 & 0.000 & 0.000 \\
                                & 4  & No Defense & 0.000 & 0.000 & 0.000 & 0.145  & 0.000 & 0.000 & 0.000 \\
                                &    & Defense    & 0.000 & 0.000 & 0.000 & 0.000  & 0.000 & 0.000 & 0.000 \\
                                & 6  & No Defense & 0.000 & 0.014 & 0.479 & 0.407  & 0.000 & 0.000 & 0.071 \\
                                &    & Defense    & 0.000 & 0.000 & 0.000 & 0.000  & 0.000 & 0.000 & 0.000 \\
                                & 8  & No Defense & 0.150 & 0.277 & 0.625 & 0.619  & 0.000 & 0.000 & 0.084 \\
                                &    & Defense    & 0.000 & 0.000 & 0.000 & 0.000  & 0.000 & 0.000 & 0.000 \\
                                & 10 & No Defense & 0.516 & 0.613 & 0.767 & 0.671  & 0.000 & 0.078 & 0.119 \\
                                &    & Defense    & 0.000 & 0.000 & 0.000 & 0.152  & 0.000 & 0.000 & 0.000 \\
        \midrule
        CIFAR-10 (CNN+AM4) & 2  & No Defense & 0.000 & 0.000 & 0.000 & 0.000  & 0.000 & 0.000 & 0.000 \\
                                &    & Defense    & 0.000 & 0.000 & 0.000 & 0.006  & 0.000 & 0.000 & 0.000 \\
                                & 4  & No Defense & 0.000 & 0.000 & 0.000 & 0.000  & 0.000 & 0.000 & 0.000 \\
                                &    & Defense    & 0.000 & 0.000 & 0.000 & 0.000  & 0.000 & 0.000 & 0.000 \\
                                & 6  & No Defense & 0.000 & 0.000 & 0.000 & 0.026  & 0.000 & 0.000 & 0.000 \\
                                &    & Defense    & 0.000 & 0.000 & 0.000 & 0.000  & 0.000 & 0.000 & 0.000 \\
                                & 8  & No Defense & 0.000 & 0.034 & 0.000 & 0.600  & 0.000 & 0.000 & 0.000 \\
                                &    & Defense    & 0.000 & 0.000 & 0.000 & 0.000  & 0.000 & 0.000 & 0.000 \\
                                & 10 & No Defense & 0.000 & 0.000 & 0.000 & 0.000  & 0.000 & 0.000 & 0.000 \\
                                &    & Defense    & 0.000 & 0.000 & 0.000 & 0.000  & 0.000 & 0.000 & 0.000 \\
        \bottomrule
    \end{tabular}
    }
    \caption{Full empirical privacy results across privacy budgets $\varepsilon_{\mathrm{ub}} \in \{2,4,6,8,10\}$, datasets, and model architectures. See Section~\ref{ssec:input}.}
    \label{tab:tradeoff_curves_full}
\end{table}

% ============================================================
% Different Scoring Functions (CIFAR-10)
% ============================================================
\begin{table}[t]
    \centering
    \small
    \resizebox{\textwidth}{!}{
    \begin{tabular}{llccccccc}
        \toprule
        Scoring Function & Setting & GDP 25\% & GDP 50\% & GDP 75\% & CP no holdout & CP 25\% & CP 50\% & CP 75\% \\
        \midrule
        Cosine similarity ($\theta_0$)    & Defense & 0.000 & 0.000 & 0.063 & 0.357 & 0.000 & 0.000 & 0.000 \\
        Grad norm clipped ($L_2$)                 & Defense & 0.000 & 0.394 & 0.000 & 0.477 & 0.000 & 0.000 & 0.000 \\
        Grad norm clipped ($L_\infty$)            & Defense & 0.000 & 0.000 & 0.000 & 0.000 & 0.000 & 0.000 & 0.000 \\
        Grad norm unclipped ($L_2$)       & Defense & 0.000 & 0.000 & 0.023 & 0.391 & 0.000 & 0.000 & 0.000 \\
        Grad norm unclipped ($L_\infty$)  & Defense & 0.000 & 0.000 & 0.000 & 0.000 & 0.000 & 0.000 & 0.000 \\
        Gradient kurtosis unclipped       & Defense & 0.000 & 0.000 & 0.000 & 0.000 & 0.000 & 0.000 & 0.000 \\
        Directional uniqueness & Defense & 0.000 & 0.394 & 0.000 & 0.415 & 0.000 & 0.000 & 0.000 \\
        Pred.\ entropy          & Defense & 0.000 & 0.000 & 0.000 & 0.000 & 0.000 & 0.000 & 0.000 \\
        Pred.\ margin           & Defense & 0.000 & 0.137 & 0.000 & 0.378 & 0.000 & 0.000 & 0.000 \\
        Rand.\ proj.\ variability & Defense & 0.000 & 0.101 & 0.210 & 0.356 & 0.000 & 0.000 & 0.000 \\
        \bottomrule
    \end{tabular}
    }
    \caption{Scoring function ablation on CIFAR-10 (CNN): full empirical privacy results across auditing methods. See Section~\ref{app:score}.}
    \label{tab:scoring_fn_full}
\end{table}
 
% ============================================================
% 1000 Reps
% ============================================================
\begin{table}[t]
    \centering
    \small
    \resizebox{\textwidth}{!}{
    \begin{tabular}{llccccccc}
        \toprule
        Dataset & Setting & GDP 25\% & GDP 50\% & GDP 75\% & CP no holdout & CP 25\% & CP 50\% & CP 75\% \\
        \midrule
        MNIST (CNN) & No Defense & 1.856 & 2.313 & 2.867 & 1.437 & 0.133 & 0.891 & 1.157 \\
                    & Defense    & 0.000 & 0.000 & 0.000 & 0.000 & 0.000 & 0.000 & 0.000 \\
        \bottomrule
    \end{tabular}
    }
    \caption{Empirical privacy loss on MNIST (CNN) with 1000 shadow models, full results across auditing methods. See Section~\ref{app:trial_count}.}
    \label{tab:1000reps_full}
\end{table}
 
% ============================================================
% Shuffled Sampling vs. Poisson Sampling
% ============================================================
\begin{table}[t]
    \centering
    \small
    \resizebox{\textwidth}{!}{
    \begin{tabular}{llccccccc}
        \toprule
        Sampling Method & Setting & GDP 25\% & GDP 50\% & GDP 75\% & CP no holdout & CP 25\% & CP 50\% & CP 75\% \\
        \midrule
        Shuffled & No Defense & 0.000 & 0.617 & 1.335 & 1.372 & 0.000 & 0.025 & 0.368 \\
                 & Defense    & 0.000 & 0.000 & 0.000 & 0.000 & 0.000 & 0.000 & 0.000 \\
        \midrule
        Poisson  & No Defense & 0.773 & 1.580 & 2.708 & 1.066 & 0.071 & 0.184 & 0.669 \\
                 & Defense    & 0.000 & 0.000 & 0.000 & 0.000 & 0.000 & 0.000 & 0.000 \\
        \bottomrule
    \end{tabular}
    }
    \caption{$\varepsilon_{\mathrm{lb}}$ with/without defense on MNIST (CNN) across Poisson and shuffled sampling methods, full results across auditing methods. See Section~\ref{app:poisson}.}
    \label{tab:sampling_full}
\end{table}
 
% ============================================================
% Mislabeled O(1) Audit / Mislabeled Multi-Canary Attack
% ============================================================
\begin{table}[t]
    \centering
    \small
    \resizebox{\textwidth}{!}{
    \begin{tabular}{lllccccccc}
        \toprule
        Audit Type & Dataset & Setting & GDP 25\% & GDP 50\% & GDP 75\% & CP no holdout & CP 25\% & CP 50\% & CP 75\% \\
        \midrule
        Group privacy (multi-canary)    & CIFAR-10 & No Defense & 0.453 & 1.180 & 2.563 & 1.376 & 0.000 & 0.449 & 0.701 \\
                                        &          & Defense    & 0.000 & 1.425 & 1.415 & 1.146 & 0.062 & 0.394 & 0.409 \\
        \bottomrule
    \end{tabular}
    }
    \caption{Full empirical privacy results under group privacy (multi-canary) mislabeled audits. See Section~\ref{app:group_privacy_audit}.}
    \label{tab:mislabeled_audits_full}
\end{table}
 
% ============================================================
% ============================================================
\begin{table}[t]
    \centering
    \small
    \resizebox{\textwidth}{!}{
    \begin{tabular}{llccccccc}
        \toprule
        Canary Type & Setting & GDP 25\% & GDP 50\% & GDP 75\% & CP no holdout & CP 25\% & CP 50\% & CP 75\% \\
        \midrule
        Majority canary & No Defense & 0.000 & 0.000 & 0.000 & 0.000  & 0.000 & 0.000 & 0.000 \\
        \midrule
        Minority canary & No Defense & 0.000 & 0.000 & 0.000 & 0.296  & 0.000 & 0.000 & 0.000 \\
        \bottomrule
    \end{tabular}
    }
    \caption{Full empirical privacy results for the fairness audit on Colored MNIST, under the no-defense setting. See Section~\ref{app:disparate_impact}.}
    \label{tab:fairness_full}
\end{table}
 
% ============================================================
% ClipBKD (MNIST and CIFAR-10)
% ============================================================
\begin{table}[t]
    \centering
    \small
    \resizebox{\textwidth}{!}{
    \begin{tabular}{llccccccc}
        \toprule
        Dataset & Setting & GDP 25\% & GDP 50\% & GDP 75\% & CP no holdout & CP 25\% & CP 50\% & CP 75\% \\
        \midrule
        MNIST (CNN)    & No Defense & 0.153 & 1.732 & 2.663 & 1.052 & 0.199 & 0.000 & 0.152 \\
                       & Defense    & 0.000 & 0.000 & 0.000 & 0.000 & 0.000 & 0.000 & 0.000 \\
        \midrule
        CIFAR-10 (CNN) & No Defense & 2.845 & 1.519 & 1.809 & 0.767 & 0.000 & 0.253 & 0.491 \\
                       & Defense    & 0.000 & 0.000 & 0.000 & 0.000 & 0.000 & 0.000 & 0.000 \\
        \bottomrule
    \end{tabular}
    }
    \caption{Full empirical privacy results under ClipBKD attacks on MNIST and CIFAR-10 (CNN). See Section~\ref{ssec:input}.}
    \label{tab:clipbkd_full}
\end{table}
 
% ============================================================
% FGSM (MNIST and CIFAR-10)
% ============================================================
\begin{table}[t]
    \centering
    \small
    \resizebox{\textwidth}{!}{
    \begin{tabular}{llccccccc}
        \toprule
        Dataset & Setting & GDP 25\% & GDP 50\% & GDP 75\% & CP no holdout & CP 25\% & CP 50\% & CP 75\% \\
        \midrule
        MNIST (CNN)    & No Defense & 0.000 & 1.452 & 2.404 & 1.306 & 0.000 & 0.340 & 0.543 \\
                       & Defense    & 0.000 & 0.000 & 0.000 & 0.000 & 0.000 & 0.000 & 0.000 \\
        \midrule
        CIFAR-10 (CNN) & No Defense & 0.018 & 0.269 & 0.000 & 0.362 & 0.000 & 0.000 & 0.000 \\
                       & Defense    & 0.000 & 0.000 & 0.000 & 0.000 & 0.000 & 0.000 & 0.000 \\
        \bottomrule
    \end{tabular}
    }
    \caption{Full empirical privacy results under FGSM attacks on MNIST and CIFAR-10 (CNN). See Section~\ref{ssec:input}.}
    \label{tab:fgsm_full}
\end{table}
 
% ============================================================
% Mislabeled MNIST and CIFAR-10
% ============================================================
\begin{table}[t]
    \centering
    \small
    \resizebox{\textwidth}{!}{
    \begin{tabular}{llccccccc}
        \toprule
        Dataset & Setting & GDP 25\% & GDP 50\% & GDP 75\% & CP no holdout & CP 25\% & CP 50\% & CP 75\% \\
        \midrule
        MNIST (CNN)    & No Defense & 0.701 & 1.576 & 1.318 & 0.806 & 0.065 & 0.152 & 0.223 \\
                       & Defense    & 0.829 & 0.000 & 0.000 & 0.124 & 0.000 & 0.000 & 0.000 \\
        \midrule
        CIFAR-10 (CNN) & No Defense & 0.008 & 0.000 & 0.000 & 0.068 & 0.000 & 0.000 & 0.000 \\
                       & Defense    & 0.000 & 0.000 & 0.000 & 0.000 & 0.000 & 0.000 & 0.000 \\
        \bottomrule
    \end{tabular}
    }
    \caption{Full empirical privacy results under mislabeled input space attacks on MNIST and CIFAR-10 (CNN). See Section~\ref{ssec:input}.}
    \label{tab:mislabeled_full}
\end{table}
 
% ============================================================
% Gradient Canceling Attack
% ============================================================
\begin{table}[t]
    \centering
    \small
    \begin{tabular}{lccccccc}
        \toprule
        Setting & GDP 25\% & GDP 50\% & GDP 75\% & CP no holdout & CP 25\% & CP 50\% & CP 75\% \\
        \midrule
        No Defense & 1.081 & 2.476 & 2.410 & 1.760 & 0.150 & 0.872 & 0.861 \\
        Defense    & 19.564 & 24.378 & 27.130 & 4.185 & 2.785 & 3.493 & 3.903 \\
        \bottomrule
    \end{tabular}
    \caption{Full empirical privacy results under the gradient canceling attack on MNIST (CNN). See Section~\ref{ssec:gradient}.}
    \label{tab:grad_cancel_full}
\end{table}
 
% ============================================================
% Blank Alpha Ablations (CIFAR-10 and MNIST)
% ============================================================
\begin{table}[t]
    \centering
    \small
    \resizebox{\textwidth}{!}{
    \begin{tabular}{lllccccccc}
        \toprule
        Dataset & $\alpha$ & Setting & GDP 25\% & GDP 50\% & GDP 75\% & CP no holdout & CP 25\% & CP 50\% & CP 75\% \\
        \midrule
        CIFAR-10 & 0.00 & No Defense & 0.000 & 0.000 & 0.000 & 0.062 & 0.000 & 0.000 & 0.000 \\
                 &      & Defense    & 0.000 & 0.000 & 0.000 & 0.000 & 0.000 & 0.000 & 0.000 \\
                 & 0.25 & No Defense & 0.000 & 0.000 & 0.000 & 0.060 & 0.000 & 0.000 & 0.000 \\
                 &      & Defense    & 0.000 & 0.000 & 0.000 & 0.000 & 0.000 & 0.000 & 0.000 \\
                 & 0.50 & No Defense & 0.000 & 0.000 & 0.000 & 0.024 & 0.000 & 0.000 & 0.000 \\
                 &      & Defense    & 0.000 & 0.000 & 0.000 & 0.000 & 0.000 & 0.000 & 0.000 \\
                 & 0.75 & No Defense & 0.000 & 0.000 & 0.000 & 0.035 & 0.000 & 0.000 & 0.000 \\
                 &      & Defense    & 0.000 & 0.000 & 0.000 & 0.000 & 0.000 & 0.000 & 0.000 \\
                 & 1.00 & No Defense & 0.000 & 0.000 & 0.000 & 0.159 & 0.000 & 0.000 & 0.000 \\
                 &      & Defense    & 0.000 & 0.113 & 0.000 & 0.000 & 0.000 & 0.000 & 0.000 \\
        \midrule
        MNIST    & 0.00 & No Defense & 0.701 & 1.576 & 1.318 & 0.806 & 0.065 & 0.152 & 0.223 \\
                 &      & Defense    & 0.354 & 0.000 & 0.000 & 0.102 & 0.000 & 0.000 & 0.000 \\
                 & 0.25 & No Defense & 0.701 & 1.576 & 1.247 & 0.806 & 0.065 & 0.403 & 0.210 \\
                 &      & Defense    & 0.109 & 0.000 & 0.000 & 0.102 & 0.000 & 0.000 & 0.000 \\
                 & 0.50 & No Defense & 0.701 & 1.590 & 1.318 & 0.806 & 0.065 & 0.403 & 0.223 \\
                 &      & Defense    & 0.000 & 0.185 & 0.000 & 0.145 & 0.000 & 0.000 & 0.000 \\
                 & 0.75 & No Defense & 0.701 & 1.576 & 1.318 & 0.806 & 0.065 & 0.403 & 0.223 \\
                 &      & Defense    & 0.354 & 0.000 & 0.000 & 0.183 & 0.000 & 0.000 & 0.000 \\
                 & 1.00 & No Defense & 0.701 & 1.590 & 1.247 & 0.806 & 0.065 & 0.403 & 0.210 \\
                 &      & Defense    & 0.109 & 0.000 & 0.000 & 0.169 & 0.000 & 0.000 & 0.000 \\
        \bottomrule
    \end{tabular}
    }
    \caption{Full empirical privacy results across interpolated canary strengths ($\alpha$) on CIFAR-10 and MNIST (CNN). See Section~\ref{app:scaling}.}
    \label{tab:alpha_ablation_full}
\end{table}
 
% ============================================================
% Varying Filter Epochs Ablation
% ============================================================
\begin{table}[t]
    \centering
    \small
    \resizebox{\textwidth}{!}{
    \begin{tabular}{lccccccc}
        \toprule
        Epochs Between Filtering & GDP 25\% & GDP 50\% & GDP 75\% & CP no holdout & CP 25\% & CP 50\% & CP 75\% \\
        \midrule
        1  & 0.000 & 0.000 & 0.000 & 0.066 & 0.000 & 0.000 & 0.000 \\
        5  & 0.000 & 0.086 & 0.302 & 0.391 & 0.000 & 0.000 & 0.013 \\
        10 & 0.000 & 0.266 & 0.000 & 0.449 & 0.000 & 0.000 & 0.000 \\
        20 & 0.000 & 0.000 & 0.000 & 0.391 & 0.000 & 0.000 & 0.000 \\
        \bottomrule
    \end{tabular}
    }
    \caption{Full empirical privacy results across filter frequencies on CIFAR-10 (CNN). See Section~\ref{app:filter_frequency_ablation}.}
    \label{tab:filter_freq_full}
\end{table}
 
% ============================================================
% Varying Defense Bandwidth Ablation
% ============================================================
\begin{table}[t]
    \centering
    \small
    \resizebox{\textwidth}{!}{
    \begin{tabular}{lccccccc}
        \toprule
        Local Filtering $k$ Value & GDP 25\% & GDP 50\% & GDP 75\% & CP no holdout & CP 25\% & CP 50\% & CP 75\% \\
        \midrule
        1 & 0.000 & 0.000 & 0.000 & 0.055 & 0.000 & 0.000 & 0.000 \\
        2 & 0.000 & 0.000 & 0.000 & 0.102 & 0.000 & 0.000 & 0.000 \\
        3 & 0.000 & 0.000 & 0.000 & 0.065 & 0.000 & 0.000 & 0.000 \\
        4 & 0.000 & 0.000 & 0.000 & 0.000 & 0.000 & 0.000 & 0.000 \\
        5 & 0.000 & 0.000 & 0.000 & 0.035 & 0.000 & 0.000 & 0.000 \\
        \bottomrule
    \end{tabular}
    }
    \caption{Full empirical privacy results across local filtering bandwidth values on CIFAR-10 (CNN). See Section~\ref{app:bandwidth_ablation}.}
    \label{tab:bandwidth_full}
\end{table}
 
% ============================================================
% Defense-Aware Audit
% ============================================================
\begin{table}[t]
    \centering
    \small
    \resizebox{\textwidth}{!}{
    \begin{tabular}{llccccccc}
        \toprule
        Attack & Setting & GDP 25\% & GDP 50\% & GDP 75\% & CP no holdout & CP 25\% & CP 50\% & CP 75\% \\
        \midrule
        Mislabeled          & No Defense & 0.701 & 1.576 & 1.318 & 0.806 & 0.065 & 0.152 & 0.223 \\
                            & Defense    & 0.849 & 0.000 & 0.000 & 0.153 & 0.000 & 0.000 & 0.000 \\
        \midrule
        Defense-aware       & No Defense & 0.000 & 0.672 & 1.346 & 0.531 & 0.000 & 0.106 & 0.333 \\
                            & Defense    & 0.000 & 0.000 & 0.000 & 0.000 & 0.000 & 0.000 & 0.000 \\
        \bottomrule
    \end{tabular}
    }
    \caption{Full empirical privacy results for the defense-aware audit on MNIST (CNN), comparing standard mislabeled canaries against the fixed-point attack. See Section~\ref{ssec:defense_aware}.}
    \label{tab:defense_aware_full}
\end{table}
\iffalse
% ============================================================
% HAMP
% ============================================================

% ============================================================
% HAMP
% ============================================================
\begin{table}[t]
    \centering
    \small
    \begin{tabular}{llcccc}
        \toprule
        Method & Setting & 25\% & 50\% & 75\% & no holdout \\
        \midrule
            SGD (no defense)
            & (GDP) & 0.000 & 0.000 & 0.000 & --- \\
            & \textcolor{gray}{(CP)} & \textcolor{gray}{0.000} & \textcolor{gray}{0.000} & \textcolor{gray}{0.000} & \textcolor{gray}{0.008} \\
        \midrule
            SGD + HAMP
            & (GDP) & 0.000 & 0.000 & 0.000 & --- \\
            & \textcolor{gray}{(CP)} & \textcolor{gray}{0.000} & \textcolor{gray}{0.000} & \textcolor{gray}{0.000} & \textcolor{gray}{0.610} \\
        \midrule
            SGD + our defense
            & (GDP) & 0.000 & 0.000 & 0.000 & --- \\
            & \textcolor{gray}{(CP)} & \textcolor{gray}{0.000} & \textcolor{gray}{0.000} & \textcolor{gray}{0.000} & \textcolor{gray}{0.000} \\
        \bottomrule
    \end{tabular}
    \caption{Full empirical privacy results comparing our defense against HAMP on MNIST (CNN). See Section~\ref{sec:nonprivate}.}
    \label{tab:hamp_full}
\end{table}
 \fi
% ============================================================
% Global Filter Ablation
% ============================================================
\begin{table}[t]
    \centering
    \small
    \begin{tabular}{lccccccc}
        \toprule
        Global Filtering $k$ Value & GDP 25\% & GDP 50\% & GDP 75\% & CP no holdout & CP 25\% & CP 50\% & CP 75\% \\
        \midrule
        10 & 0.000 & 0.000 & 0.000 & 0.000 & 0.000 & 0.000 & 0.000 \\
        25 & 0.000 & 0.000 & 0.000 & 0.000 & 0.000 & 0.000 & 0.000 \\
        50 & 0.000 & 0.000 & 0.000 & 0.000 & 0.000 & 0.000 & 0.000 \\
        \bottomrule
    \end{tabular}
    \caption{Full empirical privacy results for global filtering ablation on MNIST (CNN). See Section~\ref{app:global_filter}.}
    \label{tab:global_filter_full}
\end{table}

% ============================================================
% Gradient Bandwidth
% ============================================================

\begin{table}[t]
    \centering
    \small
    \begin{tabular}{lccccccc}
        \toprule
        Setting & GDP 25\% & GDP 50\% & GDP 75\% & CP no holdout & CP 25\% & CP 50\% & CP 75\% \\
        \midrule
        No Defense & 17.625 & 24.378 & 27.130 & 4.185 & 2.749 & 3.493 & 3.903 \\
        Defense    & 17.625 & 24.378 & 25.311 & 4.185 & 2.749 & 3.493 & 3.892 \\
        \bottomrule
    \end{tabular}
    \caption{Full empirical privacy results under the gradient bandwidth attack on MNIST (CNN). See Section~\ref{ssec:gradient}.}
    \label{tab:grad_bandwidth_full}
\end{table}